\documentclass{article} %

\usepackage{etoolbox}
\usepackage{titletoc}
\newcommand{\arxiv}[1]{\iftoggle{iclr}{}{#1}}
\newcommand{\iclr}[1]{\iftoggle{iclr}{#1}{}}
\newtoggle{iclr}
\global\toggletrue{iclr}

\iclr{
\usepackage{iclr2026_conference,times}
\iclrfinalcopy
}

\arxiv{
\usepackage{iclr2026_conference,times}
\iclrfinalcopy
}

\usepackage{minitoc}
\usepackage{hyperref}
\usepackage{url}

\usepackage[utf8]{inputenc} %
\usepackage[T1]{fontenc}    %
\usepackage{url}            %
\usepackage{booktabs}       %
\usepackage{amsfonts}       %
\usepackage{nicefrac}       %
\usepackage{microtype}      %

\usepackage{tocloft}            %

\usepackage{enumitem}

\usepackage{breakcites}

\usepackage{mathrsfs}

\usepackage{algorithm}
\usepackage{verbatim}
\usepackage[noend]{algpseudocode}

\usepackage{multicol}

\usepackage{colortbl}

\usepackage{setspace}

\usepackage{transparent}

\usepackage{inconsolata}
\usepackage[scaled=.90]{helvet}
\usepackage{xspace}

\usepackage{pifont}

\arxiv{
\usepackage[letterpaper, left=1in, right=1in, top=1in, bottom=1in]{geometry}
\PassOptionsToPackage{hypertexnames=false}{hyperref}  %
\usepackage{parskip}
\usepackage[dvipsnames]{xcolor}
\usepackage[colorlinks=true, linkcolor=blue!70!black, citecolor=blue!70!black,urlcolor=black,breaklinks=true]{hyperref}
}

\iclr{
\PassOptionsToPackage{dvipsnames}{xcolor} 
}

\usepackage{microtype}
\usepackage{hhline}

\makeatletter
\newcommand{\neutralize}[1]{\expandafter\let\csname c@#1\endcsname\count@}
\makeatother

\usepackage{algorithm}

\arxiv{
\usepackage{natbib}
\bibliographystyle{plainnat}
\bibpunct{(}{)}{;}{a}{,}{,}
}

\usepackage{amsthm}
\usepackage{mathtools}
\usepackage{amsmath}
\usepackage{bbm}
\usepackage{amsfonts}
\usepackage{amssymb}

\usepackage{xpatch}

\usepackage{thmtools}
\usepackage{thm-restate}
\declaretheorem[name=Theorem]{theorem} %
\declaretheorem[name=Lemma]{lemma} %
\declaretheorem[name=Assumption]{assumption} %
\declaretheorem[name=Condition]{condition} %
\declaretheorem[name=Proposition]{proposition} %
\declaretheorem[name=Corollary]{corollary} %

\makeatletter
  \renewenvironment{proof}[1][Proof]%
  {%
   \par\noindent{\bfseries\upshape {#1.}\ }%
  }%
  {\qed\newline}
  \makeatother

\theoremstyle{definition}  %

\theoremstyle{plain}
\newtheorem{definition}{Definition}%

\xpatchcmd{\proof}{\itshape}{\normalfont\proofnameformat}{}{}
\newcommand{\proofnameformat}{\bfseries}

\usepackage[nameinlink,capitalize]{cleveref}

\newcommand{\pref}[1]{\cref{#1}}
\newcommand{\pfref}[1]{Proof of \pref{#1}}

\renewcommand{\eqref}[1]{\texorpdfstring{\hyperref[#1]{(\ref*{#1})}}{(\ref*{#1})}}

\crefformat{equation}{#2Eq. (#1)#3}
\Crefformat{equation}{#2Eq. (#1)#3}

\Crefformat{figure}{#2Figure #1#3}
\Crefname{assumption}{Assumption}{Assumptions}
\Crefformat{assumption}{#2Assumption #1#3}
\Crefname{subsubsection}{Section}{Sections}
\crefformat{subsubsection}{#2Section #1#3}
\Crefformat{subsubsection}{#2Section #1#3}

\usepackage{crossreftools}
\pdfstringdefDisableCommands{%
    \let\Cref\crtCref
    \let\cref\crtcref
}

\usepackage{xparse}

\ExplSyntaxOn
\DeclareDocumentCommand{\XDeclarePairedDelimiter}{mm}
 {
  \__egreg_delimiter_clear_keys: %
  \keys_set:nn { egreg/delimiters } { #2 }
  \use:x %
   {
    \exp_not:n {\NewDocumentCommand{#1}{sO{}m} }
     {
      \exp_not:n { \IfBooleanTF{##1} }
       {
        \exp_not:N \egreg_paired_delimiter_expand:nnnn
         { \exp_not:V \l_egreg_delimiter_left_tl }
         { \exp_not:V \l_egreg_delimiter_right_tl }
         { \exp_not:n { ##3 } }
         { \exp_not:V \l_egreg_delimiter_subscript_tl }
       }
       {
        \exp_not:N \egreg_paired_delimiter_fixed:nnnnn 
         { \exp_not:n { ##2 } }
         { \exp_not:V \l_egreg_delimiter_left_tl }
         { \exp_not:V \l_egreg_delimiter_right_tl }
         { \exp_not:n { ##3 } }
         { \exp_not:V \l_egreg_delimiter_subscript_tl }
       }
     }
   }
 }

\keys_define:nn { egreg/delimiters }
 {
  left      .tl_set:N = \l_egreg_delimiter_left_tl,
  right     .tl_set:N = \l_egreg_delimiter_right_tl,
  subscript .tl_set:N = \l_egreg_delimiter_subscript_tl,
 }

\cs_new_protected:Npn \__egreg_delimiter_clear_keys:
 {
  \keys_set:nn { egreg/delimiters } { left=.,right=.,subscript={} }
 }

\cs_new_protected:Npn \egreg_paired_delimiter_expand:nnnn #1 #2 #3 #4
 {%
  \mathopen{}
  \mathclose\c_group_begin_token
   \left#1
   #3
   \group_insert_after:N \c_group_end_token
   \right#2
   \tl_if_empty:nF {#4} { \c_math_subscript_token {#4} }
 }
\cs_new_protected:Npn \egreg_paired_delimiter_fixed:nnnnn #1 #2 #3 #4 #5
 {
  \mathopen{#1#2}#4\mathclose{#1#3}
  \tl_if_empty:nF {#5} { \c_math_subscript_token {#5} }
 }
\ExplSyntaxOff

\XDeclarePairedDelimiter{\supnorm}{
  left=\lVert,
  right=\rVert,
  subscript=\infty
  }

\DeclarePairedDelimiter{\abr}{\lvert}{\rvert} %
\DeclarePairedDelimiter{\sbr}{[}{]}

\DeclarePairedDelimiter{\rbr}{(}{)}

\DeclarePairedDelimiter{\inner}{\langle}{\rangle}

\def\ddefloop#1{\ifx\ddefloop#1\else\ddef{#1}\expandafter\ddefloop\fi}
\def\ddef#1{\expandafter\def\csname bb#1\endcsname{\ensuremath{\mathbb{#1}}}}
\ddefloop ABCDEFGHIJKLMNOPQRSTUVWXYZ\ddefloop
\def\ddefloop#1{\ifx\ddefloop#1\else\ddef{#1}\expandafter\ddefloop\fi}
\def\ddef#1{\expandafter\def\csname b#1\endcsname{\ensuremath{\mathbf{#1}}}}
\ddefloop ABCDEFGHIJKLMNOPQRSTUVWXYZ\ddefloop
\def\ddef#1{\expandafter\def\csname sf#1\endcsname{\ensuremath{\mathsf{#1}}}}
\ddefloop ABCDEFGHIJKLMNOPQRSTUVWXYZ\ddefloop
\def\ddef#1{\expandafter\def\csname c#1\endcsname{\ensuremath{\mathcal{#1}}}}
\ddefloop ABCDEFGHIJKLMNOPQRSTUVWXYZ\ddefloop
\def\ddef#1{\expandafter\def\csname h#1\endcsname{\ensuremath{\widehat{#1}}}}
\ddefloop ABCDEFGHIJKLMNOPQRSTUVWXYZ\ddefloop
\def\ddef#1{\expandafter\def\csname hc#1\endcsname{\ensuremath{\widehat{\mathcal{#1}}}}}
\ddefloop ABCDEFGHIJKLMNOPQRSTUVWXYZ\ddefloop
\def\ddef#1{\expandafter\def\csname t#1\endcsname{\ensuremath{\widetilde{#1}}}}
\ddefloop ABCDEFGHIJKLMNOPQRSTUVWXYZ\ddefloop
\def\ddef#1{\expandafter\def\csname tc#1\endcsname{\ensuremath{\widetilde{\mathcal{#1}}}}}
\ddefloop ABCDEFGHIJKLMNOPQRSTUVWXYZ\ddefloop
\def\ddefloop#1{\ifx\ddefloop#1\else\ddef{#1}\expandafter\ddefloop\fi}
\def\ddef#1{\expandafter\def\csname scr#1\endcsname{\ensuremath{\mathscr{#1}}}}
\ddefloop ABCDEFGHIJKLMNOPQRSTUVWXYZ\ddefloop

\let\abs\undefined
\DeclarePairedDelimiter{\abs}{\lvert}{\rvert} %
\DeclarePairedDelimiter{\brk}{[}{]}
\DeclarePairedDelimiter{\crl}{\{}{\}}
\DeclarePairedDelimiter{\prn}{(}{)}
\DeclarePairedDelimiter{\nrm}{\|}{\|}

\let\Pr\undefined

\DeclareMathOperator{\En}{\mathbb{E}}

\DeclareMathOperator{\Pr}{Pr}

\DeclareMathOperator*{\argmin}{arg\,min} %

\newcommand{\wt}[1]{\widetilde{#1}}
\newcommand{\wh}[1]{\widehat{#1}}

\def\ddefloop#1{\ifx\ddefloop#1\else\ddef{#1}\expandafter\ddefloop\fi}
\def\ddef#1{\expandafter\def\csname bb#1\endcsname{\ensuremath{\mathbb{#1}}}}
\ddefloop ABCDEFGHIJKLMNOPQRSTUVWXYZ\ddefloop
\def\ddefloop#1{\ifx\ddefloop#1\else\ddef{#1}\expandafter\ddefloop\fi}
\def\ddef#1{\expandafter\def\csname b#1\endcsname{\ensuremath{\mathbf{#1}}}}
\ddefloop ABCDEFGHIJKLMNOPQRSTUVWXYZ\ddefloop
\def\ddef#1{\expandafter\def\csname sf#1\endcsname{\ensuremath{\mathsf{#1}}}}
\ddefloop ABCDEFGHIJKLMNOPQRSTUVWXYZ\ddefloop
\def\ddef#1{\expandafter\def\csname c#1\endcsname{\ensuremath{\mathcal{#1}}}}
\ddefloop ABCDEFGHIJKLMNOPQRSTUVWXYZ\ddefloop
\def\ddef#1{\expandafter\def\csname h#1\endcsname{\ensuremath{\widehat{#1}}}}
\ddefloop ABCDEFGHIJKLMNOPQRSTUVWXYZ\ddefloop
\def\ddef#1{\expandafter\def\csname hc#1\endcsname{\ensuremath{\widehat{\mathcal{#1}}}}}
\ddefloop ABCDEFGHIJKLMNOPQRSTUVWXYZ\ddefloop
\def\ddef#1{\expandafter\def\csname t#1\endcsname{\ensuremath{\widetilde{#1}}}}
\ddefloop ABCDEFGHIJKLMNOPQRSTUVWXYZ\ddefloop
\def\ddef#1{\expandafter\def\csname tc#1\endcsname{\ensuremath{\widetilde{\mathcal{#1}}}}}
\ddefloop ABCDEFGHIJKLMNOPQRSTUVWXYZ\ddefloop
\def\ddefloop#1{\ifx\ddefloop#1\else\ddef{#1}\expandafter\ddefloop\fi}
\def\ddef#1{\expandafter\def\csname scr#1\endcsname{\ensuremath{\mathscr{#1}}}}
\ddefloop ABCDEFGHIJKLMNOPQRSTUVWXYZ\ddefloop

\newcommand{\eps}{\epsilon}

\newcommand{\ldef}{\vcentcolon=}

\newcommand{\mup}[1][\pi]{\mu_\phi^{{#1}}}
\newcommand{\php}[1][\pi]{\phi^{{#1}}}
\newcommand{\bpi}{\dyn}
\newcommand{\itop}{{-\top}}

\renewcommand{\c}{\mathrm{c}}

\newcommand{\M}[1]{^{{\scriptscriptstyle M}}}  %

\newcommand{\thetahat}{\wh{\theta}}
\newcommand{\thetastar}{\theta^{\star}}

\newcommand{\approxleq}{\lesssim}

\newcommand{\xt}{x_t}

\renewcommand{\Pr}{\bbP}

\def\multiset#1#2{\ensuremath{\left(\kern-.3em\left(\genfrac{}{}{0pt}{}{#1}{#2}\right)\kern-.3em\right)}}

\newcommand{\diag}{\mathrm{diag}}

\newcommand{\op}{\mathrm{op}}

\input{widebar}

\newcommand{\oc}{\mu}
\newcommand{\ini}{\oc_0}
\newcommand{\lstd}{\mathrm{lstd}}
\newcommand{\iniphi}{\phi_0}
\newcommand{\empiniphi}{\emp{\phi}_0}
\newcommand{\cvrgname}{feature-dynamics coverage\xspace}
\newcommand{\Cvrgname}{Feature-dynamics coverage\xspace}
\newcommand{\cvrg}{C^\pi_{\phi}}
\newcommand{\cvrgemp}{\emp{C}^\pi_{\phi}}

\newcommand{\Clin}{C^\pi_{\mathrm{lin}}}
\newcommand{\Cfn}{C^\pi_{\mathrm{fn}}
}
\newcommand{\Cfnemp}{\emp{C}^\pi_{\mathrm{fn}}}

\newcommand{\estm}{\emp\theta_{\lstd}}
\newcommand{\estQ}{\emp{Q}_{\lstd}}
\newcommand{\Btheta}{B_\Theta}
\newcommand{\Bphi}{B_\phi}
\newcommand{\hatSigma}{\emp{\Sigma}}%
\newcommand{\Sigmacov}{\Sigma}%
\newcommand{\hatSigmacov}{\emp{\Sigma}}%
\newcommand{\Sigmacr}{\Sigma_{\mathrm{cr}}}
\newcommand{\hatSigmacr}{\emp{\Sigma}_{\mathrm{cr}}}

\newcommand{\Scal}{\mathcal{S}}
\newcommand{\Acal}{\mathcal{A}}
\newcommand{\RR}{\mathbb{R}}
\newcommand{\EE}{\mathbb{E}}
\newcommand{\data}{\mu^D}

\newcommand{\Ucal}{\mathcal{U}}
\newcommand{\Fcal}{\mathcal{F}}
\newcommand{\Wcal}{\mathcal{W}}

\newcommand{\Tcal}{\mathcal{T}}
\newcommand{\Dcal}{\mathcal{D}}

\newcommand{\Rmax}{R_{\max}}
\newcommand{\Vmax}{V_{\max}}

\newcommand{\emp}[1]{\widehat{#1}}

\newcommand{\hattheta}{\wh{\theta}}
\renewcommand{\op}{\mathrm{op}}

\newcommand{\sigmamin}{\sigma_{\mathrm{min}}}
\newcommand{\sigmamax}{\sigma_{\mathrm{max}}}

\newcommand{\lambdamin}{\lambda_{\mathrm{min}}}
\newcommand{\lambdamax}{\lambda_{\mathrm{max}}}
\newcommand{\epsstat}{\varepsilon_{\mathrm{stat}}}

\newcommand{\Sigmainv}{\Sigma^{-1}}
\newcommand{\hatSigmainv}{\emp{\Sigma}^{-1}}

\newcommand{\dyn}{B^\pi}
\newcommand{\ocphi}{\oc_{\phi}^\pi}

\newcommand{\ts}{\tilde{s}}
\newcommand{\ta}{\tilde{a}}
\newcommand{\be}{\mathbf{e}}

\renewcommand{\xt}[1][t]{\mu_{\phi, #1}^\pi}

\newcommand{\iniabs}{\psi_0}
\newcommand{\dataphi}{\phi^D}
\newcommand{\Pabs}{P_\psi}
\newcommand{\Rabs}{R_\psi}
\newcommand{\Sabs}{\Scal_\psi}
\newcommand{\Mabs}{M_\psi}
\newcommand{\empMabs}{\emp{M}_\psi}
\newcommand{\piabs}{\pi}
\newcommand{\compress}[1]{[#1]_{\psi}}
\newcommand{\diagdata}{\diag(\dataphi)}

\usepackage[suppress]{color-edits}
 \addauthor{ak}{red}
 \addauthor{pa}{orange}
 \addauthor{nj}{blue}
 \addauthor{ah}{purple}

\arxiv{
\usepackage[final]{showlabels}

}

\let\oldparagraph\paragraph
\renewcommand{\paragraph}[1]{\oldparagraph{#1.}}

\makeatletter
\newcommand{\tightparagraph}{%
  \@startsection{paragraph}{4}%
  {\z@}{1.25ex \@plus 1ex \@minus .2ex}{-1em}%
  {\normalfont\normalsize\bfseries}%
}
\makeatother

\usepackage{yfonts}

\title{A unifying view of coverage in linear off-policy evaluation}

\arxiv{
  \title{A unifying view of coverage in linear off-policy evaluation}
\author{%
Philip Amortila \\
{\normalsize University of California, Berkeley}\\
{\small\texttt{philipa4@illinois.edu}}
\and
Audrey Huang \\
{\normalsize University of Illinois, Urbana-Champaign}\\
{\small\texttt{audreyh5@illinois.edu}}
\and
Akshay Krishnamurthy\\
{\normalsize Microsoft Research, NYC}\\
{\small\texttt{akshaykr@microsoft.com}}
}
\and
Nan Jiang \\
{\normalsize University of Illinois, Urbana-Champaign}\\
{\small\texttt{nanjiang@illinois.edu}}
\date{}
}

\iclr{
\author{Philip Amortila \\
University of California, Berkeley\\
\texttt{p.amortila@berkeley.edu}
\And 
Audrey Huang \\
University of Illinois, Urbana-Champaign\\
\texttt{audreyh5@illinois.edu}
\And
Akshay Krishnamurthy\\
Microsoft Research, NYC\\
\texttt{akshaykr@microsoft.com}
\And
\hspace{8.4em}
\And
Nan Jiang \\
University of Illinois, Urbana-Champaign \\
\texttt{nanjiang@illinois.edu}
}
}

\begin{document}

\maketitle

\begin{abstract}

Off-policy evaluation (OPE) is a fundamental task in reinforcement learning (RL). %
In the classic setting of \emph{linear OPE}, finite-sample guarantees often take the form
$$
\textrm{Evaluation error} \le \textrm{poly}(C^\pi, d, 1/n,\log(1/\delta)),
$$
where $d$ is the dimension of the features and $C^\pi$ is a \textbf{\textit{coverage parameter}} that characterizes
the degree to which the visited features lie in the span of the data distribution.
While such guarantees are well-understood for several popular algorithms under stronger assumptions (e.g. Bellman completeness), %
the understanding is lacking and fragmented 
in the minimal setting where only %
the target value function is linearly realizable in the features. Despite recent interest in tight characterizations of the statistical rate in this setting, the right notion of coverage remains unclear, and candidate definitions from prior analyses have undesirable properties and are starkly disconnected from more standard definitions in the literature. 

We provide a novel finite-sample analysis of a canonical algorithm for this setting, LSTDQ. Inspired by an instrumental-variable view, we develop error bounds that depend on a novel coverage parameter, the \textbf{feature-dynamics coverage}, which can be interpreted as linear coverage in an induced dynamical system for feature evolution. With further assumptions---such as Bellman-completeness---our definition successfully recovers the coverage parameters specialized to those settings, finally yielding a unified understanding for coverage in linear OPE.

\end{abstract}

\section{Introduction} \label{sec:intro}

\textit{Coverage} is a foundational concept in reinforcement learning (RL) theory. In off-policy evaluation (OPE), the task of evaluating a target policy based on data collected from a different behavior policy, coverage 
characterizes the degree to which the data distribution contains relevant information about the target policy. The relevance of coverage extends beyond OPE, and the concept plays important roles in offline policy learning \citep{jin2021pessimism, xie2021bellman}, \textit{online} RL \citep{xie2023role, amortila2023harnessing,amortila2024scalable}, or even statistical-computational trade-offs in LLMs \citep{foster2025good}. Coverage provides a mathematical characterization of distribution shift which is a central challenge in RL.

Mathematically, coverage is manifested as \textit{coverage parameters} in finite-sample guarantees: for example, standard OPE guarantees often take the form
\begin{align} \label{eq:toy-bound}
\textrm{Evaluation error} \le \textrm{poly}(C^\pi, d, 1/n, \log(1/\delta)),
\end{align}
where $n$ is sample size, $\delta$ is the failure probability, $d$ is the statistical dimension of the function class, $\delta$ is the failure probability, and $C^\pi$ is the coverage parameter. The definition of $C^\pi$ can take many different forms depending on the algorithm and the assumptions, as well as how the proof handles \textit{error propagation} through the dynamics of the MDP \citep{farahmand2010error}. The most na\"ive definition is $\|\oc^\pi / \data\|_\infty$, the boundedness of density ratio between the target policy's discounted occupancy $\oc^\pi$ and the data distribution $\data$. %
More refined definitions often take advantage of the structure of the underlying MDP or the function-approximation scheme. 
Comparisons between these definitions offer connections and unified understanding across different learning settings, such as policy evaluation vs.~optimization \citep{duan_minimax-optimal_2020, jin2021pessimism}, offline vs.~online \citep{xie2023role}, tabular vs.~function approximation \citep{yin2021towards}, Markovian vs.~partially observed \citep{zhang2024curses}, and single-agent vs.~multi-agent RL \citep{cui2022provably, zhang2023offline}.

While a unified understanding of RL through the lens of coverage is emerging, 
one of the most fundamental settings---where the target value function is linearly realizable in a given feature map---eludes such understanding and remains starkly disconnected from the rest of the literature. 
The most natural algorithm for this setting is arguably LSTD(Q)~\citep{boyan1999least,lagoudakis2003least}.
Despite recent statistical results for this method~\citep{duan_optimal_2021,mou2020optimal,perdomo_sharp_2022}, 
the error bounds often come with obscure conditions and are hard to interpret, with little or no understanding of which quantities play the role of coverage and how they connect to coverage parameters in related settings and algorithms. 

On the other hand, simpler analyses do provide more interpretable candidates, %
such as $1/\sigma_{\min}(A)$ with $A = \EE_{\data}[\phi(s,a)(\phi(s,a)^\top - \gamma \phi(s',\pi)^\top)]$ being the key matrix estimated in LSTDQ (\cref{sec:lstd}). Given that LSTDQ approximates $Q^\pi \approx \phi^\top \theta$ by solving a linear equation in the form of $A \theta = b$, $1/\sigma_{\min}(A)$ is a very natural candidate as it determines the invertibility of $A$ and the solution's numerical stability. While bounds in the form of Eq.\eqref{eq:toy-bound} can be established with $C^\pi = 1/\sigma_{\min}(A)$, the quantity $1/\sigma_{\min}(A)$ is unsatisfactory in many aspects as a coverage parameter:

\begin{enumerate}[leftmargin=*]
\item \textbf{Lacking scale invariance.} The value of $\sigma_{\min}(A)$ can change arbitrarily if we simply redefine the features as $\phi_{\textrm{new}} = c \phi$.\footnote{\citet{perdomo_sharp_2022} provide a bound that depends on a term related to our coverage parameter and is scale invariant. %
We will compare and connect to their results in \cref{sec:main}.}  While seemingly unrelated, this issue is mathematically tied to the fact that $\sigma_{\min}(A)$ as a coverage parameter has no concern over the initial state distribution of the MDP which should play an important role in the definition of coverage. %
\item \textbf{Lacking off-policy characterization.} Coverage parameters provide important understanding for when data contains relevant information about the target policy. For $1/\sigma_{\min}(A)$, however, the only known characterization to our knowledge is its boundedness in the restrictive on-policy case, %
and its magnitude is hard to interpret for general off-policy distributions. %
\item \textbf{Lacking unification with other analyses.} State abstractions are a special case of linear function approximation, under which LSTDQ coincides with the model-based solution. Prior works have established \textit{aggregated concentrability} \citep{jia2024offline} as the appropriate coverage parameter for this setting, which cannot be recovered by specializing $1/\sigma_{\min}(A)$. Moreover, both concepts differ significantly from standard definitions of coverage in linear OPE when analyzed under the Bellman-completeness assumption: standard definitions measure coverage by analyzing how errors propagate under the \textit{\textbf{groundtruth dynamics}}, whereas aggregated concentrability does so under the  \textit{\textbf{compressed dynamics}} determined by the abstraction scheme. %
\end{enumerate}

In this paper, we provide a novel finite-sample analysis of LSTDQ inspired by an instrument-variable (IV) view \citep{bradtke1996linear, chen2022instrumental}, which comes with a new coverage parameter that we call \emph{\textbf{\cvrgname}}, $\cvrg$. \Cvrgname replaces $1/\sigma_{\min}(A)$ and elegantly addresses the above problems. Furthermore, it corresponds to feature coverage in a linear dynamical system induced by the features (first studied by \citet{parr2008analysis}). The system is the transition dynamics of the true MDP compressed through the given features, and naturally subsumes the $\chi^2$ version of aggregated concentrability as a special case. Furthermore, given Bellman-completeness as an additional assumption, \cvrgname recovers the standard notion of linear coverage, successfully unifying the previously fragmented understanding.

\section{Preliminaries}\label{sec:prelim}

\paragraph{Markov Decision Process (MDP)} 
We consider the groundtruth environment modeled as an infinite-horizon discounted MDP $(\Scal, \Acal, P, R, \gamma, \ini)$, where $\Scal$ is the state space, $\Acal$ is the action space, $P: \Scal\times\Acal \to \Delta(\Scal)$ is the transition dynamics ($\Delta(\cdot)$ is the probability simplex), $R: \Scal\times\Acal\to \Delta([0,\Rmax])$ is the reward function, $\gamma \in [0, 1)$ is the discount factor, and $\ini \in \Delta(\Scal)$ is the initial state distribution. We assume $\Scal$, $\Acal$ are finite, but their cardinalities can be prohibitively large and thus should not appear in sample-complexity guarantees. A policy $\pi: \Scal\to\Delta(\Acal)$ induces a distribution over random trajectories, generated as $s_0 \sim \ini$, $a_t \sim \pi(\cdot|s_t)$ (or simply $a_t \sim \pi$), $r_t = R(s_t, a_t)$, $s_{t+1} \sim P(\cdot|s_t, a_t)$. Let $\Pr_\pi[\cdot]$ and $\EE_\pi[\cdot]$ denote the probability and expectation under such a distribution. The expected return of a policy is $J(\pi):= \EE_{\pi}[\sum_{t=0}^\infty \gamma^t r_t]$, which falls in the range of $[0, \Vmax]$ with $\Vmax:= \Rmax/(1-\gamma)$. The discounted occupancy of $\pi$ is defined as 
\begin{equation}\label{eq:oc-defn}
\textstyle \oc^\pi(s,a) = (1-\gamma)\sum_{t\ge 0} \gamma^t \oc_t^\pi(s,a) \coloneqq (1-\gamma)\sum_{t\ge 0} \gamma^t \Pr_\pi[s_t=s, a_t=a].
\end{equation}

\paragraph{Value Function and Bellman Operator} The $Q$-function $Q^\pi\in \RR^{\Scal\times\Acal}$ is the fixed point of Bellman operator $\Tcal^\pi: \RR^{\Scal\times\Acal} \to \RR^{\Scal\times\Acal}$, i.e., $Q^\pi = \Tcal^\pi Q^\pi$, where  $\forall f\in \RR^{\Scal\times\Acal}$, $(\Tcal^\pi f)(s,a):= R(s,a) + \gamma \EE_{s\sim P(\cdot|s,a)}[f(s',\pi)]$. Here $f(s',\pi)$ is a shorthand for $\EE_{a'\sim \pi(\cdot|s')}[f(s',a')]$. Given any $f\in\RR^{\Scal\times\Acal}$ as an approximation of $Q^\pi$, we can induce an estimate of $J(\pi)$ as: 
\begin{align} \label{eq:J_f}
    J_f(\pi):= \En_{s_0\sim \ini, a_0 \sim \pi}[f(s_0, a_0)],
\end{align}
since $J(\pi) = J_{Q^\pi}(\pi)$. Here we assume $\ini$ is known for simplicity, and our results extend straightforwardly to the case where $\ini$ is unknown and needs to be estimated from a separate dataset. 

\paragraph{Linear Off-policy Evaluation (OPE)} OPE is the task of estimating the performance of a given \textit{target policy} $\pi$ based on an offline dataset $\Dcal$ sampled from a \textit{behavior policy} $\pi_b$.  
As a standard simplification, We assume that $\Dcal$ consists of $n$ i.i.d.~tuples $(s,a,r,s',a')$ generated as 
$$(s,a) \sim \data, r \sim R(s,a), s' \sim P^\star(\cdot|s,a), a' \sim \pi(\cdot \mid s').$$ %
We use $\EE_{\data}[\cdot]$ to denote the expectation of functions of $(s,a,r,s',a')$ under the data distribution, and $\EE_{\Dcal}[\cdot]$ denotes the empirical approximation from $\Dcal$. For most of the paper we are concerned with \textit{return} estimation via linear function approximation, i.e., estimating the scalar $J(\pi)$ as $J_{\emp{Q}^\pi}(\pi)$ where $\emp{Q}^\pi(s,a) = \phi(s,a)^\top \emp\theta$ for some given feature map $\phi:\Scal\times\Acal \to \RR^d$. We make the following standard assumptions throughout the paper:

\begin{assumption}[Feature boundedness and realizability]\label{ass:realizability}
We assume that there exists $\theta^\star \in \RR^d$ such that $Q^\pi(s,a) = \phi(s,a)^\top \theta^\star$. Furthermore, assume that $\|\phi(s,a)\|_2 \le \Bphi, \forall s,a$. %
\end{assumption}

\paragraph{Mathematical Notation} We use $\sigma_{\min}(\cdot)$ and $\lambda_{\min}(\cdot)$ to denote the smallest singular value of an asymmetric matrix and the smallest eigenvalue of a symmetric matrix, respectively. Let $\rho(\cdot)$ denote the spectral radius of a matrix. For functions over $\Scal\times\Acal$ such as $Q^\pi$ and $d^\pi$, we also view them interchangeably as vectors in $\RR^{|\Scal\times\Acal|}$ whenever convenient. We use $a \approxleq b$ as a shorthand for $a = c \cdot b$ for some absolute constant $c$, and use big-Oh notation (e.g., $o(1/n)$ and $\cO(1/n)$) to highlight dependence on $n$ while ignoring other problem-dependent quantities. Given two square and possibly asymmetric matrices $\Sigma$ and $\Sigma'$, $\Sigma \preceq \Sigma'$ means $v^\top (\Sigma- \Sigma') v \le 0$ for all $v$. We let $\|v\|_{\Sigma} = \sqrt{v^\top \Sigma v}$ denote the Mahalanobis norm, and $[K] := \{1, \ldots, K\}$. 

\subsection{LSTDQ}  \label{sec:lstd}
The LSTDQ algorithm estimates the following moments from data:
\begin{align*}
    & \Sigmacov = \En_{\data}\brk*{\phi(s,a) \phi(s,a)^\top}, &&
    \Sigmacr = \En_{\data} \brk*{\phi(s,a) \phi(s',a')^\top}, \\
    & A = \Sigmacov - \gamma \Sigmacr, &&
    b = \En_{\data}\brk*{\phi(s,a) r}.
\end{align*}
Throughout the paper, we assume: %
\begin{assumption}[Invertibility] \label{ass:Ainverse}
$\Sigmacov$ and $A$ are invertible.
\end{assumption}
These moments satisfy
\begin{align*}
A\theta^\star - b &= \En_{\data}[\phi(s,a)(\phi(s,a)^\top \theta^\star - r - \phi(s',a')^\top \theta^\star)] \\
&= \En_{\data}[\phi(s,a)(Q^\pi(s,a) - (\Tcal^\pi Q^\pi)(s,a)] = \mathbf{0},
\end{align*}
which implies $\theta^\star = A^{-1} b$ if $A$ is invertible, 
and LSTDQ is simply the plug-in estimate of this inverse: let $\hatSigmacov, \hatSigmacr, \emp{A}, \emp{b}$ be the empirical estimates from $\Dcal$,\footnote{When estimating $\hatSigmacr$, either taking the average of $\phi(s,a)\phi(s',a')^\top$ or $\phi(s,a)\phi(s',\pi)^\top$ is fine and our results apply to both versions.} and 
\begin{align}
\estm = \emp{A}^{-1} \emp{b}, \quad
\estQ(s,a) = \phi(s,a)^\top \estm.
\end{align}
We do not explicitly assume the empirical matrices $\hatSigma$ and $\emp{A}$  are invertible, with the understanding that algorithms based on these quantities have degenerate behaviors when they are singular, and our guarantees hold under such convention.\footnote{That is, either the high-probability event guarantees that the empirical matrix is invertible, or such matrices appears in the bound and make  the guarantee vacuous (as in e.g.,~\cref{eq:emp_bound}).}

\section{Related Works}\label{sec:related}

\paragraph{Coverage Parameters in Offline RL} 
Early research in offline RL identifies the boundedness of density ratios, such as $\|\oc^\pi / \data\|_\infty$, as a coverage parameter \citep{munos2007performance, munos2008finite, antos2008learning}. Later works point out that they can be tightened by leveraging the structure of the function class $\Fcal$ used to approximate the value function, such as $\sup_{f\in\Fcal} \frac{(\En_{\oc^\pi}[f - \Tcal^\pi f])^2}{\En_{\data}[(f- \Tcal^\pi f)^2]}$. In our setting, the linear feature $\phi$ induces a linear $\Fcal_\phi = \{\phi^\top \theta: \theta \in \RR^d\}$, and these parameters often have simplified forms. Among them, the tightest known coverage parameter for off-policy return estimation is \citep{zanette2021provable, yin2022near, gabbianelli2024offline, jiang2024offline}:
\begin{align} \label{eq:Clin}
\Clin = (\phi^\pi)^\top \Sigma^{-1} \phi^\pi, \quad \textrm{where} ~~ \phi^\pi:=\En_{(s,a)\sim \oc^\pi}[\phi(s,a)]. 
\end{align}
In words, $\Clin$ only requires the \textit{mean feature vector under the discounted occupancy of $\pi$} ($\mu^\pi$, cf. \cref{eq:oc-defn}) to lie in the span of data features. Looser alternatives have also been proposed, such as $\EE_{\oc^\pi}[\|\phi\|_{\Sigma^{-1}}]$ require coverage of the distribution $\phi(s,a), (s,a)\sim \oc^\pi$ in a point-wise manner; see \citet{jiang2024offline} for further discussions. 
The majority of the study on coverage, however, crucially relies on the following assumption on $\Fcal$ which is substantially stronger than realizability ($Q^\pi \in \Fcal$), and we \textit{\textbf{do not assume}} it in our main results unless stated otherwise explicitly. 

\begin{assumption}[Bellman-completeness] \label{ass:complete}
Let $\Fcal \subset (\Scal\times\Acal\to\RR)$ be a function class for approximating $Q^\pi$. We say that it is Bellman-complete if 
$$\Tcal^\pi f \in \Fcal, \forall f\in\Fcal.$$
\end{assumption}

Indeed, a major potential of LSTDQ is that it is one of the few algorithms who enjoy theoretical guarantees under only realizability $Q^\pi \in \Fcal$, which further enables its important role in the challenging problem of offline model selection \citep{xie2020batch, liu2025model}. 
Unfortunately, coverage in the absence of Bellman-completeness is poorly understood even in the linear setting, as discussed in \cref{sec:intro}, which we address in this paper. 

\paragraph{LSTD(Q) Analyses} LSTD methods are initially derived as the fixed point solution of TD methods \citep{sutton2018reinforcement}, and its closed-form nature separates it from typical dynamical-programming-style RL algorithms that often suffer from divergence. While we focus on LSTDQ, our results naturally extend to other variants such as LSTD (which uses state-feature to approximate $V^\pi$) or off-policy LSTD (up to handling importance sampling via concentration inequalities) \citep{nedic2003least, bertsekas2009projected, dann2014policy}. 

Early finite-sample analyses of LSTD are mostly in the on-policy setting %
and depend on quantities like $\sigma_{\min}(A)$ \citep{bertsekas2012dynamic, lazaric2010finite, lazaric2012finite}. There is a recent surge of interest in providing tight statistical characterizations of LSTD, including in the off-policy setting %
\citep{amortila2020variant,mou_off-policy_2022,amortila2023optimal,perdomo_sharp_2022}. These results, however, offer very little discussion on coverage, and sometimes require additional regularity assumptions. \citet{perdomo_sharp_2022} recently provides a sharp analysis that largely subsumes the earlier results, which we will compare to in \cref{sec:lin_dyn}.

\section{Finite-sample Analysis of LSTDQ} \label{sec:main}

In this section we will present the finite-sample analysis of LSTDQ, which depends on our proposed coverage parameter.

\paragraph{Special case of $\gamma=0$} 
It is instructive to start with the special case of $\gamma=0$, where tight guarantees and the definition of coverage are well understood. 
Recall that $A\theta^\star = b$ can be rewritten as: 
\begin{align} \label{eq:IV}
\En[ZX^\top] \theta^\star = \En[Z Y], 
\end{align}
where $Z = \phi(s,a) \in \RR^d$, $X = \phi(s,a) - \gamma \phi(s',a') \in \RR^d$, $Y = r \in \RR$, and the expectation $\En[\cdot]$ is under $\data$. Below we will go back and forth between the linear regression (LR) notation system ($X, Y, \En[XX^\top],\ldots$) and the RL notation system ($\phi, r, \Sigmacov, \ldots$), where we obtain concentration bounds from the LR literature and meaningful guarantees for OPE in the RL setting, respectively. 

When $\gamma=0$, we essentially face a contextual bandit problem with linear reward, and \cref{eq:IV} becomes
\begin{align} \label{eq:LR}
\En[XX^\top] \theta^\star = \En[X Y],
\end{align}
which is a classic linear regression problem, with $A = \Sigmacov = \En[XX^\top]$. In this special case, LSTDQ simply performs LR to fit the parameter $\theta^\star$. %
While parameter identification guarantees in LR  (i.e., error bounds on $\|\emp{\theta} - \theta^\star\|$) inevitably have to depend on $\sigma_{\min}(A) = \lambda_{\min}(\En[XX^\top])$, the key is in how we use $\estm$ to form the final estimation in OPE: 
$$
J_{\estQ}(\pi) = \En_{s_0 \sim \ini, a_0 \sim \pi}[\phi(s_0, a_0)^\top \estm] = \phi_0^\top \estm,
$$
where 
\begin{align} \label{eq:phi0}
\phi_0 := \En_{s_0 \sim \ini, a_0 \sim \pi}[\phi(s_0, a_0)].    
\end{align}
That is, for the purpose of estimating $J(\pi)$, we only need $\estm$ to be accurate in the direction of $\phi_0$, and a high-probability bound can be established if $\phi_0$ is well \textit{covered} by the distribution of $X = \phi(s,a)$ observed in data ($(s,a)\sim \data$): 
informally, with probability at least $1-\delta$,
\begin{align} \label{eq:lr_bound}
|J_{\estQ}(\pi) - J(\pi)| = |\phi_0^\top (\estm - \theta^\star)| \approxleq 
\|\phi_0\|_{\hatSigmacov^{-1}} \sqrt{\frac{\log(1/\delta)}{n}} \Vmax.
\end{align}
Here $\|\phi_0\|_{\hatSigmacov^{-1}}$ plays the role of coverage, characterizing how well the expected feature under the target policy $\pi$ is covered by the random features observed in the data (which determines $\hatSigmacov$). 
Similar bounds with the population version of coverage $\|\phi_0\|_{\Sigmacov^{-1}}$ also hold under additional regularity assumptions \citep{hsu2011analysis}. The quantity is also consistent with the standard notion of linear coverage in MDPs (\cref{eq:Clin}) under Bellman completeness (\cref{ass:complete}), which is equivalent to realizability in the bandit setting ($\gamma=0$). \\
For readers familiar with using LR bounds to calculate uncertainty bonuses in RL, this result might come as a surprise as it is \textit{\textbf{dimension-free}}, whereas the usual bound has $\sqrt{\nicefrac{(d + \log(1/\delta))}{n}}$ which depends on $d$ \citep{abbasi2011improved, jin2020provably, jin2021pessimism}. Roughly speaking, this is because the latter bound holds for all possible $\iniphi \in \RR^d$ simultaneously, but in OPE we only care about a single given $\iniphi$, and leveraging this saves the $d$ factor; we call this kind of results \textit{directional bounds} and provide more detailed discussions in  Appendix~\ref{app:dim-free}.

\paragraph{Extending to $\gamma > 0$ with Instrumental-Variable inspiration} Given that the $\gamma=0$ case is well-understood and does not suffer the issues mentioned in the introduction, we therefore seek to extend the above framework to $\gamma > 0$. When $\gamma > 0$, however, we have $Z \ne X$, and \cref{eq:IV} is a form of Instrumental Variable (IV) problem induced by ``error-in-the-variable'' issues: it is known that
$$
R(s,a) = \phi_{\textrm{td}}(s,a)^\top \theta^\star, \quad \textrm{with} ~~ \phi_{\textrm{td}}(s,a) := \phi(s,a) - \gamma \En_{\data}[\phi(s',a') | s,a],
$$
that is, the expected temporal-difference feature, $\phi_{\textrm{td}}$, 
can linearly predict reward, which LSTDQ leverages to recover $\theta^\star$. However, in the data we do not observe  the expected TD feature but its random realization, $X = \phi(s,a) - \gamma \phi(s',a')$, and $X - \phi_{\textrm{td}}(s,a)$ is zero-mean (conditioned on $(s,a)$) noise. Given such ``error in the variable'', $\En[XX^\top]\theta^\star \ne \En[XY]$, so a straightforward linear regression from $X$ to $Y$ does not work. LSTDQ solves this problem by introducing $Z = \phi(s,a)$ as an \textit{instrumental variable}  \citep{bradtke1996linear, chen2022instrumental}, which is independent of the noise $X - \phi_{\textrm{td}}(s,a)$ given $(s,a)$ and thus helps marginalizes out the said noise. Based on this view, we extend the LR analysis of $\gamma=0$ to the $\gamma>0$ case by consulting the IV literature \citep[e.g.,][]{xia2024instrumental,della2025stochastic}, which leads to our main finite-sample error bounds (see \cref{app:proof-mainthm-emp} for the proof).

\begin{restatable}[Population Coverage Bound]{theorem}{mainthmpop}\label{thm:mainthm-pop}
There exists $n_0$ such that when $n\ge n_0$, w.p.~$\ge 1-\delta$, 
\begin{align} \label{eq:pop_bound_lazy}
\abs*{J_{\estQ}(\pi) - J(\pi)} \approxleq \frac{\Vmax}{1-\gamma}  \sqrt{\frac{\cvrg\cdot \log(1/\delta)}{n}}  + o(\sqrt{1/n}),
\end{align}
where 
\begin{align} \label{eq:cvrg}
\cvrg := (1-\gamma)^2 \iniphi^\top A^{-1} \Sigma A^{-\top} \iniphi
\end{align}
and $n_0$ and the $o(1/\sqrt{n})$ term may depend on $d$ and $1/\sigma_{\min}(A)$. %
\end{restatable}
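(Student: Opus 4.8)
The plan is to follow the instrumental-variable decomposition anticipated in the text, projecting the parameter error onto the single direction $\iniphi$ so that the leading-order fluctuation becomes a scalar i.i.d.\ average whose variance is exactly $\cvrg$. First I would write the return error as a parameter error in the $\iniphi$ direction and peel off the empirical inverse,
\begin{align*}
J_{\estQ}(\pi) - J(\pi) = \iniphi^\top(\estm - \theta^\star) = \iniphi^\top \emp{A}^{-1}(\emp{b} - \emp{A}\theta^\star).
\end{align*}
Writing $Z=\phi(s,a)$, $X = \phi(s,a) - \gamma\phi(s',a')$, $Y=r$ and defining the Bellman/TD residual $\xi := Y - X^\top\theta^\star = r - Q^\pi(s,a) + \gamma Q^\pi(s',a')$, we have $\emp{b} - \emp{A}\theta^\star = \En_\Dcal[Z\xi]$. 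The two structural facts I would record up front are \emph{exogeneity}, $\En[\xi\mid s,a]=0$ (which follows from $Q^\pi=\Tcal^\pi Q^\pi$ and is exactly the population moment identity $\En[Z\xi]=b-A\theta^\star=\mathbf{0}$ already noted in \cref{sec:lstd}), and \emph{boundedness}, $|\xi|\lesssim \Vmax$ (since $r\le\Rmax$ and $\|Q^\pi\|_\infty\le\Vmax$).

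Next I would split the inverse as $\emp{A}^{-1} = A^{-1} + (\emp{A}^{-1}-A^{-1})$, giving a leading term $T_1 := \iniphi^\top A^{-1}\En_\Dcal[Z\xi]$ and a remainder $T_2 := \iniphi^\top(\emp{A}^{-1}-A^{-1})\En_\Dcal[Z\xi]$. The crucial point is that $A$ is deterministic, so $w := A^{-\top}\iniphi$ is a \emph{fixed} vector and $T_1 = \tfrac1n\sum_{i=1}^n (w^\top Z_i)\xi_i$ is an average of i.i.d.\ mean-zero scalars (mean-zero by exogeneity). A Bernstein bound then gives, with probability $\ge 1-\delta$, $|T_1|\lesssim \sqrt{\Var(w^\top Z\,\xi)\,\log(1/\delta)/n} + M\log(1/\delta)/n$, and the variance is pinned down by conditioning on $(s,a)$:
\begin{align*}
\Var(w^\top Z\,\xi) = \En\big[(w^\top Z)^2\,\En[\xi^2\mid s,a]\big] \lesssim \Vmax^2\, w^\top\En[ZZ^\top]w = \Vmax^2\,\iniphi^\top A^{-1}\Sigma A^{-\top}\iniphi = \frac{\Vmax^2\,\cvrg}{(1-\gamma)^2},
\end{align*}
using $\En[ZZ^\top]=\En_\data[\phi\phi^\top]=\Sigma$ and the definition of $\cvrg$. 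This produces precisely the advertised leading term $\tfrac{\Vmax}{1-\gamma}\sqrt{\cvrg\log(1/\delta)/n}$, while the range $M = \sup|w^\top Z\,\xi| \lesssim \|w\|\,\Bphi\,\Vmax$ contributes only at order $\log(1/\delta)/n = o(1/\sqrt n)$. This is exactly the step that makes the bound \emph{dimension-free} at leading order: having already projected onto the fixed direction $w$, no union bound over $\RR^d$ is needed, mirroring the $\gamma=0$ analysis of \cref{eq:lr_bound}.

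Finally I would absorb $T_2$ into the $o(1/\sqrt n)$ term. Both $\emp{A}-A$ and $\En_\Dcal[Z\xi]$ are centered empirical averages of bounded quantities, so standard matrix/vector concentration gives $\|\emp{A}-A\|_{\op} = O(\sqrt{d/n})$ and $\|\En_\Dcal[Z\xi]\| = O(\sqrt{d/n})$ with high probability. For $n\ge n_0$ large enough that $\|\emp{A}-A\|_{\op}\le\tfrac12\sigma_{\min}(A)$, a Neumann expansion yields $\emp{A}^{-1}-A^{-1} = -A^{-1}(\emp{A}-A)A^{-1} + O(\|\emp A - A\|_{\op}^2)$, whence $|T_2| \lesssim \|\iniphi\|\,\|A^{-1}\|_{\op}^2\,\|\emp A - A\|_{\op}\,\|\En_\Dcal[Z\xi]\| = O(d/n) = o(1/\sqrt n)$; this is exactly why $n_0$ and the lower-order term are permitted to depend on $d$ and $1/\sigma_{\min}(A)$. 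The main obstacle I anticipate is the bookkeeping in this last step: one must certify invertibility of $\emp{A}$ on the high-probability event, control the Neumann remainder uniformly, and confirm that every factor carrying $d$ or $1/\sigma_{\min}(A)$ lands strictly inside the $o(1/\sqrt n)$ remainder rather than contaminating the leading term—so that the clean, scale-invariant coverage parameter $\cvrg$ alone governs the dominant rate.
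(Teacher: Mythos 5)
Your proposal is correct, but it takes a genuinely different route from the paper. The paper derives \cref{thm:mainthm-pop} on top of its proof of \cref{thm:mainthm-emp}: it writes the error as $\phi_0^\top \emp{A}^{-1}\hatSigma^{1/2}\cdot\hatSigma^{-1/2}(\emp{A}\theta^\star-\emp{b})$, splits the \emph{weight vector} $\phi_0^\top\emp{A}^{-1}\hatSigma^{1/2}$ into its population counterpart $\phi_0^\top A^{-1}\Sigma^{1/2}$ plus a deviation (\cref{lem:cvrgemp}), and controls the leading term by a fixed-design, self-normalized Hoeffding argument (\cref{lem:emp-lstd-concentration}) applied in the fixed direction $u \propto \Sigma^{1/2}A^{-\top}\phi_0$; coverage enters as the norm $\nrm{\Sigma^{1/2}A^{-\top}\phi_0}_2 = \sqrt{\cvrg}/(1-\gamma)$ multiplying a dimension-free $\Vmax\sqrt{\log(1/\delta)/n}$. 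You instead split the \emph{inverse} $\emp{A}^{-1}=A^{-1}+(\emp{A}^{-1}-A^{-1})$, so your leading term $T_1=\frac1n\sum_i (w^\top Z_i)\xi_i$ with the deterministic $w=A^{-\top}\iniphi$ is a plain i.i.d.\ scalar sum, and coverage enters through its Bernstein variance $\Var(w^\top Z\xi)\lesssim \Vmax^2\, w^\top\Sigma w = \Vmax^2\cvrg/(1-\gamma)^2$; both identifications of the leading constant are valid and coincide. Your remainder analysis is also quantitatively cleaner: you get $\abs{T_2}=O(d/n)$ (up to $1/\sigmamin(A)$ factors) from $\nrm{\emp{A}^{-1}-A^{-1}}_{\op}\cdot\nrm{\En_{\Dcal}[Z\xi]}_2$, whereas the paper's remainder is only $O(n^{-3/4})$ because \cref{lem:cvrgemp} passes through the matrix square-root perturbation $\nrm{\hatSigma^{1/2}-\Sigma^{1/2}}_2\le\sqrt{\nrm{\hatSigma-\Sigma}_2}$, which costs a square root—both of course suffice for $o(1/\sqrt{n})$. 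What the paper's heavier machinery buys is reuse: its self-normalized lemma is exactly what yields the empirical-coverage bound \cref{thm:mainthm-emp} (with no lower-order terms and no $\sigmamin(A)$ dependence at all), and the population theorem then falls out as a perturbation of that result; your argument, pinned to the population direction $w=A^{-\top}\iniphi$, proves the population theorem more directly and makes transparent that $\cvrg$ is literally the variance of the TD noise projected along $A^{-\top}\iniphi$, but it would not by itself recover \cref{thm:mainthm-emp}.
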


\begin{restatable}[Empirical Coverage Bound]{theorem}{mainthmemp} \label{thm:mainthm-emp}
Under \cref{ass:Ainverse,ass:realizability}, with probability at least $1-\delta$,
\begin{align} \label{eq:emp_bound}
\abs*{J_{\estQ}(\pi) - J(\pi)} \approxleq \frac{ \Vmax}{1-\gamma} \cdot \sqrt{\frac{\cvrgemp \cdot (d + \log(1/\delta))}{n}}
\end{align}
where 
\begin{align} \label{eq:cvrgemp}
\cvrgemp := (1-\gamma)^2 \iniphi^\top \emp{A}^{-1} \hatSigma \emp{A}^{-\top} \iniphi.
\end{align}
We treat $\cvrgemp = +\infty$ if $\emp{A}$ %
is not invertible. 
\end{restatable}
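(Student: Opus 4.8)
The plan is to reduce the claim to two clean ingredients: an exact algebraic identity expressing the OPE error as an inner product against a conditionally mean-zero noise vector, and a self-normalized concentration bound for that noise vector in the empirical $\hatSigma$-geometry. First I would use realizability (\cref{ass:realizability}) to write $J(\pi) = J_{Q^\pi}(\pi) = \En_{s_0\sim\ini,a_0\sim\pi}[\phi(s_0,a_0)^\top\theta^\star] = \iniphi^\top\theta^\star$, so that $J_{\estQ}(\pi) - J(\pi) = \iniphi^\top(\estm - \theta^\star)$. Adopting the LR notation $Z_i = \phi(s_i,a_i)$, $X_i = \phi(s_i,a_i) - \gamma\phi(s_i',a_i')$, $Y_i = r_i$, so that $\emp{A} = \En_{\Dcal}[ZX^\top]$ and $\emp{b} = \En_{\Dcal}[ZY]$, the definition $\estm = \emp{A}^{-1}\emp{b}$ gives, on the event that $\emp{A}$ is invertible (otherwise $\cvrgemp = +\infty$ and the bound is vacuous),
\[
\estm - \theta^\star = \emp{A}^{-1}(\emp{b} - \emp{A}\theta^\star) = \emp{A}^{-1}\cdot\tfrac{1}{n}\sum_{i=1}^{n} Z_i \xi_i =: \emp{A}^{-1} w, \qquad \xi_i := r_i - X_i^\top \theta^\star .
\]
The two structural facts I would record about the residuals are (i) the conditional orthogonality $\En[\xi_i\mid s_i,a_i] = R(s_i,a_i) - \phi_{\textrm{td}}(s_i,a_i)^\top\theta^\star = 0$, inherited from the identity $R = \phi_{\textrm{td}}^\top\theta^\star$, and (ii) the range bound $|\xi_i| \le \Rmax + (1+\gamma)\Vmax \approxleq \Vmax$, since $\phi(s,a)^\top\theta^\star = Q^\pi(s,a)\in[0,\Vmax]$.

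Second I would split the error by Cauchy--Schwarz in the $\hatSigma$ inner product. Because $w\in\mathrm{span}\{Z_i\} = \mathrm{range}(\hatSigma)$, the split
\[
\abs{J_{\estQ}(\pi) - J(\pi)} = \abs{\iniphi^\top \emp{A}^{-1} w} \le \left\| \emp{A}^{-\top}\iniphi \right\|_{\hatSigma} \left\| w \right\|_{\hatSigma^{-1}}
\]
is valid, interpreting $\hatSigma^{-1}$ as the pseudo-inverse on $\mathrm{range}(\hatSigma)$ when $\hatSigma$ is singular. By the definition of $\cvrgemp$ in \eqref{eq:cvrgemp}, the first factor equals exactly $\sqrt{\cvrgemp}/(1-\gamma)$, so the entire claim reduces to showing the self-normalized noise estimate $\left\| w \right\|_{\hatSigma^{-1}} \approxleq \Vmax\sqrt{(d+\log(1/\delta))/n}$.

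Third is the concentration step, which is where the main subtlety lies: the whitening matrix $\hatSigma^{-1/2}$ and the estimator $\emp{A}^{-1}$ are built from the same data that generates the noise $w$, so $\emp{A}^{-\top}\iniphi$ cannot be treated as a fixed direction. I would resolve this by conditioning on the \emph{first-stage} variables $\{(s_i,a_i)\}_{i=1}^{n}$: under this conditioning the instruments $Z_i$, the matrices $\hatSigma,\emp{A}$, and the vector $\iniphi$ are all deterministic, while the residuals $\xi_i$ remain independent with conditional mean zero and range $\approxleq\Vmax$. For a fixed unit vector $u$, the scalar $u^\top\hatSigma^{-1/2}w = \tfrac{1}{n}\sum_i(u^\top\hatSigma^{-1/2}Z_i)\xi_i$ is then a sum of independent bounded terms whose coefficients satisfy $\sum_i(u^\top\hatSigma^{-1/2}Z_i/n)^2 = \tfrac{1}{n}\|u\|_2^2 = \tfrac{1}{n}$, using $\sum_i Z_iZ_i^\top = n\hatSigma$. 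Hoeffding's inequality (applied conditionally) then gives $\abs{u^\top\hatSigma^{-1/2}w}\approxleq\Vmax\sqrt{\log(1/\delta')/n}$ for each fixed $u$. Taking a $\tfrac{1}{2}$-net of the unit sphere of $\mathrm{range}(\hatSigma)$, of cardinality $e^{O(d)}$, a union bound with $\delta' = \delta\cdot e^{-O(d)}$, and the standard net-to-supremum comparison then upgrade this to $\left\| w \right\|_{\hatSigma^{-1}} = \sup_{\|u\|_2=1}u^\top\hatSigma^{-1/2}w \approxleq \Vmax\sqrt{(d+\log(1/\delta))/n}$, and combining with the Cauchy--Schwarz split yields the stated bound.

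The main obstacle is precisely the data dependence just described; conditioning on the instruments is what decouples the empirical geometry from the noise, and the price for controlling the supremum over \emph{all} directions simultaneously---rather than a single fixed direction, as in the dimension-free population estimate of \cref{thm:mainthm-pop}---is the extra $\sqrt{d}$ factor that appears here but not there. A secondary technical point is the graceful treatment of rank-deficient $\hatSigma$ or $\emp{A}$, which the $\cvrgemp=+\infty$ convention together with the restriction of all relevant vectors to $\mathrm{range}(\hatSigma)$ handles cleanly.
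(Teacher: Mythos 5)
Your proposal is correct and follows essentially the same route as the paper's own proof: the identity $J_{\estQ}(\pi)-J(\pi)=\iniphi^\top\emp{A}^{-1}\prn*{\emp{A}\theta^\star-\emp{b}}$, the same Cauchy--Schwarz split in the $\hatSigma$-geometry identifying the first factor with $\sqrt{\cvrgemp}/(1-\gamma)$, and the same fixed-design Hoeffding-plus-covering argument giving $\nrm*{\hatSigma^{-1/2}(\emp{A}\theta^\star-\emp{b})}_2\approxleq\Vmax\sqrt{(d+\log(1/\delta))/n}$. One inessential slip: conditioning on $\{(s_i,a_i)\}$ makes $\hatSigma$ deterministic but \emph{not} $\emp{A}$ (which depends on $s_i',a_i'$); your argument survives because $\emp{A}^{-\top}\iniphi$ is never actually treated as a fixed direction---it is absorbed into the empirical quantity $\cvrgemp$, and the noise is bounded uniformly over all directions via the net, which is precisely why the $d$ factor appears.
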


\paragraph{Coverage Parameter and Tightness} Our main results consist of two guarantees where $\cvrgemp$ and $\cvrg$ play the role of the coverage parameters. The $(1-\gamma)^2$ are normalization constants, %
whose roles will become clear in \cref{sec:interpret}. 
\cref{eq:pop_bound_lazy} provides a bound that largely recovers that of linear regression in \cref{eq:lr_bound}: when $\gamma=0$, we have $A = \Sigma$, and therefore $\cvrg = \iniphi^\top \Sigma^{-1} \phi_0 = \|\iniphi\|_{\Sigma^{-1}}^2$, which is the (often more desirable) population version  of the $\|\iniphi\|_{\hatSigma^{-1}}$ term in \cref{eq:lr_bound}. Since \cref{eq:lr_bound} is well-established for linear regression (and already tighter than commonly used LR bound in terms of $d$ dependence), this demonstrates the tightness of our bound in the $\gamma=0$ regime up to the lower-order term. When $\gamma > 0$, our bound is also tight when compared to existing OPE guarantees for general function approximation analyzed under Bellman completeness as we will see in \cref{sec:completeness}.

A caveat of \cref{eq:pop_bound_lazy} is that it still depends on quantities like $\sigma_{\min}(A)$ albeit in the lower-order term ($o(1/\sqrt{n})$) and the burn-in condition ($n\ge n_0$), raising the suspicion that $\sigma_{\min}(A)$ is still relevant to coverage. To address this, we provide \cref{eq:emp_bound} that depends on the empirical coverage parameter $\cvrgemp$ similar to \cref{eq:lr_bound}, with no lower-order terms whatsoever, showing that $\sigma_{\min}(A)$ can be completely removed from the bound. \cref{eq:emp_bound} is looser than \cref{eq:pop_bound_lazy} in $d$ dependence, as it provides a guarantee that holds for all possible initial distribution $\ini$ simultaneously which can be used to establish function-estimation guarantees (\cref{app:fnestimation});  see the discussion of directional bounds below \cref{eq:lr_bound} and in \cref{app:dim-free}. 
In \cref{app:lossmin}, we provide yet another result that eliminates the dependence on $1/\sigma_{\min}(A)$ in the population $\cvrg$ bound (\cref{thm:mainthm-pop}), in exchange for dependence on standard quantities relevant to the analysis of LR under random design (namely $1/\lambdamin(\Sigma)$, which we expect can be further sharpened to leverage-score-type conditions \citep{hsu2011analysis,perdomo_sharp_2022}).

\section{Understanding the Coverage Parameter} \label{sec:interpret}

In this section we provide interpretations of $\cvrg$ as a coverage parameter and discuss how it addresses the issues mentioned in \cref{sec:intro}. First, it is clear that $\cvrg$ is invariant to feature rescaling, thanks to the introduction of $\phi_0$. That said, the expression $\cvrg = (1-\gamma)^2 \iniphi^\top A^{-1} \Sigma A^{-\top} \iniphi$ does not lend itself to easy intuition, let alone how it connects to and unifies existing results. 

\paragraph{Warm-up: the tabular case} %
We start with the tabular setting and show that $\cvrg$ becomes something familiar, offering some basic intuitions as well as assurance that $\cvrg$, a quantity that falls out of the IV concentration analyses, holds meaningful interpretations in RL. The key is to rewrite $\cvrg/(1-\gamma)^2$ as
\begin{align} \label{eq:Bpi}
\iniphi^\top A^{-1} \Sigma A^{-\top} \iniphi = \iniphi^\top (I - \gamma \dyn)^{-\top} \Sigma^{-1} (I - \gamma \dyn)^{-1} \iniphi, \quad \textrm{where} ~~ \dyn:= (\Sigma^{-1} \Sigmacr)^\top.
\end{align}
The tabular setting can be viewed as a special case of linear function approximation with $d=|\Scal\times\Acal|$, and $\phi(s,a) = \mathbf{e}_{s,a}$ is the unit vector with the $(s,a)$-th coordinate being $1$ and all other coordinates being $0$. In this case, $\phi_0$ is simply the vector representation of the initial state-action distribution $\oc_0^\pi$, where $(s_0, a_0) \sim \oc_0^\pi ~\Leftrightarrow~ s_0 \sim \ini, a_0 \sim \pi$; $\dyn$ is an $|\Scal\times\Acal|\times|\Scal\times\Acal|$ matrix with $[\dyn]_{(s',a'),(s,a)} = P^{\pi}(s',a'|s,a) = P(s'|s,a) \pi(a'|s')$, i.e., the transition kernel of the Markov chain over $\Scal\times\Acal$ induced by policy $\pi$.\footnote{For cleanliness of notation, we are taking the convention that current state is on the column, and next state is on the row (as standard in stochastic processes). In RL it is often the other way around.} %
Put together, we have the textbook identity
$$
(1-\gamma) (I - \gamma \dyn)^{-1} \phi_0 = \oc^\pi,
$$
where we recall the definition of the discounted occupancy $\oc^\pi$ from \cref{eq:oc-defn}. 
Plugging it into $\cvrg$, 
$$
\textstyle \cvrg = (\oc^\pi)^\top \Sigma^{-1} \oc^\pi = 
\sum_{s,a} \oc^\pi(s,a)^2 / \data(s,a) =  \En_{\data}[(\oc^\pi/\data)^2], 
$$
which is the $\chi^2$-divergence between $\oc^\pi$ and $\data$ up to a constant shift and has appeared as a tight coverage parameter \citep[especially when compared to $\|\oc^\pi/\data\|_\infty$;][]{xie2020q} when coverage is measured based on density ratios. 

\subsection{General Interpretation} \label{sec:lin_dyn}
We now offer the interpretation for the general setting. Note that $\dyn$ can be viewed as the multi-variate linear-regression solution of the regression problem $\phi(s,a) \mapsto \phi(s',\pi)$, thus
\begin{align}
\En_{s'\sim P(\cdot|s,a)}[\phi(s',\pi)] \approx \dyn \, \phi(s,a).
\end{align}
In general, the above relationship is only approximate (in \cref{sec:completeness} we will see that it becomes \textit{exact} under an additional assumption), although $\dyn$ is the best linear predictor. This leads to the following interpretation of $\cvrg$ (see \cref{app:proof-lin_dyn} for the proof):
\begin{restatable}{proposition}{lindyn}\label{prop:lin_dyn}
Define a deterministic linear dynamical system $\{\xt\}_{t\ge0}$, with $\xt[0] := \phi_0$, and $\forall t\ge0$,
$$
\xt[t+1] = \dyn \, \xt.
$$
When $\rho(\dyn) < 1/\gamma$,\footnote{Recall that $\rho$ denotes the spectral radius of a matrix.} define the feature occupancy in $\dyn$ as $\ocphi := (1-\gamma) \sum_{t\ge0}\gamma^t \xt$, then 
$$
\cvrg = (\ocphi)^\top \Sigma^{-1} \ocphi.
$$
\end{restatable}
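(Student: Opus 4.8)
The goal is to show $\cvrg = (\ocphi)^\top \Sigma^{-1} \ocphi$, where $\ocphi = (1-\gamma)\sum_{t\ge 0}\gamma^t \xt$ is the discounted occupancy of the linear dynamical system $\xt[t+1] = \dyn\,\xt$ started at $\xt[0]=\phi_0$. The plan is to establish the closed-form identity $\ocphi = (1-\gamma)(I - \gamma\dyn)^{-1}\phi_0$ and then feed it into the already-derived algebraic rewriting of $\cvrg$ from \cref{eq:Bpi}.

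First I would unroll the dynamical system: since $\xt = \dyn^t \phi_0$ by induction, the feature occupancy becomes
$$
\ocphi = (1-\gamma)\sum_{t\ge 0}\gamma^t \dyn^t \phi_0 = (1-\gamma)\left(\sum_{t\ge 0}(\gamma\dyn)^t\right)\phi_0.
$$
The Neumann series $\sum_{t\ge 0}(\gamma\dyn)^t$ converges precisely when $\rho(\gamma\dyn)=\gamma\,\rho(\dyn) < 1$, i.e.\ under the stated hypothesis $\rho(\dyn) < 1/\gamma$; in that case it sums to $(I - \gamma\dyn)^{-1}$. This yields the key identity
$$
\ocphi = (1-\gamma)(I - \gamma\dyn)^{-1}\phi_0,
$$
which is the exact linear-system analogue of the textbook discounted-occupancy identity used in the tabular warm-up.

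Next I would substitute this into the target quadratic form. Using the identity for $\ocphi$,
$$
(\ocphi)^\top \Sigma^{-1}\ocphi = (1-\gamma)^2\,\phi_0^\top (I-\gamma\dyn)^{-\top}\Sigma^{-1}(I-\gamma\dyn)^{-1}\phi_0,
$$
which is exactly $\cvrg$ by the rewriting in \cref{eq:Bpi}, namely $\cvrg = (1-\gamma)^2\,\iniphi^\top(I-\gamma\dyn)^{-\top}\Sigma^{-1}(I-\gamma\dyn)^{-1}\iniphi$ with $\dyn=(\Sigma^{-1}\Sigmacr)^\top$. This completes the chain of equalities.

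The only genuinely delicate point—and the step I would treat most carefully—is the convergence and well-definedness of the Neumann series: I must confirm that $\rho(\dyn)<1/\gamma$ both guarantees $\ocphi$ is a finite, well-defined vector (so the infinite sum in the statement actually converges) and ensures $I-\gamma\dyn$ is invertible so that the closed form is legitimate. Invertibility of $A = \Sigma(I-\gamma\dyn)$ is already granted by \cref{ass:Ainverse} together with the invertibility of $\Sigma$, so $(I-\gamma\dyn)^{-1}$ exists regardless; the spectral-radius condition is what upgrades mere invertibility to summability of the series. Everything else is routine linear algebra, and the heavy lifting—the algebraic identity $\cvrg = (1-\gamma)^2\iniphi^\top(I-\gamma\dyn)^{-\top}\Sigma^{-1}(I-\gamma\dyn)^{-1}\iniphi$—has already been carried out in \cref{eq:Bpi}, so the proof reduces to recognizing the geometric series and matching terms.
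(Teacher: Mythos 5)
Your proof is correct and follows essentially the same route as the paper's: both factor $A = \Sigma(I-\gamma(\dyn)^\top)$ to obtain the rewriting in \cref{eq:Bpi}, expand $(I-\gamma\dyn)^{-1}$ as the Neumann series $\sum_{t\ge 0}\gamma^t(\dyn)^t$ under the hypothesis $\rho(\dyn)<1/\gamma$, and identify $(I-\gamma\dyn)^{-1}\phi_0 = \frac{1}{1-\gamma}\ocphi$. Your added remark that \cref{ass:Ainverse} already guarantees invertibility of $I-\gamma\dyn$ (with the spectral-radius condition only needed for summability of the series) is a correct refinement, but the argument is otherwise identical to the paper's.
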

The proposition rewrites $\cvrg$ in a form that closely resembles the standard notion of linear coverage in the literature (\cref{eq:Clin}), where we see the expected feature occupancy under the target policy ($\ocphi$ here) measured under the data-covariance norm $\Sigma^{-1}$; see \cref{sec:completeness}. Accordingly, we call $\cvrg$ the \emph{\textbf{\cvrgname}}.
The difference is that here the feature occupancy is defined in a deterministic dynamical system $\dyn$ instead of the true MDP. Furthermore, while the latter, $\phi^\pi:= \En_{(s,a)\sim \oc^\pi}[\phi(s,a)]$, is always bounded, %
$\ocphi$, on the other hand, may not be bounded in general and $\{\xt\}_{t\ge0}$ may actually diverge. 
The connection between LSTD and the linear dynamical system $\dyn$ was first identified by \citet{parr2008analysis} (see also \citet{duan_minimax-optimal_2020}), though they focused on the algebraic equivalence between LSTD and the model-based solution in $\dyn$, and did not perform finite-sample analyses or connect this to the notion of coverage. 

\paragraph{Comparison to \citet{perdomo_sharp_2022} and subsuming known tractable conditions} 
Our bound shows that linear OPE under realizability is tractable as long as $\cvrg$ is small, which sharpens and generalizes existing understanding of the tractability of linear OPE. %
\iclr{In particular, 
we compare to
\citet{perdomo_sharp_2022}, whose analysis was shown to be sharp and subsume many prior conditions known in the literature, including on-policy sampling \citep{tsitsiklis_analysis_1997}, %
Bellman completeness (c.f.~\cref{sec:completeness}), low distribution shift \citep{wang2021instabilities}, symmetric stability \citep{mou2020optimal}, and contractivity \citep{kolter2011fixed} (see the discussion in \citeauthor{perdomo_sharp_2022} for formal definitions). Furthermore, we can leverage the interpretation of our coverage coefficient to provide a novel and weaker on-policy-type condition that enables tractability. 
\begin{restatable}{proposition}{onpolicy}\label{prop:on-policy}
Assume $\phi$ contains a bias term, that is, there exists $\theta_0 \in \RR^d$ such that $\phi(s,a)^\top \theta_0 \equiv 1\, \forall s,a$. If 
$\rho(\dyn) < 1/\gamma$ and 
$\EE_{\data}[\phi(s,a)] = \EE_{\data}[\phi(s',\pi)] = \iniphi,$ 
then 
$\cvrg \le 1.$
\end{restatable}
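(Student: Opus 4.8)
The plan is to exploit the bias-term assumption to evaluate the relevant moment-matrix products in closed form, and then recognize that the on-policy condition makes $\iniphi$ a fixed point of the induced feature dynamics. Writing $\Sigma = \En_{\data}[\phi\phi^\top]$ and using $\phi(s,a)^\top\theta_0 \equiv 1$, I would first observe that $\Sigma\theta_0 = \En_{\data}[\phi\,(\phi^\top\theta_0)] = \En_{\data}[\phi] = \iniphi$, where the last equality is the hypothesis $\En_{\data}[\phi(s,a)] = \iniphi$. Analogously, since $\phi(s',a')^\top\theta_0 \equiv 1$ as well, I would compute $\Sigmacr^\top\theta_0 = \En_{\data}[\phi(s',a')\,(\phi(s,a)^\top\theta_0)] = \En_{\data}[\phi(s',a')] = \En_{\data}[\phi(s',\pi)] = \iniphi$, again invoking the hypothesis. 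These two identities are the crux: they say $\theta_0$ is mapped to $\iniphi$ by both $\Sigma$ and $\Sigmacr^\top$.

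The conceptual heart of the argument is that, under the on-policy condition, $\iniphi$ is a unit-eigenvalue fixed point of the induced feature dynamics $\dyn = (\Sigma^{-1}\Sigmacr)^\top = \Sigmacr^\top\Sigma^{-1}$. Indeed, $\Sigma^{-1}\iniphi = \theta_0$ by the first identity, so $\dyn\,\iniphi = \Sigmacr^\top\Sigma^{-1}\iniphi = \Sigmacr^\top\theta_0 = \iniphi$. Since $\rho(\dyn) < 1/\gamma$, \cref{prop:lin_dyn} applies and the feature occupancy is $\ocphi = (1-\gamma)\sum_{t\ge0}\gamma^t\dyn^t\iniphi = (1-\gamma)\sum_{t\ge0}\gamma^t\iniphi = \iniphi$, the trajectory being stationary precisely because $\iniphi$ is fixed by $\dyn$. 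Plugging into \cref{prop:lin_dyn} then gives $\cvrg = \iniphi^\top\Sigma^{-1}\iniphi = \iniphi^\top\theta_0 = \theta_0^\top\Sigma\theta_0 = \En_{\data}[(\phi^\top\theta_0)^2] = 1$, so in fact $\cvrg = 1 \le 1$.

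Alternatively, and avoiding \cref{prop:lin_dyn} entirely, I could work directly from the definition $\cvrg = (1-\gamma)^2\iniphi^\top A^{-1}\Sigma A^{-\top}\iniphi$. From the two identities, $A^\top\theta_0 = (\Sigma - \gamma\Sigmacr^\top)\theta_0 = (1-\gamma)\iniphi$, and since $A$ (hence $A^\top$) is invertible by \cref{ass:Ainverse}, this yields $A^{-\top}\iniphi = \tfrac{1}{1-\gamma}\theta_0$. Substituting, $\cvrg = (1-\gamma)^2(A^{-\top}\iniphi)^\top\Sigma(A^{-\top}\iniphi) = \theta_0^\top\Sigma\theta_0 = 1$. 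This second route also clarifies that the conclusion is an exact equality $\cvrg = 1$, of which $\cvrg \le 1$ is the stated weakening.

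I expect the only real pitfalls to be bookkeeping: keeping the two roles of $\Sigmacr$ and $\Sigmacr^\top$ straight (the transpose appears precisely because $\cvrg$ contains $A^{-\top}$ and $\dyn$ is defined with a transpose), and confirming that the bias term legitimately lets one replace $\phi(s',a')^\top\theta_0$ by $1$ inside the expectation defining $\Sigmacr^\top\theta_0$. The hypothesis $\rho(\dyn)<1/\gamma$ is used only to license \cref{prop:lin_dyn} in the first route; the direct computation shows it is not needed for the conclusion beyond the invertibility already guaranteed by \cref{ass:Ainverse}.
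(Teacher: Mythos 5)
Your proposal is correct, and it actually proves something slightly stronger than the stated proposition: under the hypotheses, $\cvrg = 1$ exactly. Your first route is essentially the paper's own argument: the paper likewise shows that $\iniphi$ is a fixed point of $\dyn$ and then invokes \cref{prop:lin_dyn}, though it obtains the fixed-point property by a slightly different mechanism---a general ``regression with a bias term matches means'' fact ($\En[\phi^\top \theta_Y] = \En[Y]$ for the population least-squares solution $\theta_Y$), applied coordinate-wise to $\dyn$ to get $\dyn\,\En_{\data}[\phi(s,a)] = \En_{\data}[\phi(s',a')]$---whereas you get it directly from the two identities $\Sigma\theta_0 = \iniphi$ and $\Sigmacr^\top\theta_0 = \iniphi$. (One small slip in your write-up: the computation of $\Sigmacr^\top\theta_0$ pairs $\theta_0$ with $\phi(s,a)^\top$, so the identity being used is $\phi(s,a)^\top\theta_0 = 1$ rather than $\phi(s',a')^\top\theta_0 = 1$; both hold under the assumption, so nothing breaks.) Your second route is genuinely different from the paper's proof and is arguably the sharper one: plugging $A^{-\top}\iniphi = \theta_0/(1-\gamma)$ (from $A^\top\theta_0 = (1-\gamma)\iniphi$) directly into the definition of $\cvrg$ bypasses \cref{prop:lin_dyn} entirely and shows the conclusion holds without the hypothesis $\rho(\dyn) < 1/\gamma$, requiring only the invertibility already assumed throughout in \cref{ass:Ainverse}. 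It also makes transparent why the answer is exactly $1$: the paper's proof stops at $\cvrg = \iniphi^\top\Sigma^{-1}\iniphi \le 1$ (the inequality following from $\En[\phi]\En[\phi]^\top \preceq \Sigma$), whereas with a bias term this quantity equals $\theta_0^\top\Sigma\theta_0 = \En_{\data}[(\phi^\top\theta_0)^2] = 1$, so the bound in the proposition is attained with equality.
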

Unlike previous on-policy-type conditions that relate the \textit{distributions} of $(s,a)$ and $(s',a')$ (see \cref{app:on-policy} for more discussion), \cref{prop:on-policy} only requires their means to be identical. The intuition is that while $\dyn$ may not be accurate in next-feature predictions on individual $(s,a)$, its population prediction for $\En_{\data}[\phi(s',a')]$ can be correct when $\phi$ contains a bias term. As a consequence, when $(s,a)$ and $(s',a')$ share the same mean feature $\En_{\data}[\phi(s,a)]$ and this coincides with the starting feature $\phi_0$, we can establish that all future $\xt$ are identical to $\En_{\data}[\phi(s,a)]$, leading to coverage in the compressed feature dynamics. 
}

In terms of quantitative rates, \citeauthor{perdomo_sharp_2022} establish that, %
under some regularity assumptions, 
$\nrm{\Sigma^{1/2}(\thetastar - \hattheta)}_2 \approxleq \nicefrac{1}{\sigmamin(I - \gamma \Sigma^{-1/2}\Sigmacr \Sigma^{-1/2})}\cdot \epsstat,$ 
for some $\epsstat$ which is polynomial in $d, 1/n, \log(1/\delta),$ and spectral properties of $\Sigma$. While they only show function-estimation guarantee on $\data$ (c.f.~\cref{app:fnestimation}), this intermediate result immediately implies a return-estimation guarantee comparable to ours:
\[
\abs*{J_{\estQ}(\pi) - J(\pi)} \leq \frac{\nrm{\phi_0}_{\Sigmainv}}{\sigmamin(I - \gamma \Sigma^{-1/2}\Sigmacr \Sigma^{-1/2})} \cdot \epsstat.
\]
As we have already shown that our statistical rate is tight, it suffices to compare our $\cvrg$ to their multiplicative factor in front of $\epsstat$. In particular, we establish the following relationship (see \cref{app:proof-perdomoetal} for the proof). 
\begin{restatable}{proposition}{perdomoetal} \label{prop:perdomoetal}
We have that:
\[
\sqrt{\cvrg} = (1-\gamma)\nrm{(I - \gamma \Sigma^{-1/2}\Sigmacr \Sigma^{-1/2})^{-\top}\Sigma^{-1/2}\phi_0}_2 \leq (1-\gamma)\frac{\nrm{\phi_0}_{\Sigmainv}}{\sigmamin(I - \gamma \Sigma^{-1/2}\Sigmacr \Sigma^{-1/2})}.
\]
Furthermore, the gap between the LHS and the RHS may be arbitrarily large. 
\end{restatable}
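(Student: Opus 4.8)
The plan is to reduce the statement to clean linear algebra by symmetrizing $A$ with respect to $\Sigma$. Write $M := I - \gamma\Sigma^{-1/2}\Sigmacr\Sigma^{-1/2}$, which is well-defined since $\Sigma = \Sigmacov$ is a covariance matrix that is invertible by \cref{ass:Ainverse}, hence positive definite, so $\Sigma^{1/2}$ and $\Sigma^{-1/2}$ exist and are symmetric. The first step is the factorization
$$
A = \Sigma - \gamma\Sigmacr = \Sigma^{1/2}\bigl(I - \gamma\Sigma^{-1/2}\Sigmacr\Sigma^{-1/2}\bigr)\Sigma^{1/2} = \Sigma^{1/2}M\Sigma^{1/2}.
$$
Inverting gives $A^{-1} = \Sigma^{-1/2}M^{-1}\Sigma^{-1/2}$ and $A^{-\top} = \Sigma^{-1/2}M^{-\top}\Sigma^{-1/2}$, so the central $\Sigma$ telescopes, $A^{-1}\Sigma A^{-\top} = \Sigma^{-1/2}M^{-1}M^{-\top}\Sigma^{-1/2}$. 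Substituting into $\cvrg = (1-\gamma)^2\iniphi^\top A^{-1}\Sigma A^{-\top}\iniphi$ and recognizing the quadratic form as a squared Euclidean norm yields $\cvrg = (1-\gamma)^2\nrm{M^{-\top}\Sigma^{-1/2}\iniphi}_2^2$, which is the claimed equality after taking square roots.

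For the inequality I would apply submultiplicativity of the operator norm to this equality: $\nrm{M^{-\top}\Sigma^{-1/2}\iniphi}_2 \le \nrm{M^{-\top}}_{\op}\,\nrm{\Sigma^{-1/2}\iniphi}_2$. Two identifications finish the step. First, $\nrm{\Sigma^{-1/2}\iniphi}_2 = \nrm{\iniphi}_{\Sigmainv}$ by the definition of the Mahalanobis norm. Second, since transposition and inversion both preserve singular values, $\nrm{M^{-\top}}_{\op} = \nrm{M^{-1}}_{\op} = 1/\sigmamin(M)$. Multiplying through by $(1-\gamma)$ recovers the right-hand side.

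It remains to show the inequality can be arbitrarily loose, and this is the delicate part. The only slack is the operator-norm bound applied to $v := \Sigma^{-1/2}\iniphi$, which is tight only when $v$ aligns with the top singular direction of $M^{-\top}$ (equivalently, the $\sigmamin$-direction of $M$); if instead $v$ aligns with the $\sigmamax$-direction of $M$, the ratio of the two sides approaches the condition number $\sigmamax(M)/\sigmamin(M)$. Because $\Sigmacr$ need not be symmetric in the off-policy regime, $M$ can be made arbitrarily non-normal, with unbounded condition number while $\det M$ stays bounded away from the degenerate regime. Concretely, I would exhibit a two-state, single-action chain with tabular features $\phi(s) = \be_s$, a non-reversible transition kernel $P$, and a data distribution $\data$ concentrated increasingly on one state; a direct computation gives $M = I - \gamma\diag(\data)^{1/2}P\diag(\data)^{-1/2}$, whose single large off-diagonal entry drives $\sigmamin(M)\to 0$ at fixed $\det M$. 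Choosing $\ini$ so that $\Sigma^{-1/2}\iniphi$ points along the well-conditioned singular direction then keeps the left-hand side bounded while the right-hand side diverges. The main obstacle is exactly this last step: I must verify that $\Sigma,\Sigmacr,\iniphi$ genuinely arise from a valid MDP and data distribution rather than being posited abstractly, and that the freedom in $\ini$ suffices to align $\Sigma^{-1/2}\iniphi$ with the desired singular vector of $M^{-\top}$. Both are arrangeable in the two-state construction, but they must be checked simultaneously, since the same parameters ($P$, $\data$, $\ini$) control $M$ and $\iniphi$ at once.
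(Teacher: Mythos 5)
Your proof of the equality and of the operator-norm inequality is correct and coincides with the paper's own argument: the same factorization $A = \Sigma^{1/2}\prn{I-\gamma\Sigma^{-1/2}\Sigmacr\Sigma^{-1/2}}\Sigma^{1/2}$, the same telescoping of the inner $\Sigma$, and the same submultiplicativity step with $\nrm{M^{-\top}}_2 = 1/\sigmamin(M)$ for $M := I-\gamma\Sigma^{-1/2}\Sigmacr\Sigma^{-1/2}$.

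The genuine gap is in the separation example, and it is exactly the step you flagged as delicate. In your two-state tabular construction the left-hand side \emph{cannot} stay bounded, so the verification you defer ("choose $\ini$ so that $\Sigma^{-1/2}\iniphi$ aligns with the well-conditioned direction") is not arrangeable. Concretely, take $\data = (1-\eta,\eta)$ and a fixed kernel with $P(2\mid 1)>0$, so that $M = I - \gamma\,\diag(\data)^{1/2}P\,\diag(\data)^{-1/2}$ has off-diagonal entry $-\gamma\sqrt{(1-\eta)/\eta}\,P(2\mid1)$. For \emph{any} valid initial distribution, $\Sigma^{-1/2}\phi_0 = \prn{\ini(1)/\sqrt{1-\eta},\;\ini(2)/\sqrt{\eta}}^\top$ has nonnegative entries, and the second coordinate of $M^{-\top}\Sigma^{-1/2}\phi_0$ equals $\frac{1}{\det M\,\sqrt{\eta}}\brk{\gamma P(2\mid1)\,\ini(1)\sqrt{1-\eta}/\sqrt{1-\eta} + (1-\gamma P(1\mid1))\,\ini(2)}$, a sum of two nonnegative terms whose coefficients are bounded below by constants; since $\ini(1)+\ini(2)=1$, no choice of $\ini$ produces cancellation, and the LHS is $\gtrsim 1/\sqrt{\eta}\to\infty$. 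The conceptual reason is the paper's own warm-up: in the tabular case $\cvrg = \En_{\data}[(\oc^\pi/\data)^2]$, and a state that $\pi$ reaches with constant probability but that carries vanishing data mass forces this $\chi^2$ ratio to diverge --- in tabular instances the ill-conditioning of $M$ and poor coverage are the \emph{same} phenomenon, so they cannot be decoupled. (Your construction does still satisfy the literal claim, since the RHS diverges at rate $1/\eta$ versus $1/\sqrt{\eta}$ for the LHS, so the difference blows up; but that is weaker than what you asserted and than what the proposition is meant to convey, namely a bounded coverage parameter against a vacuous bound.)

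Relatedly, your intuition that non-normality of $M$ is the mechanism is off: in the paper's counterexample $M = \diag(\varepsilon,1)$ is symmetric. The paper escapes the tabular obstruction by using genuine linear function approximation: the data features are orthonormal after scaling (so $\Sigma = I$ and every feature direction is perfectly covered), and $\sigmamin(M)\to 0$ is driven by next-state features of magnitude $\approx 2/\gamma$, i.e., by the compressed dynamics $\dyn$ having an eigenvalue approaching $1/\gamma$ in the direction $e_1$ --- while $\phi_0 = e_2$ lies in a direction the dynamics never touch. That decoupling of "where $M$ is nearly singular" from "where $\phi_0$ lives" is what your tabular chain structurally forbids, and is the missing ingredient in your proposal.
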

This demonstrates that our coverage parameter, in addition to subsuming known tractability conditions for linear OPE, may lead to return-estimation guarantees that are sharper than those of \citet{perdomo_sharp_2022} (in an instance-dependent sense). This is due to our sensitivity on $\phi_0$ and its relationship to the spectrum of the matrix $(I - \gamma \Sigma^{-1/2}\Sigmacr \Sigma^{-1/2})$. %

\subsection{Recovering Aggregated Concentrability} \label{sec:abstraction}

State abstractions are a special case of linear function approximation, where each state $s$ is mapped to one of the $K$ abstract states, $\psi(s) \in [K]:=\{1, \ldots, K\}$, effectively treating states with the same $\psi(s)$ as aggregated and equivalent to reduce the size of the state space. %
Under the abstraction scheme, the natural model-based solution coincides with LSTDQ with $\phi(s,a) = \be_{\psi(s),a}$. %
When we only assume realizability (\cref{ass:realizability}), %
the abstraction is \emph{$Q^\pi$-irrelevant} \citep{li2006towards}, and has been analyzed by \citet{xie2020batch, zhang2021towards, jia2024offline} with the following notion of \textit{aggregated concentrability} as its coverage parameter (see \cref{app:agg-con} for details): %
\begin{definition}[Aggregated concentrability] \label{def:abs-model}
Given $\psi:\Scal\to[K]$, define the abstract MDP $\Mabs = (\Sabs,\Acal, \Pabs, \Rabs, \gamma, \iniabs)$ where\footnote{The definition of $\Rabs$ is irrelevant here and thus omitted.} $\Sabs = [K]$, the initial distribution is $\iniabs(k) = \sum_{s: \psi(s)=k} \ini(s)$, and 
\begin{align*}
\Pabs(k'|k,a) = \frac{\data(s,a) \cdot \sum_{s: \psi(s)= k} \left(\sum_{s': \psi(s') = k'} P(s'|s,a)\right) }{\sum_{s: \psi(s)= k} \data(s,a)}.
\end{align*}
Consider $\pi$ whose action distribution depends on $s$ only through $\psi(s)$, with a slight abuse of notation we write $\pi(\cdot|s) = \piabs(\cdot|k), \forall k = \psi(s)$. \emph{Aggregated concentrability} refers to the size of $\oc_{\Mabs}^{\piabs} / \dataphi$, either measured in $\|\cdot\|_\infty$ or $\chi^2$, where $\oc_{\Mabs}^{\piabs}$ is discounted occupancy in MDP $\Mabs$, and $\dataphi(k,a) = \sum_{s:\psi(s) = k} \data(s,a) = \En_{\data}[\phi(s,a)]$.
\end{definition}
In this definition, $\Pabs$ is the dynamics over the abstract state space, and it is easy to see that the transition kernel of abstract-state pairs under $\piabs$, $\Pabs^{\piabs}(k',a'|k,a) = \Pabs(k'|k,a) \piabs(a'|k')$, coincides with the definition of $\dyn$ (see \cref{prop:lin_dyn}) specialized to this setting, and $\Sigma = \textrm{diag}(\dataphi)$. As a result, $\cvrg$ recovers the $\chi^2$ version of aggregated concentrability (see \cref{app:proof-state-abs} for the proof):
\begin{restatable}{proposition}{stateabs}\label{prop:state-abs}
When $\phi(s,a) = \be_{\psi(s),a}$ is induced by a state abstraction $\psi$ and $\pi$ depends on $s$ only through $\psi(s)$, recall that $\dataphi = \En_{\data}[\phi]$ is a distribution over $(k,a)$ pairs, and we have
$$
\cvrg = \En_{(k,a)\sim \dataphi} [(\oc_{\Mabs}^{\piabs} / \dataphi)^2].
$$
\end{restatable}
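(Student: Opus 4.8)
The plan is to reduce the claim to \cref{prop:lin_dyn}, which already expresses $\cvrg = (\ocphi)^\top \Sigma^{-1}\ocphi$ in terms of the feature occupancy $\ocphi$ of the induced linear dynamical system $\dyn$. The strategy is to show that, in the state-abstraction setting, all three ingredients $\phi_0$, $\Sigma$, and $\dyn$ literally coincide with the corresponding abstract-MDP objects, so that $\ocphi$ is exactly the abstract discounted occupancy $\oc_{\Mabs}^{\piabs}$ (viewed as a vector over $(k,a)$ pairs). The weighted-$\chi^2$ form then follows by a direct diagonal computation, exactly mirroring the tabular warm-up around \cref{eq:Bpi}.

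First I would record the two easy identifications. Since $\phi(s,a)=\be_{\psi(s),a}$ and $\pi$ factors through $\psi$, the vector $\phi_0=\En_{s_0\sim\ini,a_0\sim\pi}[\phi(s_0,a_0)]$ has $(k,a)$-coordinate $\Pr[\psi(s_0)=k,a_0=a]=\iniabs(k)\piabs(a|k)$, i.e.\ $\phi_0$ is the initial state-action distribution of $\Mabs$ under $\piabs$. Likewise $\Sigmacov=\En_{\data}[\phi\phi^\top]$ is diagonal with $(k,a)$-entry $\sum_{s:\psi(s)=k}\data(s,a)=\dataphi(k,a)$, so $\Sigma=\diag(\dataphi)$.

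The main (and only nonroutine) step is to verify that $\dyn=(\Sigma^{-1}\Sigmacr)^\top$ is exactly the transition kernel $\Pabs^{\piabs}$ of abstract state-action pairs. I would compute $\Sigmacr=\En_{\data}[\phi(s,a)\phi(s',a')^\top]$ entrywise: using $a'\sim\pi(\cdot|s')=\piabs(\cdot|k')$ for $k'=\psi(s')$, its $((k,a),(k',a'))$-entry factorizes as $\big(\sum_{s:\psi(s)=k}\data(s,a)\sum_{s':\psi(s')=k'}P(s'|s,a)\big)\piabs(a'|k')$. Left-multiplying by $\Sigma^{-1}=\diag(1/\dataphi)$ divides row $(k,a)$ by $\dataphi(k,a)=\sum_{s:\psi(s)=k}\data(s,a)$, producing precisely $\Pabs(k'|k,a)\piabs(a'|k')=\Pabs^{\piabs}(k',a'|k,a)$ from \cref{def:abs-model}; transposing matches the column-current/row-next convention of \cref{eq:Bpi}. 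I expect this bookkeeping---keeping the data action versus next action straight and confirming the normalization matches the definition of $\Pabs$---to be the one place that needs care.

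With these identifications in hand, $\dyn$ is a (column-)stochastic matrix, so $\rho(\dyn)=1<1/\gamma$ and the feature occupancy is well defined. The textbook identity $(1-\gamma)(I-\gamma\dyn)^{-1}\phi_0=\oc_{\Mabs}^{\piabs}$ applied in $\Mabs$ then gives $\ocphi=\oc_{\Mabs}^{\piabs}$. Plugging into \cref{prop:lin_dyn} with $\Sigma=\diag(\dataphi)$ yields $\cvrg=(\oc_{\Mabs}^{\piabs})^\top\diag(\dataphi)^{-1}\oc_{\Mabs}^{\piabs}=\sum_{k,a}\oc_{\Mabs}^{\piabs}(k,a)^2/\dataphi(k,a)=\En_{(k,a)\sim\dataphi}[(\oc_{\Mabs}^{\piabs}/\dataphi)^2]$, completing the proof.
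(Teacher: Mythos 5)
Your proposal is correct and follows essentially the same route as the paper's proof: entrywise computation showing $\Sigma=\diag(\dataphi)$ and $\Sigmacr=\Sigma(\Pabs^{\piabs})^\top$ (hence $\dyn=\Pabs^{\piabs}$), identification of $\phi_0$ with the initial abstract state-action distribution, and then the occupancy identity $(1-\gamma)(I-\gamma\dyn)^{-1}\phi_0=\oc_{\Mabs}^{\piabs}$ plugged into the quadratic form. The only cosmetic difference is that you invoke \cref{prop:lin_dyn} and explicitly verify $\rho(\dyn)=1<1/\gamma$ via stochasticity, whereas the paper works directly from the algebraic identity \eqref{eq:Bpi}; both are sound.
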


\begin{figure}[t]
\centering
\includegraphics[width=0.6\textwidth]{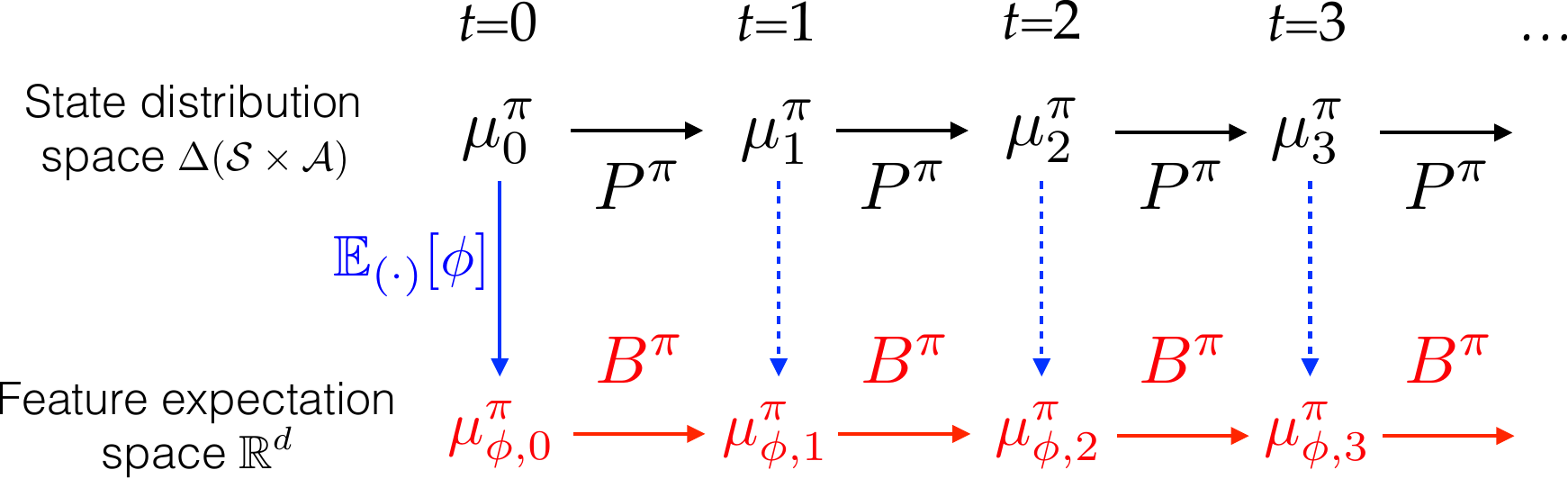} %
\caption{Illustration of the evolution of occupancies under the true dynamics $P^\pi$ (top row) and that of features under the compressed dynamics $\dyn$ (bottom row). Under Bellman completeness, the dashed blue arrows hold and two routes ($\to \ldots \to {\color{blue} \downarrow}$ vs.~${\color{blue} \downarrow}  {\color{red}\to \ldots \to}$) yield the same expected feature vectors, but they are generally different without such an assumption. \label{fig:two-routes}}
\end{figure}

\subsection{Recovering Standard Linear Coverage under Bellman-Completeness} \label{sec:completeness}

Prior results on abstractions leave an intriguing question open: they measure coverage by analyzing error propagation in $\Mabs$, which a lower-dimensional and approximate model \textbf{compressed} from $M$ by $\psi$, as evidenced by $\oc_{\Mabs}^{\piabs}$ in the definition of aggregated concentrability; this is also consistent with our results in \cref{sec:lin_dyn} where occupancy is measured in the compressed linear dynamical system $\dyn$. 
On the other hand, the mainstream notion of coverage in linear OPE, obtained under the Bellman-completeness assumption, is $\Clin = (\phi^\pi)^\top \Sigma^{-1} \phi^\pi$ (\cref{eq:Clin}), 
which is concerned with error propagation in the \textbf{true dynamics $M$} since $\phi^\pi$ is defined w.r.t.~the occupancy $\oc^\pi$ in $M$. While anecdotally it has been the general perception from the community that error propagation in compressed models (as in $Q^\pi$-irrelevant abstractions) was an exception to the typical error propagation analyses, our results below show that results that are seemingly disconnected with each other can be elegantly unified through the following proposition:

\begin{restatable}{proposition}{completeness}\label{prop:completeness}
Let $\Fcal_\phi:= \{\phi^\top \theta: \theta\in\RR^d\}$ be the space of functions linear in $\phi$. Assume $\Fcal_\phi$ satisfies Bellman-completeness (\cref{ass:complete}). %
Then, (1) $\dyn$ becomes an exact model for next-feature prediction, i.e.,
$
\En_{s'\sim P(\cdot|s,a)}[\phi(s',\pi)] = \dyn\, \phi(s,a),
$ 
(2) $\ocphi = \phi^\pi$, 
(3) $\rho(\dyn) \le 1$, 
and (4) %
$$
\cvrg = \Clin = (\phi^\pi)^\top \Sigma^{-1} \phi^\pi.
$$
\end{restatable}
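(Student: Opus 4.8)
The plan is to prove the four claims in the order (1), (3), (2), (4), since each later part leans on the earlier ones. For (1), I would start from Bellman-completeness applied to the linear class: for any $\theta_1,\theta_2\in\RR^d$, both $\Tcal^\pi(\phi^\top\theta_1)$ and $\Tcal^\pi(\phi^\top\theta_2)$ lie in $\Fcal_\phi$, and subtracting them cancels the reward term, leaving the function $(s,a)\mapsto \gamma\,\En_{s'\sim P(\cdot|s,a)}[\phi(s',\pi)]^\top(\theta_1-\theta_2)$ in $\Fcal_\phi$. Since this holds for every direction $v=\theta_1-\theta_2$, and the assignment from $v$ to the corresponding coefficient vector is itself linear, there must exist a single matrix $W\in\RR^{d\times d}$ with $\En_{s'\sim P(\cdot|s,a)}[\phi(s',\pi)]=W\phi(s,a)$ for all $(s,a)$. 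To identify $W$ with $\dyn=(\Sigma^{-1}\Sigmacr)^\top$, I would substitute this exact relation into the data moment, $\Sigmacr=\En_{\data}[\phi(s,a)\phi(s',a')^\top]=\En_{\data}[\phi(s,a)\,\En[\phi(s',a')\mid s,a]^\top]=\Sigma W^\top$, so that $W^\top=\Sigma^{-1}\Sigmacr$ (using invertibility of $\Sigma$) and hence $W=\dyn$.

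For (3), I would view the one-step averaging operator $\cP^\pi f(s,a):=\En_{s'\sim P(\cdot|s,a),\,a'\sim\pi}[f(s',a')]$. By (1), $\cP^\pi$ maps $\Fcal_\phi$ into itself, and in the $\theta$-coordinates it acts as $\theta\mapsto\dyn^\top\theta$; hence the eigenvalues of $\dyn$ coincide with those of $\cP^\pi$ restricted to $\Fcal_\phi$. Since $\cP^\pi$ is a Markov averaging operator it is a contraction in $\|\cdot\|_\infty$, even on complex-valued functions, so any eigenfunction $f=\phi^\top\theta$ with eigenvalue $\lambda$ satisfies $|\lambda|\,\|f\|_\infty=\|\cP^\pi f\|_\infty\le\|f\|_\infty$; because $\Sigma$ is invertible, $\theta\neq 0$ forces $f\not\equiv 0$, giving $|\lambda|\le 1$ and thus $\rho(\dyn)\le 1$. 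For (2), I would show by induction that the dynamical-system iterate equals the mean feature under the $t$-step occupancy, $\xt=\En_{(s,a)\sim\oc_t^\pi}[\phi(s,a)]$: the base case is the definition of $\phi_0$, and the inductive step applies (1) to push the expectation one step forward through $\dyn$, since averaging $\En_{s'}[\phi(s',\pi)]$ over $\oc_t^\pi$ reproduces $\oc_{t+1}^\pi$. Summing the discounted series, which converges because each $\xt$ is bounded by $\Bphi$, then gives $\ocphi=(1-\gamma)\sum_t\gamma^t\En_{\oc_t^\pi}[\phi]=\En_{\oc^\pi}[\phi]=\phi^\pi$.

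Finally, (4) would be immediate: since $\rho(\dyn)\le 1<1/\gamma$ by (3), \cref{prop:lin_dyn} applies and gives $\cvrg=(\ocphi)^\top\Sigma^{-1}\ocphi$, and substituting $\ocphi=\phi^\pi$ from (2) yields $\cvrg=(\phi^\pi)^\top\Sigma^{-1}\phi^\pi=\Clin$. I expect the main obstacle to be the rigorous justification in (1) that the per-direction linear representations assemble into a single matrix $W$ valid on the whole space $\Scal\times\Acal$ rather than merely on the support of $\data$, together with the careful norm choice in (3): using the $\infty$-norm contraction of the complexified Markov operator and verifying that $\theta\neq 0$ yields a genuinely nonzero function, so that the eigenvalue bound is not vacuous.
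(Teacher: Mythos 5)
Your proof is correct, and parts (1), (2), and (4) follow essentially the paper's route: construct the matrix coordinate-wise from completeness, identify it with $\dyn$ through the moment equation $\Sigmacr = \Sigma W^\top$ and invertibility of $\Sigma$, push expectations forward through the occupancies, and finish by invoking \cref{prop:lin_dyn}. (Your cancellation trick $\Tcal^\pi(\phi^\top\theta_1)-\Tcal^\pi(\phi^\top\theta_2)$ to eliminate the reward is a cosmetic variant of the paper's step showing $R=\Tcal^\pi f_0\in\Fcal_\phi$ and hence closure under $\cP^\pi=\Tcal^\pi-R$.) Also, the obstacle you flag in (1) is a non-issue: you never need linearity of the selection $v\mapsto\theta_v$, since stacking the representations for the $d$ coordinate directions $e_1,\dots,e_d$ already yields a matrix $W$ valid on \emph{all} of $\Scal\times\Acal$ (completeness is a statement about functions on the whole space), and any such $W$ is then forced to equal $\dyn$ by the moment identity, exactly as in the paper.

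Where you genuinely depart from the paper is part (3). The paper picks state-action pairs whose features form a basis of $\RR^d$ (possible since $\Sigma$ is full rank), uses the identity $(\dyn)^t\phi(s_0,a_0)=\En_{\oc_t^{\pi,s_0,a_0}}[\phi]$ to get a $t$-uniform bound on $\|(\dyn)^t\|_{\mathrm{op}}$, and concludes via Gelfand's formula $\rho(\dyn)=\lim_t\|(\dyn)^t\|_{\mathrm{op}}^{1/t}$. You instead complexify, observe that $\cP^\pi$ acts on $\theta$-coordinates as $\dyn^\top$, take an eigenpair $(\lambda,\theta)$ of $\dyn^\top$, and use the sup-norm contraction of the Markov averaging operator together with $\Sigma\succ 0$ (to ensure $f=\phi^\top\theta\not\equiv 0$) to get $|\lambda|\le 1$ directly. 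Your argument is shorter and avoids Gelfand's formula entirely; the paper's argument stays over the reals and produces the uniform iterate bound as a reusable byproduct. Your reordering (1), (3), (2), (4) also has a small advantage in rigor: the paper's proof of (2) unrolls the recursion $\php=(1-\gamma)\phi_0+\gamma\dyn\php$ indefinitely, which implicitly requires the remainder $\gamma^T(\dyn)^T\php$ to vanish, whereas your induction $\mu^\pi_{\phi,t}=\En_{(s,a)\sim\oc_t^\pi}[\phi(s,a)]$ gives boundedness of the iterates, hence absolute convergence of the discounted series, with no appeal to spectral facts at all.
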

The essence of the proposition is illustrated in \cref{fig:two-routes}, showing that the expected features produced by the groundtruth dynamics ($\phi^\pi$) and the compressed dynamics ($\ocphi=(1-\gamma)\sum_{t}\gamma^t \xt$) coincide under Bellman-completeness, thus demonstrating that \textbf{error propagation through true dynamics is a special case of and thus unified with error propagation in the compressed dynamics.} 

\paragraph{Comparison to  \citet{yin2020asymptotically, duan_minimax-optimal_2020} in Tabular and Linear MDPs} 
In the special case of tabular MDPs, LSTDQ coincides with model-based OPE, and Bellman-completeness holds automatically. In this setting, our \cref{eq:emp_bound} matches the dimension-free guarantees (in $1/\sqrt{n}$ term)  given by \citet{yin2020asymptotically}.\footnote{They analyze the finite-horizon setting but the $1/\sqrt{n}$ terms in the bounds match under standard translation. Also, they use the variance of value function w.r.t.~transition randomness in place of our $\Vmax$, which we could also incorporate and choose not to for readability.} \citet{duan_minimax-optimal_2020} also provide similar dimension-free guarantees for FQE in linear MDPs. In comparison, our analyses hold in arbitrary MDPs beyond these structural models.

\paragraph{Connection to Bellman Residual Minimization (BRM)} 
Many (if not most) algorithms for learning $Q^\pi$ with general function approximation coincide with LSTDQ under linear function approximation \citep{antos2008learning, xie2021bellman, uehara2020minimax}, and this fact allows us to compare our bound to the more general analyses in the literature. Among those algorithms, BRM is a well-investigated example, which approximates $Q^\pi$ by solving the following minimax problem \citep{antos2008learning}:
\begin{align} \label{eq:brm}
\emp{f}^\pi = \argmin_{f\in\Fcal} \sup_{f'\in\Fcal} \left(\En_{\Dcal}[(f(s,a) - r - \gamma f(s',\pi)^2] - \En_{\Dcal}[(f'(s,a) - r - \gamma f(s',\pi)^2] \right),
\end{align}
whose finite-sample guarantee can be established under Bellman completeness (\cref{ass:complete}). 
\citet{antos2008learning, xie2021bellman} show that when $\Fcal$ is linear, the solution coincides with LSTDQ, %
so we can compare the guarantee of BRM under linear $\Fcal$ with our Theorem~\ref{thm:mainthm-emp}. \citet{jiang2024offline} show that BRM's error bound is (see their Eq.~(19)) %
\begin{align} \label{eq:brm_bound}
\abs*{J_{\emp{f}^\pi}(\pi) - J(\pi)} \approxleq \frac{ \Vmax}{1-\gamma} \cdot \sqrt{\frac{C^\pi \log(|\Fcal|/\delta)}{n}}.
\end{align}
In the linear setting, their $C^\pi$ is $\Clin$ (see their Eq.~(23)), 
and $\log|\Fcal| \approx d$ based on a standard covering-number argument. Under such translation, the main $O(n^{-\tfrac{1}{2}})$ term in our \cref{eq:pop_bound_lazy} matches \cref{eq:brm_bound} in the coverage and horizon dependence, and is sharper than the latter as our bound is ``directional'' and does not depend on $\log|\Fcal| \approx d$. In contrast, the concentration event in BRM's analysis is that \cref{eq:brm} concentrates to $\En_{\data}[( f - \Tcal^\pi f)^2]$, which is independent of the initial state. This makes the nature of \cref{eq:brm_bound} a function-estimation guarantee (see \cref{app:dim-free}), similar to our \cref{thm:mainthm-emp} (which does depend on $d$). Whether \cref{eq:brm_bound} can be improved to a directional bound (i.e., no $\log|\Fcal|$) for general non-linear $\Fcal$ is an interesting open problem.

\subsection{Unification with Marginalized Importance Sampling} \label{sec:mis}
In \cref{sec:completeness} we mentioned that many algorithms designed for general function approximation reduce to LSTDQ when linear classes are used. Another example is Minimax Weight Learning \citep[MWL;][]{uehara2020minimax}, a representative method for marginalized importance sampling, whose key idea is illustrated by the following inequality: given $\Fcal$ such that $Q^\pi \in \Fcal$, $\forall w:\Scal\times\Acal\to\RR$,
\begin{align} \label{eq:mwl} 
\abs*{\frac{1}{1-\gamma}\En_{\data}[w(s,a) r] - J(\pi)} \le \sup_{f\in\Fcal} \abs*{J_f(\pi) + \textstyle   \frac{1}{1-\gamma}\En_{\data}[w(s,a)\cdot\left(\gamma f(s',\pi) - f(s,a)\right)]},
\end{align}
so learning $w$ from some $\Wcal$ class that minimizes (the empirical estimate of) the RHS to $\approx 0$ ensures that $\frac{1}{1-\gamma}\En_{\data}[w(s,a) r]$ is a good estimation of $J(\pi)$. Theoretically, if some $w^\star \in \Wcal$ sets the RHS of \cref{eq:mwl} to $0$, finite-sample guarantees can be established, where coverage is reflected by the magnitude of $w^\star$. As an example, $w^\star(s,a) = \oc^\pi(s,a)/\data(s,a)$ always sets the RHS to $0$, and we pay the size of $w^\star$ as the coverage parameter (e.g., $\|\oc^\pi/\data\|_\infty$) through concentration inequalities; see \citet[Section 6.2]{xie2020q} for further discussions on this. 

When both $\Wcal$ and $\Fcal$ are linear, \citet{uehara2020minimax} show that the MWL algorithm is equivalent to LSTDQ. We now show that their coverage parameters and guarantees, when improved with insights from follow-up works, coincide with our analyses in the linear setting. In particular, \citet{zhang2024curses} point out that the $w^\star$ that minimizes the population objective \cref{eq:mwl} takes a different form in the linear case: 
$w^\star(s,a) = (1-\gamma)\phi_0^\top A^{-1} \phi(s,a).$ 
An immediate implication is that 
$$
\En_{\data}[w^\star(s,a)^2] = \cvrg.
$$
That is, the second moment of $w^\star$ on data is precisely our coverage parameter. While \citet{uehara2020minimax} measures the size of $w^\star$ by $\|w^\star\|_\infty$ due to the use of Hoeffding's inequality, replacing it with Bernstein's will improve $\|w^\star\|_\infty$ to $\En_{\data}[w^\star(s,a)^2]$ in the main $O(n^{-1/2})$ term, which matches our bound in Eq.\eqref{eq:pop_bound_lazy} except that we do not have $d$ dependence; see \cref{app:mwl} for further details. %

\section{Conclusion and Discussion} \label{sec:conclusion}

We tackled the fundamental problem of linear off-policy evaluation under the minimal assumption of realizability. We re-analyzed a canonical algorithm for this setting, LSTDQ, and developed error bounds that introduced the feature-dynamics coverage, a new notion of coverage that tightens and sharpens our understanding of this setting. This parameter admits a natural interpretation as coverage in a feature-induced dynamical system, while simultaneously generalizing special cases such as aggregated concentrability with state abstraction features and linear coverage with Bellman-complete features. Altogether, our results serve as clearer and more unified foundation for the theory of linear OPE.

\section*{Acknowledgements}
Nan Jiang acknowledges funding support from NSF CNS-2112471, NSF CAREER IIS-2141781, and Sloan Fellowship.

\section*{Disclosure of LLM Usage}
In the initial phase of the project, the authors had a vague conjecture and rough road-map of the main results in the paper, and used an LLM to execute the plan further to verify the feasibility of the project. We also subsequently used LLMs to help with literature review and proofs with some elementary linear-algebraic lemmas.

\bibliography{refs}

@inproceedings{abbasi2011improved,
 author = {Abbasi-Yadkori, Yasin and P{\'a}l, D{\'a}vid and Szepesv{\'a}ri, Csaba},
 booktitle = {Advances in Neural Information Processing Systems},
 title = {Improved Algorithms for Linear Stochastic Bandits},
 year = {2011}
}

@article{zhang2024curses,
  title={On the curses of future and history in future-dependent value functions for off-policy evaluation},
  author={Zhang, Yuheng and Jiang, Nan},
  journal={Advances in Neural Information Processing Systems},
  volume={37},
  pages={124756--124790},
  year={2024}
}

@article{jiang2024offline,
  title={Offline reinforcement learning in large state spaces: Algorithms and guarantees},
  author={Jiang, Nan and Xie, Tengyang},
  journal={Statistical Science},
  year={2024}
}

@article{zhang2021towards,
  title={Towards hyperparameter-free policy selection for offline reinforcement learning},
  author={Zhang, Siyuan and Jiang, Nan},
  journal={Advances in Neural Information Processing Systems},
  volume={34},
  pages={12864--12875},
  year={2021}
}

@inproceedings{jia2024offline,
  title={Offline reinforcement learning: Role of state aggregation and trajectory data},
  author={Jia, Zeyu and Rakhlin, Alexander and Sekhari, Ayush and Wei, Chen-Yu},
  booktitle={The Thirty Seventh Annual Conference on Learning Theory},
  pages={2644--2719},
  year={2024},
  organization={PMLR}
}

@article{martinez2024empirical,
  title={Empirical Bernstein in smooth Banach spaces},
  author={Martinez-Taboada, Diego and Ramdas, Aaditya},
  journal={arXiv preprint arXiv:2409.06060},
  year={2024}
}

@article{xia2024instrumental,
  title={Instrumental variables: A non-asymptotic viewpoint},
  author={Xia, Eric and Wainwright, Martin J and Newey, Whitney},
  journal={arXiv preprint arXiv:2410.02015},
  year={2024}
}

@article{minsker2017some,
  title={On some extensions of Bernstein’s inequality for self-adjoint operators},
  author={Minsker, Stanislav},
  journal={Statistics \& Probability Letters},
  volume={127},
  pages={111--119},
  year={2017},
  publisher={Elsevier}
}

@inproceedings{amortila2020variant,
 author = {Amortila, Philip and Jiang, Nan and Xie, Tengyang},
 booktitle = {arXiv preprint arXiv:2011.01075},
 title = {A Variant of the Wang-Foster-Kakade Lower Bound for the Discounted Setting},
 year = {2020}
}

@inproceedings{amortila2023harnessing,
 author = {Amortila, Philip and Foster, Dylan J and Jiang, Nan and Sekhari, Ayush and Xie, Tengyang},
 booktitle = {International Conference on Learning Representations},
 title = {Harnessing Density Ratios for Online Reinforcement Learning},
 year = {2024}
}

@inproceedings{amortila2023optimal,
 author = {Amortila, Philip and Jiang, Nan and Szepesv{\'a}ri, Csaba},
 booktitle = {International Conference on Machine Learning},
 title = {The Optimal Approximation Factors in Misspecified Off-Policy Value Function Estimation},
 year = {2023}
}

@inproceedings{amortila2024scalable,
 author = {Amortila, Philip and Foster, Dylan J and Krishnamurthy, Akshay},
 booktitle = {International Conference on Machine Learning},
 title = {Scalable Online Exploration Via Coverability},
 year = {2024}
}

@inproceedings{antos2008learning,
 author = {Antos, Andr{\'a}s and Szepesv{\'a}ri, Csaba and Munos, R{\'e}mi},
 booktitle = {Machine Learning},
 publisher = {Springer},
 title = {Learning Near-optimal Policies With Bellman-Residual Minimization Based Fitted Policy Iteration and a Single Sample Path},
 year = {2008}
}

@inproceedings{yin2020asymptotically,
  title={Asymptotically efficient off-policy evaluation for tabular reinforcement learning},
  author={Yin, Ming and Wang, Yu-Xiang},
  booktitle={International Conference on Artificial Intelligence and Statistics},
  pages={3948--3958},
  year={2020},
  organization={PMLR}
}

@inproceedings{duan_minimax-optimal_2020,
 abstract = {This paper studies the statistical theory of batch data reinforcement learning With function approximation. Consider the off-policy evaluation problem, which is to estimate the cumulative value of a new target policy from logged history generated by unknown behavioral policies. We study a regression-based ï¬tted Q iteration method, and show that it is equivalent to a model-based method that estimates a conditional mean embedding of the transition operator. We prove that this method is information-theoretically optimal and has nearly minimal estimation error. In particular, by leveraging contraction property of Markov processes and martingale concentration, we establish a ï¬nite-sample instance-dependent error upper bound and a nearly-matching minimax lower bound. The policy evaluation error depends sharply on a restricted Ï2-divergence over the function class between the long-term distribution of the target policy and the distribution of past data. This restricted Ï2-divergence is both instance-dependent and function-class-dependent. It characterizes the statistical limit of off-policy evaluation. Further, we provide an easily computable conï¬dence bound for the policy evaluator, which may be useful for optimistic planning and safe policy improvement.},
 annote = {Needs closure
Â 
Â },
 author = {Duan, Yaqi and Wang, Mengdi},
 file = {Duan and Wang - 2020 - Minimax-Optimal Off-Policy Evaluation With Linear .pdf:/Users/philipamortila/Zotero/storage/S5E6DEY4/Duan and Wang - 2020 - Minimax-Optimal Off-Policy Evaluation With Linear .pdf:application/pdf},
 booktitle = {International Conference on Machine Learning},
 keywords = {Statistics - Machine Learning, Computer Science - Machine Learning, Mathematics - Statistics Theory},
 title = {Minimax-Optimal {Off}-{Policy} {Evaluation} With {Linear} {Function} {Approximation}},
 year = {2020}
}

@inproceedings{duan_optimal_2021,
 abstract = {We study methods based on reproducing kernel Hilbert spaces for estimating the value function of an inï¬nite-horizon discounted Markov reward process (MRP). We study a regularized form of the kernel Least-Squares temporal diï¬erence (LSTD) estimate; in the population limit of inï¬nite data, it corresponds to the ï¬xed point of a projected Bellman operator deï¬ned by the associated reproducing kernel Hilbert space. The estimator itself is obtained by computing the projected ï¬xed point induced by a regularized version of the empirical operator; due to the underlying kernel structure, this reduces to solving a linear system involving kernel matrices. We analyze the error of this estimate in the L2(Âµ)-norm, where Âµ denotes the stationary distribution of the underlying Markov chain. Our analysis imposes no assumptions on the transition operator of the Markov chain, but rather only conditions on the reward function and population-level kernel LSTD solutions. We use empirical process theory techniques to derive a non-asymptotic upper bound on the error With explicit dependence on the eigenvalues of the associated kernel operator, as well as the instance-dependent variance of the Bellman residual error. In addition, we prove minimax lower bounds over sub-classes of MRPs, which shows that our rate is optimal in terms of the sample size n and the eï¬ective horizon H = (1 â Î³)â1. Whereas existing worstcase theory predicts cubic scaling (H3) in the eï¬ective horizon, our theory reveals that there is in fact a much wider range of scalings, depending on the kernel, the stationary distribution, and the variance of the Bellman residual error. Notably, it is only parametric and near-parametric problems that can ever achieve the worst-case cubic scaling.},
 annote = {On-policy, just estimation error},
 author = {Duan, Yaqi and Wang, Mengdi and Wainwright, Martin J.},
 file = {Duan et al. - 2021 - Optimal policy evaluation using kernel-based tempo.pdf:/Users/philipamortila/Zotero/storage/NWA4G94J/Duan et al. - 2021 - Optimal policy evaluation using kernel-based tempo.pdf:application/pdf},
 booktitle = {arXiv preprint arXiv:2109.12002},
 keywords = {Statistics - Machine Learning, Computer Science - Machine Learning, Mathematics - Statistics Theory},
 language = {},
 title = {Optimal Policy Evaluation Using Kernel-based Temporal Difference Methods},
 year = {2021}
}

@inproceedings{farahmand2010error,
 author = {Farahmand, Amir-massoud and Szepesv{\'a}ri, Csaba and Munos, R{\'e}mi},
 booktitle = {Advances in Neural Information Processing Systems},
 title = {Error Propagation for Approximate Policy and Value Iteration},
 year = {2010}
}

@inproceedings{parr2008analysis,
  title={An analysis of linear models, linear value-function approximation, and feature selection for reinforcement learning},
  author={Parr, Ronald and Li, Lihong and Taylor, Gavin and Painter-Wakefield, Christopher and Littman, Michael L},
  booktitle={Proceedings of the 25th international conference on Machine learning},
  pages={752--759},
  year={2008}
}

@inproceedings{della2025stochastic,
  title={Stochastic online instrumental variable regression: Regrets for endogeneity and bandit feedback},
  author={Della Vecchia, Riccardo and Basu, Debabrota},
  booktitle={Proceedings of the AAAI Conference on Artificial Intelligence},
  volume={39},
  number={15},
  pages={16190--16198},
  year={2025}
}

@misc{jiang2018notes,
 author = {Jiang, Nan},
 howpublished = {\url{https://nanjiang.cs.illinois.edu/files/cs542f22/note4.pdf}},
 note = {Version: September 28, 2018},
 title = {Notes on State Abstractions},
 year = {2018}
}

@inproceedings{jin2020provably,
 author = {Jin, Chi and Yang, Zhuoran and Wang, Zhaoran and Jordan, Michael I},
 booktitle = {Conference on Learning Theory},title = {Provably Efficient Reinforcement Learning With Linear Function Approximation},
 year = {2020}
}

@inproceedings{jin2021pessimism,
 author = {Jin, Ying and Yang, Zhuoran and Wang, Zhaoran},
 booktitle = {International Conference on Machine Learning},
 organization = {},title = {Is Pessimism Provably Efficient for Offline RL?},
 year = {2021}
}

@inproceedings{kolter2011fixed,
 author = {Kolter, J},
 booktitle = {Advances in Neural Information Processing Systems},
 title = {The Fixed Points of Off-Policy TD},
 year = {2011}
}

@inproceedings{lagoudakis2003least,
 author = {Lagoudakis, Michail G and Parr, Ronald},
 booktitle = {Journal of Machine Learning Research},
 title = {Least-Squares Policy Iteration},
 year = {2003}
}

@inproceedings{li2006towards,
 author = {Li, Lihong and Walsh, Thomas J and Littman, Michael L},
 booktitle = {International Symposium on Artificial Intelligence and Mathematics},
 title = {Towards a Unified Theory of State Abstraction for MDPs.},
 year = {2006}
}

@inproceedings{mou2020optimal,
 author = {Mou, Wenlong and Pananjady, Ashwin and Wainwright, Martin J},
 booktitle = {Mathematics of Operations Research}, publisher={INFORMS},
 title = {Optimal Oracle Inequalities for Solving Projected Fixed-point Equations},
 year = {2022}
}

@misc{mou_off-policy_2022,
 abstract = {The problem of estimating a linear functional based on observational data is canonical in both the causal inference and bandit literatures. We analyze a broad class of two-stage procedures that ï¬rst estimate the treatment eï¬ect function, and then use this quantity to estimate the linear functional. We prove non-asymptotic upper bounds on the mean-squared error of such procedures: these bounds reveal that in order to obtain non-asymptotically optimal procedures, the error in estimating the treatment eï¬ect should be minimized in a certain weighted L2-norm. We analyze a two-stage procedure based on constrained regression in this weighted norm, and establish its instance-dependent optimality in ï¬nite samples via matching non-asymptotic local minimax lower bounds. These results show that the optimal non-asymptotic risk, in addition to depending on the asymptotically eï¬cient variance, depends on the weighted norm distance between the true outcome function and its approximation by the richest function class supported by the sample size.},
 annote = {Comment: 56 pages, 6 figures},
 author = {Mou, Wenlong and Wainwright, Martin J. and Bartlett, Peter L.},
 file = {Mou et al. - 2022 - Off-Policy estimation of linear functionals Non-a.pdf:/Users/philipamortila/Zotero/storage/C5U35BJZ/Mou et al. - 2022 - Off-Policy estimation of linear functionals Non-a.pdf:application/pdf},
 keywords = {Statistics - Machine Learning, Mathematics - Statistics Theory, Computer Science - Information Theory},
 language = {},
 month = {September},
 note = {arXiv preprint arXiv:2209.13075 [cs, math, stat]},
 publisher = {arXiv},
 shorttitle = {Off-Policy estimation of linear functionals},
 title = {Off-Policy Estimation of Linear Functionals: {non}-asymptotic theory for semi-parametric efficiency},
 url = {http://arxiv.org/abs/2209.13075},
 urldate = {2022-10-21},
 year = {2022}
}

@inproceedings{munos2007performance,
 author = {Munos, R{\'e}mi},
 booktitle = {SIAM Journal on Control and Optimization},
 publisher={SIAM},
 title = {Performance Bounds in $L_p$-Norm for Approximate Value Iteration},
 year = {2007}
}

@inproceedings{munos2008finite,
 author = {Munos, R{\'e}mi and Szepesv{\'a}ri, Csaba},
 booktitle = {Journal of Machine Learning Research},
 title = {Finite-time Bounds for Fitted Value Iteration.},
 year = {2008}
}

@inproceedings{perdomo_sharp_2022,
 abstract = {Oï¬ine policy evaluation is a fundamental statistical problem in reinforcement learning that involves estimating the value function of some decision-making policy given data collected by a potentially different policy. In order to tackle problems With complex, high-dimensional observations, there has been signiï¬cant interest from theoreticians and practitioners alike in understanding the possibility of function approximation in reinforcement learning. Despite signiï¬cant study, a sharp characterization of when we might expect oï¬ine policy evaluation to be tractable, even in the simplest setting of linear function approximation, has so far remained elusive, With a surprising number of strong negative results recently appearing in the literature.},
 author = {Perdomo, Juan C. and Krishnamurthy, Akshay and Bartlett, Peter and Kakade, Sham},
 file = {Perdomo et al. - 2022 - A Sharp Characterization of Linear Estimators for .pdf:/Users/philipamortila/Zotero/storage/BBIIA929/Perdomo et al. - 2022 - A Sharp Characterization of Linear Estimators for .pdf:application/pdf},
 booktitle = {Journal of Machine Learning Research},
 title = {A Complete Characterization of Linear Estimators for Offline Policy Evaluation},
 year = {2023}
}

@inproceedings{pinelis1994optimum,
 author = {Pinelis, Iosif},
 booktitle = {The Annals of Probability},
 number = {4},publisher = {Institute of Mathematical Statistics},
 title = {Optimum Bounds for the Distributions of Martingales in Banach Spaces},
 volume = {22},
 year = {1994}
}

@book{sutton2018reinforcement,
 author = {Sutton, Richard S and Barto, Andrew G},
 publisher = {MIT Press},
 title = {Reinforcement Learning: An Introduction},
 year = {2018}
}

@inproceedings{wang2021instabilities,
  title={Instabilities of offline rl with pre-trained neural representation},
  author={Wang, Ruosong and Wu, Yifan and Salakhutdinov, Ruslan and Kakade, Sham},
  booktitle={International Conference on Machine Learning},
  pages={10948--10960},
  year={2021},
  organization={PMLR}
}

@inproceedings{boyan1999least,
  title={Least-squares temporal difference learning},
  author={Boyan, Justin A},
  booktitle={ICML},
  pages={49--56},
  year={1999}
}

@online{jlewk,
  author       = {jlewk},
  title        = {Is the matrix square root uniformly continuous?},
  year         = {2016},
  url          = {https://math.stackexchange.com/questions/1934184/is-the-matrix-square-root-uniformly-continuous/3968174#3968174},
  note         = {Mathematics StackExchange, answer posted Dec 31, 2020, accessed Jan 05 2024}
}

@article{vanHemmen1980Inequality,
  author    = {van Hemmen, J. L. and Ando, T.},
  title     = {An inequality for trace ideals},
  journal   = {Communications in Mathematical Physics},
  volume    = {76},
  number    = {2},
  pages     = {143--148},
  year      = {1980},
}

@article{stewart1990matrix,
  title={Matrix perturbation theory},
  author={Stewart, Gilbert W and Sun, Ji-guang},
  journal={(No Title)},
  year={1990}
}

@inproceedings{tropp2012user,
 author = {Tropp, Joel A},
 booktitle = {Foundations of computational mathematics},
 number = {4},publisher = {Springer},
 title = {User-friendly Tail Bounds for Sums of Random Matrices},
 volume = {12},
 year = {2012}
}

@inproceedings{tsitsiklis_analysis_1997,
 abstract = {We discuss the temporal-difference learning algorithm, as applied to approximating the cost-to-go function of an inï¬nite-horizon discounted Markov chain. The algorithm we analyze updates parameters of a linear function approximator online during a single endless trajectory of an irreducible aperiodic Markov chain With a ï¬nite or inï¬nite state space. We present a proof of convergence (With probability one), a characterization of the limit of convergence, and a bound on the resulting approximation error. Furthermore, our analysis is based on a new line of reasoning that provides new intuition about the dynamics of temporal-difference learning.},
 author = {Tsitsiklis, John N. and Van Roy, Benjamin},
 file = {Tsitsiklis and Van Roy - 1997 - An analysis of temporal-difference learning With f.pdf:/Users/philipamortila/Zotero/storage/SBFJ2LH2/Tsitsiklis and Van Roy - 1997 - An analysis of temporal-difference learning With f.pdf:application/pdf},
 booktitle = {IEEE Transactions on Automatic Control},
 publisher = {IEEE},
 title = {An Analysis of Temporal-Difference Learning With Function Approximation},
 year = {1997}
}

@inproceedings{uehara2020minimax,
 author = {Uehara, Masatoshi and Huang, Jiawei and Jiang, Nan},
 booktitle = {International Conference on Machine Learning},
 organization = {},title = {Minimax Weight and Q-function Learning for Off-Policy Evaluation},
 year = {2020}
}

@inproceedings{gabbianelli2024offline,
  title={Offline primal-dual reinforcement learning for linear mdps},
  author={Gabbianelli, Germano and Neu, Gergely and Papini, Matteo and Okolo, Nneka M},
  booktitle={International Conference on Artificial Intelligence and Statistics},
  pages={3169--3177},
  year={2024},
  organization={PMLR}
}

@article{dann2014policy,
  title={Policy evaluation with temporal differences: A survey and comparison},
  author={Dann, Christoph and Neumann, Gerhard and Peters, Jan},
  journal={The Journal of Machine Learning Research},
  volume={15},
  number={1},
  pages={809--883},
  year={2014},
  publisher={JMLR. org}
}

@article{huang2022beyond,
  title={Beyond the return: Off-policy function estimation under user-specified error-measuring distributions},
  author={Huang, Audrey and Jiang, Nan},
  journal={Advances in Neural Information Processing Systems},
  volume={35},
  pages={6292--6303},
  year={2022}
}

@article{hsu2011analysis,
  title={An analysis of random design linear regression},
  author={Hsu, Daniel and Kakade, Sham M and Zhang, Tong},
  journal={arXiv preprint arXiv:1106.2363},
  volume={6},
  year={2011}
}

@article{bradtke1996linear,
  title={Linear least-squares algorithms for temporal difference learning},
  author={Bradtke, Steven J and Barto, Andrew G},
  journal={Machine learning},
  volume={22},
  number={1},
  pages={33--57},
  year={1996},
  publisher={Springer}
}

@article{chen2022instrumental,
  title={On instrumental variable regression for deep offline policy evaluation},
  author={Chen, Yutian and Xu, Liyuan and Gulcehre, Caglar and Le Paine, Tom and Gretton, Arthur and De Freitas, Nando and Doucet, Arnaud},
  journal={Journal of Machine Learning Research},
  volume={23},
  number={302},
  pages={1--40},
  year={2022}
}

@article{lazaric2012finite,
  title={Finite-sample analysis of least-squares policy iteration},
  author={Lazaric, Alessandro and Ghavamzadeh, Mohammad and Munos, R{\'e}mi},
  journal={The Journal of Machine Learning Research},
  volume={13},
  number={1},
  pages={3041--3074},
  year={2012},
  publisher={JMLR. org}
}

@book{bertsekas2012dynamic,
  title={Dynamic programming and optimal control},
  author={Bertsekas, Dimitri},
  volume={II},
  year={2007},
  publisher={Athena scientific}
}

@inproceedings{lazaric2010finite,
  title={Finite-sample analysis of LSTD},
  author={Lazaric, Alessandro and Ghavamzadeh, Mohammad and Munos, R{\'e}mi},
  booktitle={ICML-27th International Conference on Machine Learning},
  pages={615--622},
  year={2010}
}

@article{bertsekas2009projected,
  title={Projected equation methods for approximate solution of large linear systems},
  author={Bertsekas, Dimitri P and Yu, Huizhen},
  journal={Journal of Computational and Applied Mathematics},
  volume={227},
  number={1},
  pages={27--50},
  year={2009},
  publisher={Elsevier}
}

@article{nedic2003least,
  title={Least squares policy evaluation algorithms with linear function approximation},
  author={Nedi{\'c}, A and Bertsekas, Dimitri P},
  journal={Discrete Event Dynamic Systems},
  volume={13},
  number={1},
  pages={79--110},
  year={2003},
  publisher={Springer}
}

@article{liu2025model,
  title={Model Selection for Off-policy Evaluation: New Algorithms and Experimental Protocol},
  author={Liu, Pai and Zhao, Lingfeng and Agarwal, Shivangi and Liu, Jinghan and Huang, Audrey and Amortila, Philip and Jiang, Nan},
  journal={arXiv preprint arXiv:2502.08021},
  year={2025}
}

@article{yin2022near,
  title={Near-optimal offline reinforcement learning with linear representation: Leveraging variance information with pessimism},
  author={Yin, Ming and Duan, Yaqi and Wang, Mengdi and Wang, Yu-Xiang},
  journal={arXiv preprint arXiv:2203.05804},
  year={2022}
}

@inproceedings{zhang2023offline,
  title={Offline learning in markov games with general function approximation},
  author={Zhang, Yuheng and Bai, Yu and Jiang, Nan},
  booktitle={International Conference on Machine Learning},
  pages={40804--40829},
  year={2023},
  organization={PMLR}
}

@article{cui2022provably,
  title={Provably efficient offline multi-agent reinforcement learning via strategy-wise bonus},
  author={Cui, Qiwen and Du, Simon S},
  journal={Advances in Neural Information Processing Systems},
  volume={35},
  pages={11739--11751},
  year={2022}
}

@article{yin2021towards,
  title={Towards instance-optimal offline reinforcement learning with pessimism},
  author={Yin, Ming and Wang, Yu-Xiang},
  journal={Advances in neural information processing systems},
  volume={34},
  pages={4065--4078},
  year={2021}
}

@article{foster2025good,
  title={Is a Good Foundation Necessary for Efficient Reinforcement Learning? The Computational Role of the Base Model in Exploration},
  author={Foster, Dylan J and Mhammedi, Zakaria and Rohatgi, Dhruv},
  journal={arXiv preprint arXiv:2503.07453},
  year={2025}
}

@book{vershynin2018high,
 author = {Vershynin, Roman},
 publisher = {Cambridge University Press},
 title = {High-dimensional Probability: an Introduction With Applications in Data Science},
 volume = {47},
 year = {2018}
}

@book{wainwright2019high,
 author = {Wainwright, Martin J},
 publisher = {Cambridge University Press},
 title = {High-dimensional Statistics: A Non-Asymptotic Viewpoint},
 year = {2019}
}

@inproceedings{amortila2024reinforcement,
  title={Reinforcement Learning Under Latent Dynamics: Toward Statistical and Algorithmic Modularity},
  author={Amortila, Philip and Foster, Dylan J and Jiang, Nan and Krishnamurthy, Akshay and Mhammedi, Zak},
  booktitle={Advances in Neural Information Processing Systems},
  year={2024}
}

@misc{xie2020batch,
 archiveprefix = {arXiv},
 author = {Tengyang Xie and Nan Jiang},
 eprint = {2008.04990},
 primaryclass = {cs.LG},
 title = {Batch Value-function Approximation With Only Realizability},
 year = {2020}
}

@inproceedings{xie2020q,
 author = {Xie, Tengyang and Jiang, Nan},
 booktitle = {Conference on Uncertainty in Artificial Intelligence},
 title = {Q* Approximation Schemes for Batch Reinforcement Learning: A Theoretical Comparison},
 year = {2020}
}

@inproceedings{xie2021bellman,
 author = {Xie, Tengyang and Cheng, Ching-An and Jiang, Nan and Mineiro, Paul and Agarwal, Alekh},
 booktitle = {Advances in Neural Information Processing Systems},title = {Bellman-Consistent Pessimism for Offline Reinforcement Learning},
 year = {2021}
}

@inproceedings{xie2023role,
 author = {Xie, Tengyang and Foster, Dylan J and Bai, Yu and Jiang, Nan and Kakade, Sham M},
 booktitle = {International Conference on Learning Representations},
 title = {The Role of Coverage in Online Reinforcement Learning},
 year = {2023}
}

@inproceedings{zanette2021provable,
 author = {Zanette, Andrea and Wainwright, Martin J and Brunskill, Emma},
 booktitle = {Advances in Neural Information Processing Systems},
 title = {Provable Benefits of Actor-Critic Methods for Offline Reinforcement Learning},
 year = {2021}
}
\bibliographystyle{iclr2026_conference}

\newpage

\appendix

{
\renewcommand{\contentsname}{Appendix}
\section*{\contentsname}
}

\startcontents[appendix]

\printcontents[appendix]{}{1}{}

\section{Dimension-Free Guarantee for Contextual Bandits}
\label{app:dim-free}
In \cref{sec:main} we claim that the OPE bound for contextual bandits is \textit{dimension-free} (\cref{eq:lr_bound}), i.e., independent of the feature dimension $d$. This may come as a surprise, as similar bounds are frequently used in RL for calculating uncertainty bonuses \citep{abbasi2011improved, jin2020provably, jin2021pessimism}, but they take the form of \footnote{Such analyses often assume ridge regularization, i.e., adding $\lambda I$ to $\hatSigma$.} 
\begin{align} \label{eq:lr-d-bound}
\|\iniphi\|_{\hatSigmacov^{-1}} \sqrt{\frac{d +\log( 1/\delta)}{n}} \Vmax.
\end{align}
In comparison, our \cref{eq:lr_bound} does not have the $d$ factor, which we explain here. To start, we first rewrite the error as the average of terms across the data points. Here we temporarily use subscript $i$ to denote the $i$-th data point $(s_i, a_i, r_i)$ (in most of the main text we use subscript to denote the time step of a state or action, such as in $s_t$): 
\begin{align*}
\iniphi^\top (\estm - \theta^\star) = \iniphi^\top \hatSigma^{-1} (\emp{b} - \emp{A} \theta^\star) 
=  \iniphi^\top \hatSigma^{-1} \left(\frac{1}{n} \sum_{i=1}^n \phi(s_i, a_i) \epsilon_i\right) = \frac{1}{n} \sum_{i=1}^n w_i \epsilon_i.
\end{align*}
Here $\epsilon_i := r_i - \En_{r \sim R(\cdot|s,a)}[r]$ is zero-mean reward noise, and $w_i:= \iniphi^\top \hatSigma^{-1} \phi(s_i, a_i)$. Since $w_i$ only depends on $s_i, a_i$ and is independent of $r_i$, by treating $s_i, a_i$ as fixed and only considering the randomness of $r_i$ given $s_i, a_i$, we have (see proof techniques from \cref{lem:emp-lstd-concentration}) w.p.~$\ge 1-\delta$,
$$
\left|\frac{1}{n} \sum_{i=1}^n w_i \epsilon_i\right| \approxleq \sqrt{\frac{\sum_{i=1}^n w_i^2 / n}{n} \log(1/\delta)} \cdot \Vmax.
$$
\cref{eq:lr_bound} follows by noticing that $\sum_{i=1}^n w_i^2 / n = \|\iniphi\|_{\hatSigma^{-1}}^2$.

A few remarks are in order:

\paragraph{Unknown $\ini$} In \cref{sec:prelim} we assume $\ini$ (and thus $\iniphi$) is known for convenience, and \cref{eq:lr_bound} holds without such an assumption. In fact, the most natural setup for contextual bandits is $\ini = \data$, and the states observed in the data can be used to approximate $\iniphi$. In this case, we can replace $\iniphi$ in the above analysis with $\empiniphi := \frac{1}{n} \sum_{i=1}^n \phi(s_i, \pi)$. Then, the above bound still holds since $\empiniphi$ does not depend on the random reward, but $\empiniphi^\top \theta^\star \ne J(\pi)$. This additional error can be handled as $\empiniphi^\top \theta^\star- J(\pi) = \frac{1}{n}\sum_{i=1}^n (\phi(s_i, \pi)^\top \theta^\star - J(\pi))$, which also enjoys dimension-free concentration. 

\paragraph{Return vs.~function estimation} A key element in the dimension-free guarantee is that we are stating it for a fixed initial distribution $\ini$. In contrast, guarantees like \cref{eq:lr-d-bound} hold for all possible $\ini$ simultaneously, since the concentration event in its proof does not depend on $\ini$. In this sense, the two results differ in the form of guarantees they offer: \cref{eq:lr-d-bound} is ``w.p.~$\ge 1-\delta$, $\forall \ini$'', while \cref{eq:lr_bound} is ``$\forall \ini$, w.p.~$\ge 1-\delta$'', and the factor-of-$d$ gap can be explained away by union bounding over all possible $\iniphi$ in \cref{eq:lr_bound}. In fact, \cref{eq:lr-d-bound} (c.f.~\cref{thm:mainthm-pop}) is useful for establishing function-estimation guarantees; see \cref{app:fnestimation}. 

\paragraph{Comparison to \cref{thm:mainthm-pop}} Our main theorems for LSTDQ (\cref{thm:mainthm-pop,thm:mainthm-emp}) do not subsume \cref{eq:lr_bound} as a special case as the above analysis for \cref{eq:lr_bound} cannot be directly extended to the $\gamma > 0$ regime. This is because the counterpart of $w_i$ will depend on $\emp{A}$ when $\gamma >0$, whose definition involves the random next-state $s'$ in the data. On the other hand, the counterpart of $\epsilon_i$ is $Q^\pi(s_i,a_i) - r_i - \gamma Q^\pi(s_i', a_i')$ (see proof of \cref{lem:emp-lstd-concentration}), and $\epsilon_i$ will no longer be zero mean and independent when conditioned on $w_i$. That said, our \cref{thm:mainthm-pop} is still dimension-free in the main $1/\sqrt{n}$ term, but it comes with a lower-order term that depends on $d$.

\paragraph{Data adaptivity} \cref{eq:lr_bound} also crucially relies on the fact that data is collected in a non-adaptive manner, i.e., later actions are not chosen based on the random reward observed in earlier data points, otherwise $\{\epsilon_i\}$ will not be zero mean and independent when conditioned on $\{w_i\}$. In contrast, \cref{eq:lr-d-bound} can handle adaptively collect data and is often used in online RL where adaptive data collection is inevitable.

\paragraph{Related works} While \cref{eq:lr_bound} is generally known in the statistics community, to our knowledge it is less known to the RL theory community, and we believe the idea is worth spreading. There are, however, results that share similar spirits. For example, \citet{duan_minimax-optimal_2020} shows that FQE in linear MDPs enjoy a form of guarantee similar to our \cref{eq:emp_bound}, where the main $1/\sqrt{n}$ term only depends on coverage and not the dimensionality. 
\citet[Lemma 3.4]{yin2020asymptotically} establish OPE guarantee for finite-horizon tabular RL, where the $1/\sqrt{n}$ term has no polynomial dependence on the number of states and actions (which corresponds to our $d$), though their result measures coverage by a reachability parameter which translates to our $\lambda_{\min}(\Sigma)$. 

\section{Proofs of \cref{sec:main}} \label{app:proof-mainthm-emp}

\subsection{Proof of \cref{thm:mainthm-emp}}

\mainthmemp*

\begin{proof}[\pfref{thm:mainthm-emp}]
Note that when $\emp{A}$ is not invertible the guarantee trivially holds due to $\cvrgemp = +\infty$, so below we treat $\emp{A}$ as invertible. This also implies the invertibility of $\hatSigma$. Now,
\begin{align*}
\abs*{J_{\estQ}(\pi) - J(\pi)} &= \abs*{\En_{s_0 \sim \ini,a_0 \sim \pi}\brk*{Q^\pi(s_0,a_0) - \estQ(s_0,a_0)}} \\
    &= \abs*{\En_{s_0 \sim \ini,a_0 \sim \pi}\brk*{\phi(s_0,a_0)^\top\prn*{\thetastar - \estm}}},
\end{align*}
where in the second line we have used realizability (\cref{ass:realizability}) and the definition of $\estQ$. We can continue with simple algebra to find that:
	\begin{align*}
\abs*{\En_{s_0 \sim \ini,a_0 \sim \pi}\brk*{\phi(s_0,a_0)^\top\prn*{\thetastar - \estm}}} &= \abs*{\phi_0^\top\prn*{\thetastar - \estm}} \\
&= \abs*{\phi_0^\top \emp{A}^{-1} \prn*{\emp{A}\thetastar - \emp{A}\estm}} \\
&= \abs*{\phi_0^\top \emp{A}^{-1} \hatSigma^{1/2} \hatSigma^{-1/2} \prn*{\emp{A}\thetastar - \emp{A}\estm}} \\
&\leq \nrm*{\hatSigma^{1/2} \emp{A}^{-\top} \phi_0}_2 \nrm*{\hatSigma^{-1/2} \prn*{\emp{A}\thetastar - \emp{b}}}_2,
\end{align*}
where in the last line we have used Cauchy-Schwartz, and $\emp{A} \estm = \emp{b}$ (by invertibility). %
We then note that 
\[
\nrm*{\hatSigma^{1/2} \emp{A}^{-\top} \phi_0}_2 = \sqrt{\phi_0^\top \emp{A}^{-1} \hatSigma \emp{A}^{-\top} \phi_0} = \frac{1}{1-\gamma} \sqrt{\cvrgemp},
\]
which yields that 
\[
\abs*{J_{\estQ}(\pi) - J(\pi)} \leq \frac{1}{1-\gamma} \sqrt{\cvrgemp} \nrm*{\hatSigma^{-1/2} \prn*{\emp{A}\thetastar - \emp{b}}}_2.
\]
Then, \cref{eq:emp_bound} will be obtained by establishing the following concentration lemma. 
\begin{lemma}\label{lem:emp-lstd-concentration} 
Fix any unit vector $u$ (i.e., $\|u\|_2=1$), with probability at least $1-\delta$, 
\[
|u^\top \hatSigma^{-1/2}(\emp{A} \thetastar - \emp{b})| \approxleq \Vmax \sqrt{\frac{\log(1/\delta)}{n}},
\]	
where the guarantee is treated as vacuous if $\hatSigma$ is not invertible. 
As a corollary, w.p.~$\ge 1-\delta$, 
\[
\nrm{\hatSigma^{-1/2}(\emp{A} \thetastar - \emp{b})}_2 \approxleq \Vmax \sqrt{\frac{d + \log(1/\delta)}{n}}.
\]	
\end{lemma}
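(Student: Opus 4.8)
The plan is to expand $\emp{A}\thetastar-\emp{b}$ into an empirical average of mean-zero noise terms and then apply a \emph{conditional} concentration inequality. Writing out the empirical moments and invoking realizability (\cref{ass:realizability}), one gets $\emp{A}\thetastar-\emp{b} = \frac{1}{n}\sum_{i=1}^n \phi(s_i,a_i)\,\epsilon_i$, where $\epsilon_i := Q^\pi(s_i,a_i) - r_i - \gamma Q^\pi(s_i',a_i')$. Since $Q^\pi$ is the fixed point of $\Tcal^\pi$, taking the conditional expectation over $(r_i,s_i')$ given $(s_i,a_i)$ gives $\En[\epsilon_i \mid s_i,a_i] = Q^\pi(s_i,a_i) - (\Tcal^\pi Q^\pi)(s_i,a_i) = 0$, so each $\epsilon_i$ is conditionally mean-zero, and is bounded in magnitude by $O(\Vmax)$ via $Q^\pi\in[0,\Vmax]$ and $r_i\in[0,\Rmax]$. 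Projecting onto $u$ then yields $u^\top\hatSigma^{-1/2}(\emp{A}\thetastar-\emp{b}) = \frac{1}{n}\sum_i w_i\epsilon_i$ with $w_i := u^\top\hatSigma^{-1/2}\phi(s_i,a_i)$.

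The crux of the argument --- and the step I expect to require the most care --- is handling the dependence between the weights $w_i$ and the noise $\epsilon_i$. The key observation is that $\hatSigma$, and hence every $w_i$, is a deterministic function of the state-action pairs $\{(s_j,a_j)\}_{j=1}^n$ alone: it does not involve the rewards $\{r_j\}$ or the next-state samples $\{(s_j',a_j')\}$. Therefore, conditioning on the $\sigma$-field generated by $\{(s_j,a_j)\}_{j=1}^n$ freezes all the $w_i$ into fixed scalars, while the data tuples being i.i.d. keeps $\{\epsilon_i\}$ conditionally independent, each still mean-zero (its conditional law depends only on its own $(s_i,a_i)$) and bounded by $O(\Vmax)$. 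This is precisely the non-adaptivity exploited in the $\gamma=0$ contextual-bandit analysis of \cref{app:dim-free}, now extended to $\gamma>0$ by noting that the instrument $\phi(s_i,a_i)$ entering $\hatSigma$ is measurable even though $\epsilon_i$ is not.

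Given this conditioning, $\frac{1}{n}\sum_i w_i\epsilon_i$ is a weighted average of independent bounded mean-zero terms, so a Hoeffding bound gives, with probability at least $1-\delta$, $\abs*{\frac{1}{n}\sum_i w_i\epsilon_i} \approxleq \Vmax\sqrt{\frac{\frac{1}{n}\sum_i w_i^2}{n}\log(1/\delta)}$. The final simplification is the identity $\frac{1}{n}\sum_i w_i^2 = u^\top\hatSigma^{-1/2}\left(\frac{1}{n}\sum_i \phi(s_i,a_i)\phi(s_i,a_i)^\top\right)\hatSigma^{-1/2}u = u^\top\hatSigma^{-1/2}\hatSigma\hatSigma^{-1/2}u = \|u\|_2^2 = 1$, which holds whenever $\hatSigma$ is invertible (and the claim is vacuous otherwise). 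This yields the stated pointwise bound $\approxleq \Vmax\sqrt{\log(1/\delta)/n}$, with the crucial feature that the weights contribute only through $\frac{1}{n}\sum_i w_i^2=1$, giving no $d$ dependence.

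For the corollary I would upgrade the fixed-$u$ bound to a bound on $\nrm{\hatSigma^{-1/2}(\emp{A}\thetastar-\emp{b})}_2 = \sup_{\|u\|_2=1} u^\top\hatSigma^{-1/2}(\emp{A}\thetastar-\emp{b})$ by a standard covering argument: fix a $\tfrac12$-net $\mathcal{N}$ of the unit sphere $S^{d-1}$ with $\abs*{\mathcal{N}}\le 5^d$, apply the pointwise bound to each $u\in\mathcal{N}$ and union bound (replacing $\log(1/\delta)$ by $\log(\abs*{\mathcal{N}}/\delta) = O(d+\log(1/\delta))$), and then use the net inequality $\|v\|_2 \le 2\max_{u\in\mathcal{N}} u^\top v$ to pass from the net maximum to the full supremum. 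This produces the claimed $\approxleq \Vmax\sqrt{(d+\log(1/\delta))/n}$; the extra factor of $d$ relative to the pointwise bound is exactly the price of the union bound over all directions, consistent with the function-estimation versus return-estimation discussion in \cref{app:dim-free}.
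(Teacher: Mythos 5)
Your proposal is correct and follows essentially the same route as the paper's own proof: expand $\emp{A}\thetastar-\emp{b}$ into $\frac{1}{n}\sum_i \phi(s_i,a_i)\epsilon_i$ with $\epsilon_i = Q^\pi(s_i,a_i)-r_i-\gamma Q^\pi(s_i',a_i')$, condition on (equivalently, fix the design of) the $(s_i,a_i)$ pairs so that the weights $w_i = u^\top\hatSigma^{-1/2}\phi(s_i,a_i)$ are frozen while the $\epsilon_i$ remain conditionally independent, mean-zero, and $O(\Vmax)$-bounded, apply Hoeffding with unequal ranges using the identity $\frac{1}{n}\sum_i w_i^2 = \|u\|_2^2 = 1$, and upgrade to the norm bound via a constant-resolution net of the sphere with a union bound. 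The paper's proof is identical in substance, including the key observation that $\hatSigma$ depends only on the $(s_i,a_i)$'s (not on $r_i$ or $s_i'$), which is exactly what makes the fixed-design conditioning valid here even though it fails for the $\emp{A}^{-1}$-weighted directional bound.
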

\end{proof}

\begin{proof}[\pfref{lem:emp-lstd-concentration}]
	We first rewrite $\emp{A} \thetastar - \emp{b}$ as the average of terms across the data points. Here we temporarily use subscript $i$ to denote the $i$-th data point $(s_i, a_i, r_i, s_i')$ (in most of the main text we use subscript to denote the time step of a state or action, such as in $s_t$): 
	\begin{align*}
	\emp{A}\thetastar - \emp{b} &= \frac{1}{n} \sum_{i=1}^n \phi(s_i,a_i) \prn*{ \phi(s_i,a_i)^\top \thetastar - \gamma \phi(s'_i,a'_i)^\top \thetastar - r_i} \\
	&= \frac{1}{n} \sum_{i=1}^n \phi(s_i,a_i) \prn[\big]{\underbrace{Q^\pi(s_i,a_i) - r_i - \gamma Q^\pi(s'_i,a'_i)}_{\coloneqq \varepsilon_i}}.
	\end{align*}
	Thus, $\varepsilon_i$'s are independent zero-mean random variables when conditioned on the $(s_i, a_i)$ pairs. %
    We will treat the design over $s_i,a_i$ as fixed and non-random and only consider the case when $\hatSigma$ is invertible, since we provide vacuous guarantee otherwise. Note that the invertibility of $\hatSigma$ (which is determined by the $(s_i, a_i)$ pairs) will not affect the validity of our concentration inequality below due to the fixed-design analysis. 
    
    Let $l:= \hatSigma^{-1/2}(\emp{A} \thetastar - \emp{b})$, and
    \begin{align*}   
    u^\top l := u^\top \hatSigma^{-1/2} \cdot \frac{1}{n} \sum_{i=1}^n \phi_i \epsilon_i
    = \frac{1}{n} \sum_{i=1}^n u^\top \hatSigma^{-1/2} \phi_i \epsilon_i,
    \end{align*}
    where $\phi_i$ is a shorthand for $\phi(s_i,a_i)$. The expression  is the average of $n$ independently distributed zero-mean random variables. Since $\epsilon_i$'s are the only randomness in this lemma and $|\epsilon_i|\le 2\Vmax$, each summand's boundedness is 
    $$
    a_i := |u^\top \hatSigma^{-1/2} \phi_i| \cdot 2\Vmax.
    $$
    We now invoke Hoeffding's inequality for unequal ranges \citep{wainwright2019high}, which depends on the sum of range squared: 
    \begin{align*}
    \sum_{i=1}^n a_i^2 =4\Vmax^2 \sum_{i=1}^n u^\top \hatSigma^{-1/2} \phi_i \phi_i^\top \hatSigma^{-1/2} u 
    =  4\Vmax^2 \cdot u^\top \hatSigma^{-1/2} (n\hatSigma) \hatSigma^{-1/2} u = 4 \Vmax^2 n.
    \end{align*}
    While individual $a_i$ can be large ($\|\hatSigma^{-1/2}\phi_i\|_2$ can be as large as $\sqrt{n}$), the sum of their squares is $4\Vmax^2 n$, which is identical to the situation when each individual summand is bounded by $2\Vmax$, and Hoeffding's inequality does not distinguish between them. So w.p.~$\ge 1-\delta$, 
    $$
    |u^\top l| \approxleq \Vmax \sqrt{\frac{1}{n}\log\frac{1}{\delta}}. 
    $$
    This proves the first statement. For the second statement, note that $\|l\|_2 = \sup_{\|u\|_2= 1} u^\top l$, so it suffices to union bound over $\{u:\|u\|_2 = 1\}$ by a covering argument. Let $\Ucal\subset \{u:\|u\|_2 = 1\}$ be a cover to be specified later. By union bound, w.p.~$\ge 1-\delta$, 
    $$
    \max_{u'\in\Ucal} |u'^\top l| \approxleq \Vmax \sqrt{\frac{1}{n}\log\frac{|\Ucal|}{\delta}}.
    $$
    Now for any unit $u$, let $u'$ be the closest vector in $\Ucal$, and
    $$
    u^\top l \le |(u' + u - u')^\top l| \le |u'^\top l| + (u - u')^\top l  \le |u'^\top l| + \|l\|_2 \|u - u'\|_2.
    $$
    Taking supremum over $u$ on both hands, we obtain
    $$
    \|l\|_2 = \sup_{\|u\|_2=1} u^\top l \le \max_{u'\in\Ucal} |u'^\top l| + \|l\|_2 \|u - u'\|_2,
    $$
    so
    $$
    \|l \|_2 \le \frac{\max_{u'\in\Ucal} |u'^\top l|}{1-\|u - u'\|_2}.
    $$
    We now ensure that the multiplicative factor $1/(1-\|u -u'\|_2) \le 2$ by building a cover such that $\|u - u'\|_2 \le 1/2$, and this only requires a constant resolution, so $|\Ucal| = \cO(1)^d$. Combining the results so far and the lemma statement follows immediately.
\end{proof}

\subsection{Proof of \cref{thm:mainthm-pop}}
\label{app:proof-mainthm-pop}
\mainthmpop*

\begin{proof}[\pfref{thm:mainthm-pop}]  We obtain this from \cref{thm:mainthm-emp} by showing that $\cvrgemp$ can be replaced by $\cvrg$ at the cost of burn-in and lower-order terms. Note the following lemma, which we prove afterwards.

\begin{lemma}\label{lem:cvrgemp}
There exists $n_0 \approx \prn*{\frac{\Bphi^2 + \sigmamax(A)}{\sigmamin(A)}}^2\log(d/\delta)$ such that when $n\ge n_0$, w.p.~$\ge 1-\delta$, $\emp{A}$ and $\hatSigma$ are invertible, and
$$
\abs*{\cvrg - \cvrgemp} \le \prn*{1-\gamma}^2 \nrm*{\Sigma^{1/2}A^{-\top}\phi_0 - \hatSigma^{1/2}\emp{A}^{-\top}\phi_0}_2 = \cO(1/n^{1/4}),
$$
where the $\cO$ hides factors of $\frac{1}{\sigmamin(A)}$.
\end{lemma}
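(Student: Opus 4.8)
Since $\sqrt{\cvrg} = (1-\gamma)\nrm{\Sigma^{1/2}A^{-\top}\phi_0}_2$ and $\sqrt{\cvrgemp} = (1-\gamma)\nrm{\hatSigma^{1/2}\emp{A}^{-\top}\phi_0}_2$, the plan is to control the two coverage parameters by showing that the vectors $\Sigma^{1/2}A^{-\top}\phi_0$ and $\hatSigma^{1/2}\emp{A}^{-\top}\phi_0$ are close in $\ell_2$. The substance is an operator-norm perturbation analysis, propagated through the two operations in which $\cvrgemp$ differs from $\cvrg$: the matrix inverse $\emp{A}^{-\top}$ and the matrix square root $\hatSigma^{1/2}$. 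I would track only the dependence on $1/\sigmamin(A)$ and on fixed population quantities ($\Bphi$, $\sigmamax(A)$, $\sigmamax(\Sigmacov)$, $\nrm{\phi_0}_2$), consistent with the theorem's stated constants.

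First I would establish concentration and invertibility. Each of $\hatSigma,\hatSigmacr$ is an i.i.d.\ average of matrices of operator norm $\le \Bphi^2$, so matrix Hoeffding/Bernstein (applied to the Hermitian dilation for the asymmetric $\hatSigmacr$) gives, with probability $\ge 1-\delta$, that $\nrm{\hatSigma - \Sigmacov}_{\op}$ and $\nrm{\emp{A} - A}_{\op}$ are both $\approxleq \Bphi^2\sqrt{\log(d/\delta)/n}$. I would then pick $n_0$ so that this deviation is at most $\sigmamin(A)/2$; by Weyl's inequality this forces $\sigmamin(\emp{A}) \ge \sigmamin(A)/2 > 0$, hence $\emp{A}$ (and likewise $\hatSigma$) is invertible, with $\nrm{\emp{A}^{-1}}_{\op} \le 2/\sigmamin(A)$. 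Additionally requiring the deviation to be a small fraction of $\sigmamax(A)=\nrm{A}_{\op}$, so that $\nrm{\emp{A}}_{\op}$ and $\nrm{\hatSigma}_{\op}$ remain within a constant factor of their population values, reproduces exactly the stated burn-in $n_0 \approx \prn*{(\Bphi^2+\sigmamax(A))/\sigmamin(A)}^2\log(d/\delta)$.

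Next I would decompose $\Sigma^{1/2}A^{-\top}\phi_0 - \hatSigma^{1/2}\emp{A}^{-\top}\phi_0 = \Sigma^{1/2}(A^{-\top}-\emp{A}^{-\top})\phi_0 + (\Sigma^{1/2}-\hatSigma^{1/2})\emp{A}^{-\top}\phi_0$ and bound each piece. For the first term, the resolvent identity $A^{-1}-\emp{A}^{-1} = A^{-1}(\emp{A} - A)\emp{A}^{-1}$ gives $\nrm{A^{-\top}-\emp{A}^{-\top}}_{\op}\le \nrm{A^{-1}}_{\op}\nrm{\emp{A}^{-1}}_{\op}\nrm{\emp{A} - A}_{\op} \approxleq \sigmamin(A)^{-2}\Bphi^2\sqrt{\log(d/\delta)/n}$, an $O(1/\sqrt n)$ contribution. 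For the second term I would invoke the eigenvalue-free H\"older-$\tfrac12$ continuity of the matrix square root, $\nrm{\hatSigma^{1/2}-\Sigma^{1/2}}_{\op}\le \nrm{\hatSigma-\Sigmacov}_{\op}^{1/2}$ (valid for PSD matrices), together with $\nrm{\emp{A}^{-\top}\phi_0}_2 \le 2\nrm{\phi_0}_2/\sigmamin(A)$; this contributes $\approxleq \sigmamin(A)^{-1}\prn*{\Bphi^2\sqrt{\log(d/\delta)/n}}^{1/2} = O(n^{-1/4})$ and is the dominant term.

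Combining, $\nrm{\Sigma^{1/2}A^{-\top}\phi_0 - \hatSigma^{1/2}\emp{A}^{-\top}\phi_0}_2 = O(n^{-1/4})$, which gives the right-hand side of the lemma. To reach the coverage difference itself, I would write $\sqrt{\cvrg}$ and $\sqrt{\cvrgemp}$ as $(1-\gamma)$ times the norms of these two vectors and apply the reverse triangle inequality, $\abs{\sqrt{\cvrg}-\sqrt{\cvrgemp}} \le (1-\gamma)\nrm{\Sigma^{1/2}A^{-\top}\phi_0 - \hatSigma^{1/2}\emp{A}^{-\top}\phi_0}_2$, and then factor $\abs{\cvrg-\cvrgemp}\le \prn*{\sqrt{\cvrg}+\sqrt{\cvrgemp}}\abs{\sqrt{\cvrg}-\sqrt{\cvrgemp}}$, where the prefactor is an instance-dependent constant. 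The main obstacle is the square-root term: the matrix square root is in general only H\"older-$\tfrac12$ continuous, which is precisely what caps the rate at $n^{-1/4}$. One could upgrade to $n^{-1/2}$ using Lipschitz continuity of the square root, but only at the cost of a $1/\sqrt{\lambdamin(\Sigmacov)}$ factor, which this theorem deliberately avoids (the alternative trade-off is handled separately in \cref{app:lossmin}). Keeping the $\Sigmacov$-spectrum out of the estimate while this contribution remains in the lower-order $o(1/\sqrt n)$ part is the delicate accounting point.
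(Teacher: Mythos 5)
Your proposal is correct and takes essentially the same route as the paper's proof: matrix-Bernstein concentration for $\hatSigma$ and $\emp{A}$, a burn-in forcing $\nrm{\emp{A}-A}_2\le\sigmamin(A)/2$ (hence invertibility and $\nrm{\emp{A}^{-1}}_2\le 2/\sigmamin(A)$), and an add-and-subtract decomposition of $\Sigma^{1/2}A^{-\top}\phi_0-\hatSigma^{1/2}\emp{A}^{-\top}\phi_0$ into an inverse-perturbation term (resolvent identity, $O(n^{-1/2})$) plus a square-root term bounded via $\nrm{\hatSigma^{1/2}-\Sigma^{1/2}}_2\le\nrm{\hatSigma-\Sigma}_2^{1/2}$, which dominates at $O(n^{-1/4})$. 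The one place you diverge is in fact an improvement: your factorization $\abs*{\cvrg-\cvrgemp}\le\prn*{\sqrt{\cvrg}+\sqrt{\cvrgemp}}\abs*{\sqrt{\cvrg}-\sqrt{\cvrgemp}}$ correctly accounts for the bounded prefactor that the paper silently drops when it equates $\abs*{\cvrg-\cvrgemp}$ with $(1-\gamma)^2$ times a difference of \emph{unsquared} norms, and both treatments yield the stated $O(n^{-1/4})$ conclusion.
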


We also recall \cref{lem:emp-lstd-concentration}. We union bound over the 3 events across these two lemmas, each assigning $\delta/3$ failure probability: note that \cref{lem:emp-lstd-concentration} contains two statements, and we require the first holds with $u$ being the normalized vector of $\Sigma^{1/2} A^{-\top} \phi_0$. This way, \cref{lem:emp-lstd-concentration} guarantees that:
\begin{align} \label{eq:lem1-u}
|\phi_0^\top A^{-1} \Sigma^{-1/2} \hatSigma^{-1/2}(\emp{A} \thetastar - \emp{b})| \approxleq &~ \nrm{\phi_0^\top A^{-1} \Sigma^{-1/2}}_2 \cdot \Vmax \sqrt{\frac{\log(1/\delta)}{n}}, \\ \label{eq:lem1-all}
\nrm{\hatSigma^{-1/2}(\emp{A} \thetastar - \emp{b})}_2 \approxleq &~ \Vmax \sqrt{\frac{d + \log(1/\delta)}{n}}.
\end{align}
Under the above high-probability events, we have $\emp{A}$ and $\hatSigma$ invertible, and
\begin{align*}
&~ \abs*{J_{\estQ}(\pi) - J(\pi)} =  \abr{\phi_0^\top(\theta^\star - \widehat{\theta})} = \abr{\phi_0^\top \widehat{A}^{-1} \widehat{\Sigma}^{1/2}\widehat{\Sigma}^{-1/2}\rbr{\widehat{A}\theta^\star - \emp{b}}} \\
\leq &~ \underbrace{\abr{\rbr{ \phi_0^\top \widehat{A}^{-1} \widehat{\Sigma}^{1/2} -  \phi_0^\top A^{-1}\Sigma^{1/2}}\cdot \widehat{\Sigma}^{-1/2}\rbr{\widehat{A}\theta^\star - \emp{b}}}}_{\textrm{(I)}} 
 + \underbrace{\abr{ \phi_0^\top A^{-1}\Sigma^{1/2} \cdot \widehat{\Sigma}^{-1/2}\rbr{ \widehat{A}\theta^\star - \emp{b}}}}_{\textrm{(II)}}.
\end{align*}
Term (II) is exactly the LHS of \cref{eq:lem1-u}. Noting that $\nrm{\phi_0^\top A^{-1} \Sigma^{-1/2}}_2 = \cvrg / (1-\gamma)$, this immediately gives us the $1/\sqrt{n}$ term in \cref{eq:pop_bound_lazy}. 
It remains to show that term (I) is $o(\sqrt{1/n})$. 
\begin{align*}
\textrm{(I)} \le &~ 
\nrm{\phi_0^\top \emp{A}^{-1} \hatSigma^{1/2}- \phi_0^\top A^{-1}\Sigma^{1/2}}_2 \, \nrm{{\hatSigma}^{-1/2}\rbr{ \emp{A}\thetastar - \emp{b}}}_2.
\end{align*}
Eq.\eqref{eq:lem1-all} shows that $\nrm{{\hatSigma}^{-1/2}\rbr{ \emp{A}\thetastar - \emp{b}}}_2 = \cO(1/\sqrt{n})$, and \cref{lem:cvrgemp} shows that $\nrm{\phi_0^\top \emp{A}^{-1} \hatSigma^{1/2}- \phi_0^\top A^{-1}\Sigma^{1/2}}_2 = \cO(1/n^{1/4})$. Put together, term (I) is $\cO(1/n^{3/4})$ which proves the claim.

\end{proof}

\begin{proof}[\pfref{lem:cvrgemp}]
We first give a matrix concentration lemma, whose proof is postponed until the end of this lemma. 
\begin{lemma}\label{lem:matrix-concentration} 
When $n \gtrsim \prn*{\frac{B^2_\phi + \sigmamax(A)}{\sigmamin(A)}}^2\log(d/\delta)$, with probability at least $1-\delta$, we have: 
   \begin{equation}\label{eq:Sigma-bound}
    \nrm{\Sigma - \hatSigma}_2 \approxleq  \eps(\Sigma) \coloneqq \sqrt{\frac{\lambdamax(\Sigma)\prn{\Bphi^2 + \lambdamax(\Sigma)}\log(d/\delta)}{n}}.
    \end{equation}
    and
    \[
    \nrm{A - \emp{A}}_2 \approxleq  \eps(A) \coloneqq \prn*{\Bphi^2 + \sigmamax(A)}\sqrt{\frac{\log(d/\delta)}{n}},
    \]
    and
    \begin{equation}\label{eq:A_inv-bound}
 \nrm{\emp{A}^{-1} - A^{-1}}_2 \leq \frac{1}{\sigmamin(A)^2} \nrm{A - \emp{A}}_2 = \cO\prn*{\frac{\Bphi^2 + \sigmamax(A)}{\sigmamin(A)^2}\sqrt{\frac{\log(d/\delta)}{n}}}.
    \end{equation}
As a consequence, $\emp{A}$ and $\emp{\Sigma}$ are invertible with high probability. 
\end{lemma}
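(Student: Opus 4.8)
The plan is to prove all three operator-norm estimates with the matrix Bernstein inequality applied to i.i.d.\ sums, and then obtain the inverse bound \eqref{eq:A_inv-bound} and the invertibility claim from a first-order perturbation argument, once the burn-in condition forces each perturbation to be small relative to $\sigmamin(A)$ and $\lambdamin(\Sigma)$ (both positive under \cref{ass:Ainverse}). For \eqref{eq:Sigma-bound} I would write $\hatSigma - \Sigma = \frac1n\sum_{i}(\phi_i\phi_i^\top - \Sigma)$ as a sum of centered symmetric matrices. Each summand has operator norm at most $\Bphi^2 + \lambdamax(\Sigma)\approxleq\Bphi^2$ (using $\lambdamax(\Sigma)\le\En_\data\nrm{\phi}_2^2\le\Bphi^2$), and the matrix variance is controlled by $\nrm{\En_\data[(\phi\phi^\top)^2]}_2=\nrm{\En_\data[\nrm{\phi}_2^2\,\phi\phi^\top]}_2\le\Bphi^2\lambdamax(\Sigma)$. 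Matrix Bernstein \citep{wainwright2019high} then produces a bound of the form $\sqrt{\Bphi^2\lambdamax(\Sigma)\log(d/\delta)/n}+\Bphi^2\log(d/\delta)/n$; the stated burn-in makes the linear term lower order, leaving $\eps(\Sigma)$ up to the slack between $\Bphi^2\lambdamax(\Sigma)$ and $\lambdamax(\Sigma)(\Bphi^2+\lambdamax(\Sigma))$.

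For $\nrm{A-\emp{A}}_2$, the relevant i.i.d.\ matrices $Y_i:=\phi_i(\phi_i-\gamma\phi_i')^\top$, with $A=\En_\data[Y_i]$ and $\emp{A}=\frac1n\sum_i Y_i$, are asymmetric, so I would pass to the Hermitian dilation (equivalently, invoke the rectangular form of matrix Bernstein) to handle them. Each centered summand obeys $\nrm{Y_i-A}_2\le(1+\gamma)\Bphi^2+\sigmamax(A)\approxleq\Bphi^2+\sigmamax(A)=:R$, and here it suffices to use the crude variance proxy $R^2$ (since $\En[(Y_i-A)(Y_i-A)^\top]\preceq R^2 I$ and likewise for the other ordering). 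Bernstein then gives $\nrm{A-\emp{A}}_2\approxleq R\sqrt{\log(d/\delta)/n}=\eps(A)$, with the linear $R\log(d/\delta)/n$ term absorbed into the leading term exactly when $n\gtrsim(R/\sigmamin(A))^2\log(d/\delta)$.

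The inverse bound and invertibility follow from the identity $\emp{A}^{-1}-A^{-1}=-\emp{A}^{-1}(\emp{A}-A)A^{-1}$. The burn-in ensures $\eps(A)\le\tfrac12\sigmamin(A)$, so Weyl's inequality gives $\sigmamin(\emp{A})\ge\sigmamin(A)-\nrm{A-\emp{A}}_2\ge\tfrac12\sigmamin(A)>0$; hence $\emp{A}$ is invertible with $\nrm{\emp{A}^{-1}}_2\le 2/\sigmamin(A)$, and $\nrm{\emp{A}^{-1}-A^{-1}}_2\le\nrm{\emp{A}^{-1}}_2\nrm{A^{-1}}_2\nrm{A-\emp{A}}_2\le\frac{2}{\sigmamin(A)^2}\eps(A)$, which matches \eqref{eq:A_inv-bound} up to constants (and agrees with its clean leading-order factor $1/\sigmamin(A)^2$). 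The invertibility of $\hatSigma$ is identical, using $\lambdamin(\hatSigma)\ge\lambdamin(\Sigma)-\nrm{\Sigma-\hatSigma}_2>0$ under the same burn-in.

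The main obstacle is bookkeeping rather than conceptual: correctly handling the non-Hermitian matrix $A$ through the dilated/rectangular Bernstein inequality, and tracking variance proxies so that the refined variance $\Bphi^2\lambdamax(\Sigma)$ is used for $\Sigma$ (to match the tighter $\eps(\Sigma)$) while the crude range-squared proxy is retained for $A$. In both cases the delicate point is verifying that the quoted burn-in threshold is precisely what demotes the linear Bernstein term below the $1/\sqrt{n}$ term and what guarantees $\eps(A)\le\tfrac12\sigmamin(A)$, so that the perturbation step and the invertibility conclusions go through simultaneously.
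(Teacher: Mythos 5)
Your treatment of the three concentration bounds is essentially the paper's own proof: the same matrix Bernstein applications with the same range and variance proxies (the paper's \cref{lem:matrix_bern} is already stated for rectangular matrices, so the asymmetry of $A$ requires no separate dilation step), and your identity $\emp{A}^{-1}-A^{-1}=-\emp{A}^{-1}(\emp{A}-A)A^{-1}$ combined with Weyl's inequality for singular values is a self-contained reproof of the perturbation lemma of \citet{stewart1990matrix} that the paper invokes to get \eqref{eq:A_inv-bound}; both routes give the same bound up to constants, and your observation that the burn-in is exactly the condition $\nrm{A-\emp{A}}_2\le\sigmamin(A)/2$ matches the paper.

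The genuine gap is your invertibility argument for $\hatSigma$. You conclude it from $\lambdamin(\hatSigma)\ge\lambdamin(\Sigma)-\nrm{\Sigma-\hatSigma}_2>0$ ``under the same burn-in,'' but the stated burn-in is calibrated to $\sigmamin(A)$, not to $\lambdamin(\Sigma)$, and it does not imply $\nrm{\Sigma-\hatSigma}_2<\lambdamin(\Sigma)$: that would require $n\gtrsim\lambdamax(\Sigma)\prn*{\Bphi^2+\lambdamax(\Sigma)}\log(d/\delta)/\lambdamin(\Sigma)^2$, and $\sigmamin(A)$ can exceed $\lambdamin(\Sigma)$ by an arbitrarily large factor. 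Concretely, take $d=2$, $\gamma$ a constant, and let $\data$ put mass $1/2$ on two state--action pairs with features $\Bphi\be_1$ and $\alpha\be_2$ (small $\alpha>0$), whose next-step features under $\pi$ are $\Bphi\be_2$ and $\Bphi\be_1$ respectively. Then $\Sigma=\tfrac12\diag(\Bphi^2,\alpha^2)$ and $\Sigmacr=\tfrac12\prn*{\Bphi^2\be_1\be_2^\top+\alpha\Bphi\be_2\be_1^\top}$, and a direct computation gives $\sigmamin(A)=\Theta(\Bphi\alpha)$ while $\lambdamin(\Sigma)=\alpha^2/2$; at the stated burn-in $n\asymp\prn*{\Bphi/\alpha}^2\log(d/\delta)$, the bound \eqref{eq:Sigma-bound} only gives $\nrm{\Sigma-\hatSigma}_2\lesssim\Bphi\alpha\gg\alpha^2$, so your Weyl lower bound on $\lambdamin(\hatSigma)$ is vacuous. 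The fix is what the paper's terse ``and thus $\hatSigma$'' refers to: invertibility of $\hatSigma$ follows from the invertibility of $\emp{A}$ that you have already established, because $\ker(\hatSigma)\subseteq\ker(\emp{A}^\top)$ --- if $\hatSigma u=0$ then $\frac{1}{n}\sum_i(\phi_i^\top u)^2=0$, hence $\phi_i^\top u=0$ for every $i$, hence $\emp{A}^\top u=\frac{1}{n}\sum_i(\phi_i-\gamma\phi_i')(\phi_i^\top u)=0$, which forces $u=0$ when $\emp{A}$ is invertible. With that replacement your proof is complete.
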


Since $\emp{A}$ and $\emp{\Sigma}$ are invertible, we can write:   %
    \begin{align*}
    \abs*{\cvrg - \cvrgemp} &= \prn*{1-\gamma}^2 \abs*{\nrm{\Sigma^{1/2}A^{-\top}\phi_0}_2 - \nrm{\hatSigma^{1/2}\emp{A}^{-\top}\phi_0}_2} \\
    &\leq \prn*{1-\gamma}^2 \nrm*{\Sigma^{1/2}A^{-\top}\phi_0 - \hatSigma^{1/2}\emp{A}^{-\top}\phi_0}_2 \\
    &\leq \prn*{1-\gamma}^2\prn*{ \nrm*{\hatSigma^{1/2}\prn*{A^{-\top}- \emp{A}^{-\top}}\phi_0}_2 + \nrm*{\prn*{ \hatSigma^{1/2}- \Sigma^{1/2}}A^{-\top}\phi_0}_2} \\
    &\leq \prn*{1-\gamma}^2 \Bphi \prn*{ \nrm*{\hatSigma^{1/2}}_2\nrm{A^{-1} - \emp{A}^{-1}}_2 + \nrm*{\hatSigma^{1/2} - \Sigma^{1/2}}_2\nrm{A^{-1}}_2}.
    \end{align*}
The second line is the intermediate quantity in the lemma statement and the rest of the analysis always bounds $|\cvrg - \cvrgemp|$ through the above relaxation, so we will not separately mention the bound on $\nrm*{\Sigma^{1/2}A^{-\top}\phi_0 - \hatSigma^{1/2}\emp{A}^{-\top}\phi_0}_2$. 
 
Let $\varepsilon(\Sigma^{1/2}) = \nrm{\Sigma^{1/2} - \hatSigma^{1/2}}_2$ and $\varepsilon(A^{-1}) = \nrm{A^{-1} - \emp{A}^{-1}}_2$. Note that the above inequalities imply
    \begin{equation}\label{eq:bleed}
    \abs*{\cvrg - \cvrgemp} \leq (1-\gamma)^2 \Bphi \prn*{\prn*{\lambdamax(\Sigma^{1/2}) + \varepsilon(\Sigma^{1/2})}\varepsilon(A^{-1}) + \varepsilon(\Sigma^{1/2})\frac{1}{\sigmamin(A)}}.
    \end{equation}
    We conclude by bounding $\varepsilon(\Sigma^{1/2})$ and $\varepsilon(A^{-1})$. The bound on $\varepsilon(A^{-1})$ follows from \eqref{eq:A_inv-bound}. To obtain a bound on $\varepsilon(\Sigma^{1/2})$, we can use \eqref{eq:Sigma-bound} in combination with the  the inequality $\nrm{\Sigma^{1/2} - \hatSigma^{1/2}}_2 \leq \sqrt{\nrm{\Sigma - \hatSigma}_2}$ \citep{vanHemmen1980Inequality}\footnote{See also this answer by user ``\texttt{jlewk}'' on Math StackExchange: \url{https://math.stackexchange.com/a/3968174}},  to obtain:
    \[
    \nrm{\Sigma^{1/2}-\hatSigma^{1/2}}_2 \leq \sqrt{\nrm{\Sigma - \hatSigma}_2} \leq \sqrt{\eps(\Sigma)} = \cO\prn*{\prn*{\frac{\lambdamax(\Sigma)\prn{\Bphi^2 + \lambdamax(\Sigma)}\log(d/\delta)}{n}}^{1/4}}.
    \]

Returning to Eq. \eqref{eq:bleed} and combining everything, we have:
\begin{align*}
\abs*{\cvrg - \cvrgemp} &\leq \frac{(1-\gamma)^2\Bphi\prn*{\Bphi^2 + \sigmamax(A)}}{\sigmamin(A)^2}\sqrt{\frac{\log(d/\delta)}{n}}\prn[\Big]{\prn[\Big]{\frac{\lambdamax(\Sigma)\prn{\Bphi^2 + \lambdamax(\Sigma)}\log(d/\delta)}{n}}^{1/4} \\
&\hspace{20em} +\sqrt{\lambdamax(\Sigma)}} \\
&\qquad + \frac{(1-\gamma)^2\Bphi}{\sigmamin(A)} \prn[\Big]{\frac{\lambdamax(\Sigma)\prn{\Bphi^2 + \lambdamax(\Sigma)}\log(d/\delta)}{n}}^{1/4} \\
&= \cO(1/n^{1/4}). 
\end{align*}
\end{proof}

\begin{proof}[\pfref{lem:matrix-concentration}]
	We firstly establish the bound on $\nrm{\hatSigma - \Sigma}_2$. To do this, we use Matrix Bernstein (\cref{lem:matrix_bern}). Abbreviate $X_i \coloneqq \phi(s_i,a_i)$, and let $Z_i = X_i X_i^\top - \Sigma$ be the centered matrices. For the almost sure bound, we have 
	\[
	\nrm{Z_i}_2 \leq \nrm{X_i X_i^\top}_2 + \nrm{\Sigma}_2 \leq \nrm{X_i}_2^2 + \lambdamax(\Sigma) \leq \Bphi^2 + \lambdamax(\Sigma).
	\]
	For the variance term, we have:
    \begin{align*}
    \nrm*{\En\brk*{(X_i X_i^\top - \Sigma)^2}}_2 &= \nrm*{\En\brk*{(X_iX_i^\top)^2} - \Sigma^2}_2\\
    &\leq \nrm*{\Bphi^2 \En\brk*{X_i X_i^\top} - \Sigma^2}_2 \\
    &\leq \Bphi^2 \lambdamax(\Sigma) + \lambdamax(\Sigma)^2.
    \end{align*}
This yields 
\begin{align*}
	\nrm{\hat\Sigma - \Sigma}_2 &\leq \sqrt{\frac{2\lambdamax(\Sigma)\prn{\Bphi^2 + \lambdamax(\Sigma)}\log(2d/\delta)}{n}} + \frac{2(\Bphi^2+\lambdamax(\Sigma))\log(2d/\delta)}{3n} \\
	& \approxleq \sqrt{\frac{\lambdamax(\Sigma)\prn{\Bphi^2 + \lambdamax(\Sigma)}\log(d/\delta)}{n}}.
\end{align*}
We now establish the bound on $\nrm{\emp{A} - A}_2$ again via Matrix Bernstein (\cref{lem:matrix_bern}). Define the notation $X_i = \phi(s_i,a_i)$ and $X'_i = \phi(s_i,a_i) - \gamma \phi(s'_i,a'_i)$. Then we let $Z_i = X_i (X'_i)^\top - A$ denote the centered matrices. For the almost sure norm bound, we have
 \begin{align*}
 \nrm{Z_i}_2 \leq \nrm{X_i}_2\nrm{X'_i}_2 + \nrm{A}_2 \leq \Bphi^2(1+\gamma) + \sigmamax(A) &\leq 2\Bphi^2 + \sigmamax(A). 
 \end{align*}
 For the variance terms, we have:
 \begin{align*}
 	\nrm*{\En\brk*{\prn*{X_i (X'_i)^\top - A}\prn*{X_i (X'_i)^\top - A}^\top}}_2 &= \nrm*{\En\brk*{X_i (X'_i)^\top X'_i X_i^\top - A X'_i X_i^\top - X'_i X_i^\top A + AA^\top}}_2  \\
 	&\leq 4\Bphi^2 \nrm*{\En\brk*{X_i X_i^\top}}_2 + \nrm*{AA^\top}_2 \\
 	&= 4\Bphi^2 \lambdamax(\Sigma) + \sigmamax(A)^2, 
 \end{align*}
 as well as 
 \begin{align*}
 	 	\nrm*{\En\brk*{\prn*{X_i (X'_i)^\top - A}^\top\prn*{X_i (X'_i)^\top - A}}}_2 &= \nrm*{\En\brk*{X'_i (X_i)^\top X_i (X'_i)^\top - A^\top X_i (X'_i)^\top - X'_i X_i^\top A + A^\top A}}_2  \\
 	&\leq 4\Bphi^4 + \nrm*{AA^\top}_2 \\
 	&= 4\Bphi^4 + \sigmamax(A)^2.
 \end{align*}
	With the latter, the variance term and the norm bound are of the same order, which gives
 \[
 \nrm{\emp{A} - A}_2 \approxleq \prn*{\Bphi^2 + \sigmamax(A)}\sqrt{\frac{\log(d/\delta)}{n}}.
 \]
 Then, to bound $\nrm{A - \emp{A}^{-1}}_2$, we note the following lemma. 
\begin{lemma}[\citep{stewart1990matrix}] Let $A \in \bbR^{m \times n}$, with $m \geq n$ and let $\wt{A} = A + E$. Then, 
    \[
    \eps(A^{-1}) \leq \frac{1+\sqrt{5}}{2} \max\crl*{\nrm{A^{-1}}_2,\nrm{\wt{A}^{-1}}_2} \nrm{E}_2.
    \]
Furthermore, if $\nrm{E}_2 \leq \sigmamin(A)/2$, then 
\[
    \nrm{\wt{A}^{-1} - A^{-1}}_2 \approxleq \nrm{A^{-1}}^2_2 \nrm{E}_2.
\]
\end{lemma}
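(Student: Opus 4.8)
The plan is to read $A^{-1}$ as the left pseudoinverse $A^\dagger = (A^\top A)^{-1}A^\top$, which exists because $m \ge n$ and $A$ has full column rank, and to control the perturbation $\|\wt{A}^\dagger - A^\dagger\|_2$ through a pseudoinverse perturbation identity. In the square, invertible regime that the application actually invokes (where $A = \Sigma - \gamma\Sigmacr \in \bbR^{d\times d}$), the argument is elementary: from $A - \wt{A} = -E$, left-multiplying by $\wt{A}^{-1}$ and right-multiplying by $A^{-1}$ gives the resolvent identity $\wt{A}^{-1} - A^{-1} = -\wt{A}^{-1}E A^{-1}$. Submultiplicativity of the spectral norm then yields $\|\wt{A}^{-1} - A^{-1}\|_2 \le \|\wt{A}^{-1}\|_2\,\|A^{-1}\|_2\,\|E\|_2 \le \max\{\|A^{-1}\|_2,\|\wt{A}^{-1}\|_2\}^2\,\|E\|_2$, which is the first bound with an even better constant than $\tfrac{1+\sqrt5}{2}$.

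For the genuinely rectangular statement the resolvent identity fails and I would instead invoke Wedin's identity,
\[
\wt{A}^\dagger - A^\dagger = -\wt{A}^\dagger E A^\dagger + \wt{A}^\dagger(\wt{A}^\dagger)^\top E^\top (I - AA^\dagger) + (I - \wt{A}^\dagger\wt{A}) E^\top (A^\dagger)^\top A^\dagger.
\]
The third term vanishes whenever $\wt{A}$ retains full column rank (so that $\wt{A}^\dagger\wt{A} = I_n$), and the second vanishes as well in the square case (where $AA^\dagger = I$). Bounding the surviving terms using $\|I - AA^\dagger\|_2 \le 1$ and $\|\wt{A}^\dagger(\wt{A}^\dagger)^\top\|_2 = \|\wt{A}^\dagger\|_2^2$ gives the claim; the sharp spectral-norm constant $\tfrac{1+\sqrt5}{2}$ (rather than the loose $2$ that a direct triangle inequality produces) is exactly the refinement established by Stewart, which I would cite rather than re-derive.

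The ``furthermore'' clause is a one-line corollary. When $\|E\|_2 \le \sigmamin(A)/2$, Weyl's inequality for singular values gives $\sigmamin(\wt{A}) \ge \sigmamin(A) - \|E\|_2 \ge \sigmamin(A)/2$, so $\wt{A}$ keeps full column rank and $\|\wt{A}^{-1}\|_2 = 1/\sigmamin(\wt{A}) \le 2/\sigmamin(A) = 2\|A^{-1}\|_2$. Hence $\max\{\|A^{-1}\|_2,\|\wt{A}^{-1}\|_2\} \le 2\|A^{-1}\|_2$, and substituting into the first bound gives $\|\wt{A}^{-1} - A^{-1}\|_2 \approxleq \|A^{-1}\|_2^2\,\|E\|_2$, which is precisely the estimate used in \cref{eq:A_inv-bound} after identifying $\|A^{-1}\|_2 = 1/\sigmamin(A)$.

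The only real subtlety I anticipate is the rectangular bookkeeping behind the first bound: confirming that rank is preserved so the nuisance projector term drops, and recovering the exact golden-ratio constant rather than a crude factor of $2$. Neither matters downstream, however, since the application is square and invertible and only uses the order of the bound, so the resolvent identity suffices and Stewart's theorem can simply be cited for the general constant.
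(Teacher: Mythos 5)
Your proposal is correct, and it takes a genuinely different route from the paper for the simple reason that the paper has no proof of this statement at all: the lemma is imported verbatim from \citet{stewart1990matrix}, and the ``furthermore'' clause is then applied (to the square, invertible matrix $\emp{A}$) without derivation. Your resolvent identity $\wt{A}^{-1}-A^{-1}=-\wt{A}^{-1}EA^{-1}$ plus Weyl's inequality gives a self-contained, elementary proof of everything the paper actually uses downstream in \cref{eq:A_inv-bound}, with Wedin's decomposition and the Stewart citation needed only for the genuinely rectangular case, which never arises in the application. One discrepancy is worth flagging explicitly rather than silently absorbing: the bound as printed in the paper, $\frac{1+\sqrt{5}}{2}\max\crl{\nrm{A^{-1}}_2,\nrm{\wt{A}^{-1}}_2}\nrm{E}_2$, is dimensionally inconsistent and is evidently a typo for Stewart's actual theorem, which has the maximum of the \emph{squared} norms, $\max\crl{\nrm{A^{-1}}_2^2,\nrm{\wt{A}^{-1}}_2^2}$. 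Your derivation produces exactly this squared form (via $\nrm{\wt{A}^{-1}}_2\nrm{A^{-1}}_2\nrm{E}_2\le\max\crl{\nrm{A^{-1}}_2,\nrm{\wt{A}^{-1}}_2}^2\nrm{E}_2$), so your claim to have recovered ``the first bound with a better constant'' is true of the corrected statement but not of the printed one --- and indeed the printed one cannot be right, since substituting $\nrm{\wt{A}^{-1}}_2\le 2\nrm{A^{-1}}_2$ into it would yield $\nrm{A^{-1}}_2\nrm{E}_2$ rather than the $\nrm{A^{-1}}_2^2\nrm{E}_2$ asserted in the furthermore clause. In short: your proof is sound, it covers the paper's use case completely (with a better constant in the square case), and in passing it exposes and repairs a typo in the paper's statement of the lemma.
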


This immediately implies that, for $\nrm{A - \emp{A}}_2 \leq \sigmamin(A)/2$, we have
\[
 \nrm{\emp{A}^{-1} - A^{-1}}_2 \leq \frac{1}{\sigmamin(A)^2} \nrm{A - \emp{A}}_2 = \cO\prn*{\frac{\Bphi^2 + \sigmamax(A)}{\sigmamin(A)^2}\sqrt{\frac{\log(d/\delta)}{n}}}
\]
This latter condition is equivalent to 
\[
\prn*{\Bphi^2 + \sigmamax(A)}\sqrt{\frac{\log(d/\delta)}{n}} \approxleq \frac{\sigmamin(A)}{2} \implies n \gtrsim \prn*{\frac{B^2_\phi + \sigmamax(A)}{\sigmamin(A)}}^2\log(d/\delta), 
\]
which we have set as the burn-in time in the lemma statement. Lastly, \cref{eq:A_inv-bound} and the reverse triangle inequality shows that $\hat{A}$ (and thus $\hatSigma$) are invertible with high probability.
\end{proof}

\section{Proofs of \cref{sec:interpret}}

\subsection{Proof of \cref{prop:lin_dyn}}
\lindyn*
\label{app:proof-lin_dyn}

\begin{proof}[\pfref{prop:lin_dyn}]
    Recall that we defined $\dyn = (\Sigma^{-1} \Sigmacr)^\top$. We note that
\[
	A = \Sigma - \gamma \Sigmacr = \Sigma(I - \gamma (\dyn)^\top).
\]
Substituting this into $\cvrg$, we arrive at the expression:
\begin{equation}\label{eq:cvrg-lin-dyn}
	\cvrg = (1-\gamma)^2 \iniphi^\top A^{-1} \Sigma A^{-T} \iniphi = (1-\gamma)^2 \iniphi^\top (I - \gamma \dyn)^{-\top} \Sigma^{-1} (I - \gamma \dyn)^{-1} \iniphi.
\end{equation}
Note that when $\rho(\dyn) < 1/\gamma$, the matrix $(I - \gamma \dyn)^{-1}$ has the series expansion:
\[
(I - \gamma \dyn)^{-1} = \sum_{t=0}^\infty \gamma^t \prn{\dyn}^t.
\]
Thus, we notice that
\[
(I - \gamma \dyn)^{-1} \phi_0 = \sum_{t=0}^\infty \gamma^t (\dyn)^t \phi_0 = \sum_{t=0}^\infty \gamma^t \xt = \frac{1}{1-\gamma} \ocphi.
\]
Substituting this into Eq. \eqref{eq:cvrg-lin-dyn} gives the result. 
\end{proof}

\subsection{New On-policy Condition} \label{app:on-policy}
Prior to our work, the only notable discussion of coverage in LSTD(Q) is its benign guarantees in a strict on-policy setting, where data distribution $\data$ is an invariant distribution of $P^\pi$. One can further relax this condition to $\Sigmacr \preceq \beta \Sigma$ as $\beta=1$ when $\data$ is invariant under $\pi$. This condition is handled by \citet{perdomo_sharp_2022} and inherited by our results. 

Given the dynamical system interpretation in \cref{sec:lin_dyn}, we are able to establish a novel on-policy result as follows:
\onpolicy*
The condition $\Sigmacr \preceq \beta \Sigma$ relates the \textit{distributions} of $(s,a)$ and $(s',a')$, where \cref{prop:on-policy} only requires their means to be identical. We provide a proof below. %

\begin{proof}[Proof of \cref{prop:on-policy}]
Let $(\phi(s,a)\mapsto Y)$ be any regression problem where $(s,a) \sim \data$ and $Y$ is an arbitrary real-valued random variable jointed distributed with $(s,a)$. Let $\theta_Y = \Sigma^{-1} \En[\phi(s,a) Y]$ (where $\En[\cdot]$ here is w.r.t.~the above joint distribution) be the population solution to linear regression. Then, when $\phi$ contains a bias term, it is known that
$$
\En[\phi(s,a)^\top \theta_Y] = \En[Y],
$$
even if $\En[Y|s,a]$ is not linear in $\phi$. This can be proved easily by multiplying $\theta_0$ on both sides of $\Sigma \times \theta_Y = \En[\phi(s,a) Y]$:
$$
\theta_0^\top \En[\phi(s,a)\phi(s,a)^\top] \theta_Y = \theta_0^\top \En[\phi(s,a) Y] ~~ \Rightarrow ~~ \En[\phi(s,a)^\top \theta_Y] = \En[Y].
$$
Since $\dyn$ linearly predicts each coordinate of $\phi(s',a')$ separately, we apply the above result and obtain
$$
\En_{\data}[\phi(s',a')] = \En_{\data} [\dyn \phi(s,a)] = \dyn  \En_{\data}[\phi(s,a)]. 
$$
Let $\phi_D = \En_{\data}[\phi(s,a)]$. Then the condition of the proposition tells us that
$$
\phi_D = \dyn \, \phi_D.
$$
Since $\xt[0] = \phi_D$, we immediately have $\xt = \phi_D, \forall t$. Given $\rho(\dyn) < 1/\gamma$, \cref{prop:lin_dyn} holds and $\ocphi = \phi_D$, so
$$
\cvrg = (\phi_D)^\top \Sigma^{-1} \phi_D \le 1.
$$

\end{proof}

\subsection{Proof of \cref{prop:perdomoetal}}\label{app:perdomoetal}
\label{app:proof-perdomoetal}

\perdomoetal*

\begin{proof}[\pfref{prop:perdomoetal}]
The derivation of the equality is as follows:
\begin{align*}
	\frac{1}{(1-\gamma)^2}\cvrg &= \phi_0^\top A^{-1} \Sigma A^{-T} \phi_0 
    \\
    &= \phi_0 \prn*{\Sigma - \gamma \Sigmacr}^{-1} \Sigma \prn*{\Sigma - \gamma \Sigmacr}^{-\top} \phi_0 \\
	&= \phi_0^\top \prn*{\Sigma^{1/2}(I - \gamma \Sigma^{-1/2}\Sigmacr\Sigma^{-1/2})\Sigma^{1/2}}^{-1} \Sigma \prn*{\Sigma^{1/2}(I - \gamma \Sigma^{-1/2}\Sigmacr\Sigma^{-1/2})\Sigma^{1/2}}^{-\top} \phi_0 \\
	&= \phi_0^\top \Sigma^{-1/2}(I - \gamma \Sigma^{-1/2}\Sigmacr \Sigma^{-1/2})^{-1}(I - \gamma \Sigma^{-1/2}\Sigmacr \Sigma^{-1/2})^{-\top}\Sigma^{-1/2} \phi_0 \\
	&= \nrm{(I - \gamma \Sigma^{-1/2}\Sigmacr \Sigma^{-1/2})^{-\top}\Sigma^{-1/2}\phi_0}^2_2.
\end{align*}
For the inequality, we simply take the operator norm on the matrix $(I - \gamma \Sigma^{-1/2}\Sigmacr \Sigma^{-1/2})^{-\top}$:
\begin{align*}
\nrm{(I - \gamma \Sigma^{-1/2}\Sigmacr \Sigma^{-1/2})^{-\top}\Sigma^{-1/2}\phi_0}_2 &\leq \nrm*{(I - \gamma \Sigma^{-1/2}\Sigmacr \Sigma^{-1/2})^{-\top}}_2\nrm*{\Sigma^{-1/2}\phi_0}_2\\
&= \frac{\nrm{\phi_0}_{\Sigmainv}}{\sigmamin(I - \gamma \Sigma^{-1/2}\Sigmacr \Sigma^{-1/2})}.
\end{align*}
The fact that this represents an arbitrarily large improvement follows by picking an instance where $I - \gamma \Sigma^{-1/2}\Sigmacr\Sigma^{-1/2}$ is near-singular but $\Sigma^{-1/2}\phi_0$ is in a non-singular direction. For example, consider the instance where $d=2$, the initial distribution $\ini$ is deterministic on a state with feature $e_2$, the data distribution $\data$ is uniform over the feature vectors $\sqrt{2}e_1$ and $\sqrt{2}e_2$, and all the states in the data distribution transition to a state with feature vector $\phi' = \frac{2}{\gamma} (1-\varepsilon)e_1$. Then, it is simple to verify that $\phi_0 = e_2$, $\Sigma = I$, $\Sigmacr = \frac{1}{\gamma}\begin{pmatrix} 
1- \varepsilon & 0 \\
0 & 0
\end{pmatrix}$
, and $I - \gamma \Sigma^{-1/2}\Sigmacr\Sigma^{-1/2} = \begin{pmatrix} \varepsilon & 0 \\
0 & 1 
\end{pmatrix}$. Thus, the LHS is equal to 1 but the RHS is equal to $1/\varepsilon$, which can be made arbitrarily large by taking $\varepsilon \rightarrow_+ 0$.
\end{proof}

\subsection{On Aggregated Concentrability} \label{app:agg-con}
\cref{sec:abstraction} mention that prior works establish aggregated concentrability as the coverage parameter for learning with $Q^\pi$-irrelevant abstractions. However, these works focus on the model selection setting where learning under abstraction is used as a subroutine, and the notion of aggregated concentrability is either implicit in the analysis or not given in a clean form \citep{xie2020q, zhang2021towards, jia2024offline}. Therefore, here we provide a clean sketch for learning under abstractions based on the spirit of these earlier works.

The key observation is that \textbf{learning can be viewed as entirely happening in $\Mabs$ (\cref{def:abs-model})}:\footnote{\cref{def:abs-model} ommited the definition of reward function $\Rabs$. For stochastic rewards, the probability distribution $\Rabs(r|k,a)$ is a mixture of $R(r|s,a)$ similar to the transition case: $\Rabs(r|k,a) = \frac{\data(s,a) R(r|s,a)}{\dataphi(k,a)}.$}
\begin{enumerate}[leftmargin=*]
\item \textbf{Data:} The data-generating process is identical to that from $\Mabs$, in the sense that the following two generation process for $(k,a,r,k')$ are identically distributed:
\begin{align*}
&~ (s,a)\sim \data, r\sim R(\cdot|s,a), r'\sim P(\cdot|s,a), k = \psi(s), k' = \psi(s') \\ &~ ~~ \Longleftrightarrow ~~ (k,a) \sim \dataphi, r \sim \Rabs(s,a), k' \sim \Pabs(\cdot|k,a),
\end{align*}
where we recall that $\phi(s,a) = \be_{\psi(s),a}$ in this setting and $\dataphi = \En_{(s,a)\sim\data}[\phi(s,a)]$ is a distribution over $\{1, \ldots, K\}\times\Acal$. 
\item \textbf{Learning algorithm:} LSTDQ is equivalent to  tabular model-based OPE in the empirical estimate of $\Mabs$. That is, $\estm$ is precisely $Q_{\empMabs}^\pi$, where $\empMabs$ is the tabular MLE estimation of $\Mabs$ obtained from data $\{(k,a,r,k')\}$.
\item \textbf{Learning target:} Since $Q^\pi$ is realizable by the abstraction, we can write its ``compressed version'' $\compress{Q^\pi}(k,a) = Q^\pi(s,a), \forall s \in \psi^{-1}(k)$. $\compress{Q^\pi}$ is precisely the Q-function of $\piabs$ (recall that in \cref{sec:abstraction} we have $\pi(\cdot|s) = \piabs(\cdot|\psi(s))$) in $\Mabs$, namely $\compress{Q^\pi} = Q_{\Mabs}^{\piabs}$ \citep{jiang2018notes, amortila2024reinforcement}.
\item \textbf{Error measure:} The return estimation error guarantees also follow a similar translation: let $\iniabs$ in \cref{def:abs-model} be the initial distribution in $\Mabs$,
$$ 
J(\pi) - J_{\estQ}(\pi) = J_{\Mabs}(\piabs) - J_{\empMabs}(\piabs).
$$
\end{enumerate}

Given the equivalence, we can simply invoke any guarantee for $J_{\Mabs}(\piabs) - J_{\empMabs}(\piabs)$ and it will equally apply to $J(\pi) - J_{\estQ}(\pi)$. However, the former is a textbook-standard instance of tabular model-based OPE, where a standard coverage parameter is $\chi^2$ of the density ratio w.r.t.~$\Mabs$ as the groundtruth model and  $\dataphi$ as the data distribution. This yields the aggregated concentrability in \cref{def:abs-model}. 

\paragraph{General state-action aggregation} 
\cref{def:abs-model} is restricted to $\pi$ that is consistent with the given abstraction $\psi$. Here we show a more general analysis for arbitrary $\pi$, which also handles general one-hot $\phi$ 
that are not necessarily induced from state abstraction $\psi$, corresponding to an arbitrary size-$d$ partition over $\Scal\times\Acal$. 
In this case, an abstract MDP analogous to $\Mabs$ (with $\{1, \ldots, K\}$ as its state space) is not well defined; nevertheless, the notion of aggregated concentrability can be extended to handle this setting, and its equivalence to our $\cvrg$ still holds. We briefly sketch the analyses here, which includes (1) an analysis of learning under general one-hot $\phi$, (2) showing that the guarantee depends on a more general notion of aggregated concentrability, and (3) it is equivalent to our $\cvrg$. There are two approaches that both achieve the goal. 

\paragraph{Approach 1} This approach directly extends the analysis above and is similar to \citet{jia2024offline}. Instead of an MDP over $\{1, \ldots, K\}$ (with $K=d/|\Acal|$) as its state space, we can directly define an abstract Markov reward process (MRP) $(R_\phi, P_\phi)$ with state space $[d]$, and obtain the similar data-generation equivalence: (for this analysis we will abuse the notation and treat the output of $\phi$ as its one-hot index, $\phi:\Scal\times\Acal\to \{1, \ldots, d\}$) 
let $\dataphi = \En_{(s,a)\sim \data}[\phi(s,a)]$, and we have%
\begin{align*}
&~ (s,a) \sim \data, r \sim R(\cdot|s,a), s'\sim P(\cdot|s,a), k = \phi(s,a), k' = \psi(s') \\ &~ ~~ \Longleftrightarrow ~~ k \sim \dataphi, r \sim R_\phi(\cdot|k), k' \sim P_\phi(\cdot|k).
\end{align*}
Similar to before, we can also show that the value function in the MRP is equivalent to $Q^\pi$, and LSTDQ is equivalent to the model-based solution, and so on.

\paragraph{Approach 2} An alternative approach is to define the ``abstract MDP'' $M_\phi$ over the \textit{original state space} $\Scal$ \citep{jiang2018notes, xie2020batch, zhang2021towards}:
\begin{align*}
&~ R_\phi(s,a) = \frac{\sum_{(\ts,\ta):  \phi(\ts,\ta)=\phi(s,a)} \data(\ts,\ta) R(s,a) }{\sum_{(\ts,\ta): \phi(\ts,\ta)=\phi(s,a))} \data(\ts,\ta)}, \\
&~ P_\phi(\cdot|s,a) = \frac{\sum_{(\ts,\ta):  \phi(\ts,\ta)=\phi(s,a)} \data(\ts,\ta) P(\cdot|s,a) }{\sum_{(\ts,\ta): \phi(\ts,\ta)=\phi(s,a))} \data(\ts,\ta)}.
\end{align*}
Unlike Approach 1 which requires us to extend some MDP results to MRPs, this version still defines $M_\phi$ as an MDP, albeit over the original state space $\Scal$, and the initial state distribution is still $\ini$. It has piecewise-constant (w.r.t.~the partition induced by $\phi$) reward and transition functions, and within each partition it 
straightforwardly (and seemingly na\"ively) mixes the true reward and transition according to the relative weighting in the data distribution. Similar to before, we have $Q^\pi = Q_{M_\phi}^\pi$. 

To connect LSTDQ to this abstract model, we  define the empirical projected Bellman operator $\emp{\Tcal}_{\phi}^{\mu^D}: \RR^{\Scal\times\Acal} \to \RR^{\Scal\times\Acal}$ in the same fashion as Definition~2 of \citet{xie2020batch}, and show that (1) $\estQ$ is the fixed point of this operator, and (2) fixing any $f$, $\emp{\Tcal}_{\phi}^{\pi} f$ concentrates towards $\Tcal_{M_\phi}^\pi f$ with a sample complexity independent of $|\Scal|$. By union bounding over all piecewise-constant (i.e., linear-in-$\phi$) $f$, we establish that
$$
\En_{(s,a)\sim \data}[(\estQ - \Tcal_{M_\phi}^\pi \estQ)(s,a)^2] \le \epsstat, 
$$
where $\epsstat$ is the statistical error term (see \citet[Lemma 9]{xie2020batch}). Then, error propagation is analyzed by invoking simulation lemma in $M_\phi$:
\begin{align*}
J_{\estQ}(\pi) - J(\pi) =  &~ \En_{s\sim \ini, a\sim \pi}[\estQ(s,a)] - J_{M_\phi}(\pi) \\
= &~ \frac{1}{1-\gamma} \En_{(s,a) \sim d_{M_\phi}^\pi}[\estQ - \Tcal_{M_\phi}^\pi \estQ].
\end{align*}
The last step is error translation from $\data$ to $d_{M_\phi}^\pi$, where a na\"ive translation would yield the size of $d_{M_\phi}^\pi / \data$ which is not the same as our $\cvrg$. However, note that $\estQ$ and $\Tcal_{M_\phi}^\pi (\cdot)$ are both piecewise-constant (i.e., linear in $\phi$), allowing us to projecting each distribution onto the feature space ($d_{M_\phi}^\pi \to \En_{(s,a)\sim d_{M_\phi}^\pi}[\phi(s,a)]$, $\data \to \En_{(s,a)\sim \data}[\phi(s,a)] = \dataphi$) before calculating their density ratio, which recovers $\cvrg$. %

\subsection{Proof of \cref{prop:state-abs}}
\label{app:proof-state-abs}
\stateabs*

\begin{proof}[\pfref{prop:state-abs}]
     We compute the $A$ matrix. Below, we define $P^\pi(s',a' \mid s,a) = P(s' \mid s,a)\pi(a' \mid s')$, $P(k' \mid s,a) = \sum_{s': \psi(s') = k} P(s' \mid s,a)$, and \[
P^\pi(k',a' \mid s,a) = P(k' \mid s,a) \piabs(a' \mid k'),
\]
which is valid since $\pi$ is consistent with the state abstraction. 
To start, the covariance matrix $\Sigma$ becomes
\begin{align*}
	\Sigma = \En_{s,a \sim \data}\brk*{\phi(s,a)\phi(s,a)^\top} &= \sum_{k \in [K],a \in \Acal} \sum_{s \in \cS: \psi(s)=k} \data(s,a) \be_{k,a} \be_{k,a}^\top \\
	&= \sum_{k \in [K],a \in \Acal} \be_{k,a} \be_{k,a}^\top \prn*{ \sum_{s \in \cS: \psi(s)=k}\data(s,a) } \\
	&= \sum_{k \in [K],a \in \Acal} \be_{k,a} \be_{k,a}^\top \dataphi(k,a) = \diagdata, %
\end{align*}
where we recalled the definition of $\dataphi(k,a) = \sum_{s\in \cS: \psi(s) = k}\data(s,a)$, and $\diagdata$ is the diagonal matrix with elements of $\dataphi$ on the diagonal. %
Let's examine the cross-covariance $\Sigmacr$. 
\begin{align*}
\Sigmacr &=  \En_{s,a \sim \data}\brk*{\phi(s,a) \phi(s',a')^\top} \\
&=  \sum_{s \in \cS,a \in \cA} \data(s,a) \phi(s,a) \sum_{s' \in \cS,a' \in \cA} P^\pi(s',a'\mid s,a)\phi(s',a')^\top \\
&= \sum_{k \in [K],a \in \Acal} \be_{k,a} \sum_{s \in \cS:\psi(s)=k} \data(s,a) \prn*{\sum_{k' \in [K],a' \in \Acal} \sum_{s' \in \cS: \psi(s')=k'} P^\pi(s',a'\mid s,a) \be_{k',a'}^\top} \\
&= \sum_{k \in [K], a \in \Acal} \be_{k,a} \sum_{s \in \cS:\psi(s)=k} \data(s,a) \prn*{\sum_{k' \in [K],a' \in \Acal} \be_{k',a'}^\top P^\pi(k',a' \mid s,a)} \\
&= \sum_{k \in [K], a \in \Acal} \be_{k,a} \sum_{k' \in [K],a' \in \Acal} \be_{k',a'}^\top \sum_{s \in \cS:\psi(s)=k} \data(s,a) P^\pi(k',a' \mid s,a) \\
&= \sum_{k \in [K], a \in \Acal} \be_{k,a} \sum_{k' \in [K],a' \in \Acal} \be_{k',a'}^\top \dataphi(k,a)\prn*{\frac{\sum_{s \in \cS:\psi(s)=k} \data(s,a) P^\pi(k',a' \mid s,a)}{\dataphi(k,a)}} \\
&\coloneqq \sum_{k \in [K], a \in \Acal} \be_{k,a} \sum_{k' \in [K],a' \in \Acal} \be_{k',a'}^\top \dataphi(k,a)\Pabs^\pi(k',a' \mid k,a) \\
&= \Sigma \, (\Pabs^\pi)^\top,           
\end{align*}  
where we recall $\Pabs^\pi$ as the transition kernel of policy $\pi$ in $\Mabs$: 
\[
\Pabs^\pi(k',a' \mid k,a) = \frac{\sum_{s: \psi(s)=k} \data(s,a)P^\pi(k',a' \mid s,a)}{\dataphi(k,a)}.
\]
Putting our expressions for $\Sigma$ and $\Sigmacr$ together, we immediately have 
\[
\dyn  = \Pabs^\pi.
\]
Given that the initial state distribution in $\Mabs$ is $\iniabs$, we can compute its initial state-action distribution
\begin{align*}
\iniabs(k_0) \piabs(a_0|k_0) = &~ \sum_{s_0: \psi(s_0)=k_0} \ini(s_0) \piabs(a_0|k_0) = \En_{s\sim\ini, a\sim \pi}[\mathbb{I}[\psi(s) = k_0, a = a_0]] \\
= &~ \En_{s\sim\ini, a\sim\pi}[\be_{k_0, a_0}(\psi(s), a)] = \iniphi(k_0,a_0),
\end{align*}
implying that the initial state-action distribution happens to be $\iniphi$. 
Finally, our coverage coefficient becomes
\begin{align*}
\cvrg = &~ (1-\gamma)^2 \iniphi^\top (1-\gamma \dyn)^{-\top} \Sigma^{-1} (I - \gamma \dyn)^{-1} \iniphi \\
= &~ (1-\gamma)^2 \iniphi^\top (I - \gamma \Pabs^\pi)^{-\top} \diagdata^{-1} (I - \gamma \Pabs^\pi)^{-1}\iniphi \\
= &~ (\mu_{\Mabs}^\pi)^\top \diagdata^{-1} \mu_{\Mabs}^\pi = \En_{(k,a) \sim \dataphi}\brk*{(\mu^\pi_{\Mabs}/\dataphi)^2}.
\end{align*}
\end{proof}

\subsection{Proof of \cref{prop:completeness}}
\label{app:proof-completeness}
\completeness*
\begin{proof}[Proof of \cref{prop:completeness}]

  \paragraph{(1) $\bpi$ is an exact next-feature predictor: $\En_{s' \sim P( \cdot \mid{} s,a)}\sbr*{\phi(s,a)} = \bpi \phi(s,a)$ for all $(s,a)$} 

  First, we show that under Bellman completeness
  $\cF_\phi$ is also closed under the transition operator $\cP^\pi \ldef{} \cT^\pi - R$,
  that is, $\cP^\pi f \in \cF_\phi$ for all $f \in \cF_\phi$.

  The linearity of $\cF_\phi$ together with Bellman completeness
  immediately imply that the reward function is linear,
  or $R \in \cF_\phi$.
  Define $f_0 \in \cF_\phi$ to be the function corresponding to
  the parameter $\theta = \mathbf{0}_d$,
  so that $f_0(s,a) = 0$ for all $(s,a)$;
  we have $\cT^\pi f_0 = R \in \cF_\phi$.

  Next, fix any $f \in \cF_\phi$ and observe that
  $\cT^\pi f$ is also linear, since $\cT^\pi f \in \cF_\phi$ under Bellman completeness.
  It follows that $\cT^\pi f - R = \cP^\pi f \in \cF_\phi$ because the difference of two functions linear in the same features is also linear in those features,
  which proves that $\cF_\phi$ is closed under $\cP^\pi$.
  This closure implies that for any $f \in \cF_\phi$,
  there exists some $\theta_f \in \bbR^d$ such that
  \begin{align*}
    \phi(s,a)^\top \theta_f = \rbr*{\cP^\pi f}(s,a) = \En_{s' \sim P(\cdot \mid{} s,a)}\sbr*{f(s',\pi)}.
  \end{align*}

  To prove the stated claim we will utilize choice instantations of such functions and their corresponding parameters.
  For $i \in [d]$, define the function $f_i \ldef{} \inner{\phi, \mathbf{e}_i} \in \cF_\phi$,
  and let $\theta_i \in \bbR^d$ be such that
  \begin{align*}
    \phi(s,a)^\top \theta_i = \En_{s' \sim P(\cdot \mid{} s,a)}\sbr*{f_i(s',\pi)},~\forall (s,a).
  \end{align*}
  Then for all $(s,a)$,

  \newcommand{\coltick}{\rule{0.5pt}{1.1ex}} %
  \newcommand*{\rowtick}{\rule[.5ex]{2.0ex}{0.5pt}}
  \renewcommand{\arraystretch}{1.5} %
  \begin{align*}
    \En_{s' \sim P(\cdot \mid{} s,a)}\sbr*{\phi(s',\pi)}
    =
    \begin{bmatrix}
      \En_{s' \sim P(\cdot \mid{} s,a)}\sbr*{f_1(s',\pi)}
      \\
      \En_{s' \sim P(\cdot \mid{} s,a)}\sbr*{f_2(s',\pi)}
      \\
      \vdots
      \\
      \En_{s' \sim P(\cdot \mid{} s,a)}\sbr*{f_d(s',\pi)}
    \end{bmatrix}
    = 
    \underbrace{\begin{bmatrix}
      \rowtick & \theta_1^\top & \rowtick \\
      \rowtick & \theta_2^\top & \rowtick \\
      & \vdots & \\
      \rowtick & \theta_d^\top & \rowtick
    \end{bmatrix}}_{(*)}
    \phi(s,a).
  \end{align*}

  Lastly, we will show that the above system of equations is satisfied by setting
  \begin{align*}
    (*) = \Sigmacr^\itop \Sigmacov^{-1} = \dyn.
  \end{align*}
  Right-multiplying both sides by $\phi(s,a)^\top$ then taking the expectation over $(s,a) \sim \mu^D$, we obtain
  \begin{align*}
    \En_{(s,a,s',a') \sim \mu^D\times P \times \pi}\sbr*{\phi(s',a')\phi(s,a)^\top}
    &= 
    \dyn \En_{(s,a) \sim \mu^D}\sbr*{\phi(s,a) \phi(s,a)^\top}.
  \end{align*}
  Solving for $\dyn$ and rearranging gives
  \begin{align*}
    \dyn
    &=
    \rbr*{\En_{(s,a,s',a') \sim \mu^D\times P \times \pi}\sbr*{\phi(s,a)\phi(s',a')^\top}}^\top \Sigmacov^{-1}
    \\
    &= \Sigmacr^\top \Sigmacov^{-1},
  \end{align*}
  which confirms that $\dyn = (\Sigmacov^{-1} \Sigmacr)^\top$ satisfies for all $(s,a)$ the equivalence
  \begin{align*}
    \En_{s' \sim P(\cdot \mid{} s,a)}\sbr*{\phi(s',\pi)} = \dyn \phi(s,a).
  \end{align*}

  \paragraph{(2) Showing $\mup = \php$} 

    Recall that $\php = \En_{(s,a) \sim \mu^\pi}\sbr*{\phi(s,a)}$.
    Using the Bellman flow equations for $\mu^\pi$,
    we obtain a recursive system of equations for the dynamics of $\php$:
    \begin{align*}
      \php
      &= 
      \sum_{s,a} \phi(s,a) \mu^\pi(s,a)
      \\
      &=
      \sum_{s,a} \phi(s,a) \rbr*{\rbr{1 - \gamma} \mu_0^\pi(s,a)
      + \gamma \sum_{s',a'} P^\pi(s,a \mid{} s',a') \mu^\pi(s',a') }
      \\
      &= 
      (1-\gamma) \phi_0 + \gamma \En_{(s,a) \sim \mu^\pi}\sbr*{ \En_{s' \sim P(\cdot \mid{} s,a)}\sbr*{\phi(s',\pi)}}
      \\
      &= 
      (1-\gamma) \phi_0 + \gamma \En_{(s,a) \sim \mu^\pi}\sbr*{\dyn \phi(s,a)}
      \\
      &= 
      (1-\gamma) \phi_0 + \gamma \dyn  \php,
    \end{align*}
    where we invoke the result from \textbf{(1)} in the second-to-last line.
    Repeatedly expanding the RHS of the equation with the recursion,
    \begin{align*}
      \php
      &= 
      (1-\gamma) \phi_0 + \gamma \dyn  \php,
      \\
      &=
      (1-\gamma) \phi_0 + \gamma \dyn  \rbr*{(1-\gamma)\phi_0 +  \gamma \dyn \php}
      \\
      &=
      (1-\gamma) \rbr*{\phi_0 + \gamma \dyn  \phi_0 + \gamma^2 (\dyn)^2 \phi^\pi}
      \\
      &\ldots
      \\
      &= (1-\gamma) \sum_{t=0}^\infty \gamma^t (\dyn)^t \phi_0,
    \end{align*}
    which is exactly the definition of $\mup$ from \cref{prop:lin_dyn}.

  \paragraph{(3) Showing $\rho(\dyn)\le 1$}
  The proof of (2) implies that for any $(s_0, a_0)\in\Scal\times\Acal$, $(\dyn)^t \phi(s_0,a_0) = \En_{(s,a)\sim \oc_{t}^{\pi,s_0,a_0}}[\phi(s,a)]$, where $\oc_t^{\pi,s_0,a_0}$ is the $t$-th step state-action distribution under $\pi$ when the initial state-action pair is the given $(s_0,a_0)$. Given $\|\phi(s,a)\|_2 \le \Bphi, \forall (s,a)$, we have
  
  $$(\dyn)^t \phi(s_0,a_0) \le \Bphi, \quad \forall t.$$
  
  Given that $\Sigma$ is full-rank, we can always find $\{(s_0^{(i)}, a_0^{(i)})\}_{i=1}^d$ such that $\{u_i := \phi(s_0^{(i)},a_0^{(i)})\}_{i=1}^d$ forms a basis of $\RR^d$. Then we have $\|(\dyn)^t u_i\| \le \Bphi, \forall t$. 

  Now we show that $\|(\dyn)^t\|_{\textrm{op}}$, where $\|\cdot\|_{\textrm{op}}$ is the operator norm, also has a finite bound that is independent of $t$. Recall that operator norm is the largest singular value; let the corresponding singular vector be $u$ and $\|u\|_2=1$, and we express $u = \sum_{i=1}^d \alpha_i u_i$. We have

  $$
  \|(\dyn)^t\|_{\textrm{op}} = \|(\dyn)^t u\|_2 = \left\|\sum_{i=1}^d \alpha_i (\dyn)^t u_i \right\|_2 \le  \sum_{i=1}^d |\alpha_i|
  B_\phi =: v.
  $$

  The key here is that the upper bound $v < \infty$ is independent of $t$. 
  Plugging into the Gelfand’s formula, we have
  $$
  \rho(\dyn) =  \lim_{t\to\infty} \|(\dyn)^t\|_{\textrm{op}}^{1/t} \le  \lim_{t\to\infty} v^{1/t} = 1.
  $$
  
  \paragraph{(4) Proving equivalence $\cvrg = \Clin$}
    
    Recalling \cref{eq:Bpi} and the definition of $\mup$,
    following the proof of \cref{prop:lin_dyn},
    when $\sigma_{\max}(\bpi) < 1/\gamma$ we may write
    \begin{align*}
      \cvrg
      &=
      (1-\gamma)^2\phi_{0}^\top A^{-1} \Sigma A^\itop \phi_{0}
      \\
      &= 
      (1-\gamma)^2\phi_{0}^\top \rbr*{I - \gamma\bpi}^{-\top} \Sigma^{-1} \rbr*{I - \gamma\bpi}^{-1} \phi_{0}.
      \\
      &= 
      \rbr*{\mup}^\top \Sigma^{-1} \mup.
    \end{align*}
    Substituting the previously derived identity that $\mup = \php$ in the last line,
    \begin{align*}
      \rbr*{\mup}^\top \Sigma^{-1} \mup
      &= 
      \rbr*{\php}^\top \Sigma^{-1} \php
      =
      \Clin.
    \end{align*}   
\end{proof}

\subsection{Details on MWL} \label{app:mwl}
Here we expand on \cref{sec:mis} and provide more details about the unification with MWL. 

\paragraph{MWL guarantee} MWL is concerned with the following loss:
\begin{align} \label{eq:mwl-loss}
L(w, f):=\abs*{J_f(\pi) + \textstyle   \frac{1}{1-\gamma}\En_{\data}[w(s,a)\cdot\left(\gamma f(s',\pi) - f(s,a)\right)]}
\end{align}
Let $\emp{L}(w,f)$ be its empirical estimate from data. The algorithm learns
$$\emp{w} = \argmin_{w\in\Wcal}\max_{f\in\Fcal} \emp{L}(w, f),
$$
and produces $J_{\emp{w}}(\pi):=\frac{1}{1-\gamma}\En_{\data}[\emp{w}(s,a)\cdot r]$ as an estimation of $J(\pi)$. 
\citet{uehara2020minimax} show that when $Q^\pi \in \Fcal$, the estimation error can be bounded as
$$
|J_{\emp{w}}(\pi) - J(\pi)| \le \min_{w\in\Wcal}\max_{f\in\Fcal} L(w,f) + \max_{w\in\Wcal, f\in\Fcal} |L(w,f) - \emp{L}(w,f)|.
$$
The first term is the realizability error of $\Wcal$. The second term is uniform deviation bound for $\emp{L}(w,f)$; for finite $\Wcal, \Fcal$, Hoeffding with union bound gives: w.p.~$\ge 1-\delta$,
$$
\max_{w\in\Wcal, f\in\Fcal} |L(w,f) - \emp{L}(w,f)| \approxleq \frac{B_{\Wcal} B_{\Fcal}}{1-\gamma} \sqrt{\frac{1}{n}\ln\frac{|\Wcal||\Fcal|}{\delta}}, 
$$
where $B_{\Wcal}:= \max_{w\in\Wcal} \|w\|_\infty$, and $B_{\Fcal}$ is defined similarly. Since $\Fcal$ models $Q^\pi$, we often assume $B_{\Fcal} \approxleq \Vmax$. 

\paragraph{Coverage in MWL} In this analysis, the coverage parameter is manifested rather implicitly in $C_{\Wcal}$: we need to find $w^\star \in \Wcal$ to control the first term, and for simplicity let's focus on $w^\star$ such that $\max_{f\in\Fcal} L(w^\star, f) = 0$; then the estimation error will only have the uniform deviation term. As mentioned in \cref{sec:abstraction}, a standard choice is $w^\star(s,a) = \oc^\pi(s,a) / \data(s,a)$. Then, when $w^\star \in \Wcal$, we have
$$
C_{\Wcal} \ge \|w^\star\|_\infty = \|\oc^\pi / \data\|.,
$$
which is the standard $\ell_\infty$ norm of density ratio. If $\Wcal$ is properly designed to realize $w^\star$, we may assume $C_{\Wcal} \approxleq \|w^\star\|_\infty$, and obtain a bound that explicitly depends on $\|\oc^\pi/\data\|_\infty$. 

\paragraph{Connection to LSTDQ and unification with our results} 
When $\Wcal$ and $\Fcal$ are linear classes in the same give feature $\phi$ (with appropriately bounded norms on the parameters), \citet[Appendix A.3]{uehara2020minimax} show that MWL is equivalent to LSTDQ when $\emp{A}$ is invertible, in the sense that
$$
J_{\emp{w}}(\pi) = J_{\estQ}(\pi).
$$
In terms of analysis, the linear structure of the classes provide another choice of $w^\star$ that satisfies $\max_{f\in\Fcal} L(w,f) = 0$, namely $w^\star(s,a) = (1-\gamma)\phi_0^\top A^{-1} \phi(s,a)$, whose data second-moment is equal to $\cvrg$. To make this quantity appear in the guarantee, we calculate the following variance: for fixed $w \in \Wcal$, $f\in\Fcal$,
\begin{align*}
\mathrm{Var}_{\data}[w(s,a)(\gamma f(s',\pi) - f(s,a))] 
\le &~ \En_{\data}[w(s,a)^2(\gamma f(s',\pi) - f(s,a))^2]  \\
\approxleq &~ \En_{\data}[w(s,a)^2 B_{\Fcal}^2] 
\le B_{\Wcal,2}^2 B_{\Fcal}^2,
\end{align*}
where $B_{\Wcal, 2}:= \max_{w\in\Wcal} \En_{\data}[w(s,a)^2] \ge \cvrg$. Similar to before we may assume $B_{\Wcal,2} \approxleq \En_{\data}[w^\star(s,a)]^2 = \cvrg$. Now by Bernstein's inequality and a standard covering argument over linear $\Wcal$ and $\Fcal$, %
we have (c.f.~\citet[Theorem 8]{xie2020q})
$$
\max_{w\in\Wcal, f\in\Fcal} |L(w,f) - \emp{L}(w,f)| \approxleq \frac{\Vmax}{1-\gamma} \left(\sqrt{\frac{\cvrg \cdot (d + \log(1/\delta))}{n}} +  \frac{B_{\Wcal} (d + \log(1/\delta))}{n}\right),
$$
whose $\sqrt{1/n}$ term matches our \cref{thm:mainthm-pop} except that we do not depend on $d$.

\section{Function Estimation Guarantees} \label{app:fnestimation}

For most of the paper we have focused on providing return estimation guarantees, i.e., error bounds for estimating $J(\pi)$. In some scenarios, it is desirable to obtain stronger \textit{function estimation} guarantees \citep{huang2022beyond, perdomo_sharp_2022}, that $\estQ$ and $Q^\pi$ are close as functions, typically measured by weighted 2-norm. Indeed, our proof of Theorem~\ref{thm:mainthm-emp} can be easily adapted to provide the following guarantee:

\begin{restatable}[Function Estimation]{theorem}{squarederror}\label{thm:squared-q-error} 
Under the same assumptions as \cref{thm:mainthm-emp}, w.p.~$\ge 1-\delta$, for any $\nu \in \Delta(\Scal\times\Acal)$,
\[
\sqrt{\EE_{(s,a)\sim \nu}[(Q^\pi(s,a) - \estQ(s,a))^2]} \approxleq \frac{\Vmax}{1-\gamma} \sqrt{\frac{\Cfnemp \cdot d \log(1/\delta)}{n}},
\]
where $\Cfnemp:= (1-\gamma)^2 \En_{(s_0, a_0) \sim \nu}\brk*{\nrm{\hatSigma^{1/2}\emp{A}^{-\top}\phi(s_0,a_0)}^2_2}$.
\end{restatable}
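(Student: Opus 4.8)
The plan is to mirror the proof of \cref{thm:mainthm-emp} almost verbatim, exploiting that the only $(s,a)$-dependent factor in the error decomposition is a ``coverage direction'' $\hatSigma^{1/2}\emp{A}^{-\top}\phi(s,a)$, whereas the statistical residual is common to all state-action pairs. First, by realizability (\cref{ass:realizability}) and the definition of $\estQ$, the pointwise error is $Q^\pi(s,a) - \estQ(s,a) = \phi(s,a)^\top(\thetastar - \estm)$. If $\emp{A}$ is singular I adopt the same convention as in \cref{thm:mainthm-emp} and treat $\Cfnemp = +\infty$, so the bound holds vacuously; hence I may assume $\emp{A}$ (and thus $\hatSigma$) invertible.

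Next I would insert $\emp{A}^{-1}\emp{A}$ and factor $\hatSigma^{1/2}\hatSigma^{-1/2}$ exactly as in the proof of \cref{thm:mainthm-emp}, then apply Cauchy--Schwarz to obtain, for every $(s,a)$,
\[
\abs*{\phi(s,a)^\top(\thetastar - \estm)} \le \nrm*{\hatSigma^{1/2}\emp{A}^{-\top}\phi(s,a)}_2 \cdot \nrm*{\hatSigma^{-1/2}(\emp{A}\thetastar - \emp{b})}_2.
\]
The crucial observation is that the second factor $\nrm{\hatSigma^{-1/2}(\emp{A}\thetastar - \emp{b})}_2$ does not depend on $(s,a)$. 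Squaring and taking the expectation over $(s,a)\sim\nu$ therefore pulls this factor out of the expectation:
\[
\En_{(s,a)\sim\nu}\brk*{(Q^\pi(s,a) - \estQ(s,a))^2} \le \En_{(s,a)\sim\nu}\brk*{\nrm*{\hatSigma^{1/2}\emp{A}^{-\top}\phi(s,a)}_2^2}\cdot \nrm*{\hatSigma^{-1/2}(\emp{A}\thetastar - \emp{b})}_2^2,
\]
and by definition the first expectation equals $\Cfnemp/(1-\gamma)^2$.

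Finally I would control the common residual with the $\ell_2$ corollary of \cref{lem:emp-lstd-concentration}, which gives $\nrm{\hatSigma^{-1/2}(\emp{A}\thetastar - \emp{b})}_2 \approxleq \Vmax\sqrt{(d+\log(1/\delta))/n}$ on a single event of probability $\ge 1-\delta$; combining the two displays and taking square roots yields the claim (the $d+\log(1/\delta)$ matching the stated $d\log(1/\delta)$ up to the $\approxleq$ convention). The main conceptual point---rather than a genuine obstacle---is the choice of concentration. Because the residual factor is shared across all $(s,a)$ and, more importantly, because the $\ell_2$ event is independent of $\nu$, a single high-probability event simultaneously serves every $\nu\in\Delta(\Scal\times\Acal)$. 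This is precisely why a function-estimation guarantee must bound the residual uniformly over all directions (through the covering argument inside \cref{lem:emp-lstd-concentration}), incurring the $d$ factor, in contrast to the sharper directional return bound of \cref{thm:mainthm-pop}, which can invoke the fixed-direction statement of \cref{lem:emp-lstd-concentration} for the single $\phi_0$ and remain dimension-free in the leading term.
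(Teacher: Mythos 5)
Your proposal is correct and follows essentially the same route as the paper's own proof: the identical Cauchy--Schwarz decomposition isolating the $(s,a)$-dependent factor $\nrm*{\hatSigma^{1/2}\emp{A}^{-\top}\phi(s,a)}_2$ from the shared residual $\nrm*{\hatSigma^{-1/2}(\emp{A}\thetastar - \emp{b})}_2$, followed by the $\ell_2$ corollary of \cref{lem:emp-lstd-concentration}. Your added remarks (the vacuous-case convention for singular $\emp{A}$, and the observation that a single concentration event serves all $\nu$, which is exactly why the dimension-dependent uniform bound is needed) are consistent with the paper's conventions and discussion.
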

When $\nu = \ini \circ \pi$ is a point-mass, the LHS of \cref{thm:squared-q-error} coincides with that of \cref{thm:mainthm-emp}, and the guarantees on the RHS are identical, too. Also recall that the na\"ive analysis based on $1/\sigma_{\min}(A)$ (\cref{sec:intro}) provides parameter identification (i.e., bounded $\|\estm - \theta^\star\|$), which immediately provides $\ell_\infty$ function-estimation guarantee. This result is directly implied by our \cref{thm:squared-q-error}, where the coverage parameter can be bounded as a function of $\sigma_{\min}(A)$ and $B_{\phi}$. 

\paragraph{Remark on $\Cfn$} Similar to how we can replace $\cvrgemp$ with $\cvrg$ up to lower-order terms (see \cref{thm:mainthm-emp} and its proof), we can also obtain a variant of \cref{thm:squared-q-error} that depends on the population version of $\Cfnemp$, which we denote as $\Cfn$. It is interesting to compare it to standard coverage parameters that enable function-estimation guarantees under completeness (\cref{sec:related}). Note that the term inside $\Cfn = \En_{(s_0, a_0) \sim \nu}[\cdot]$ is simply $\cvrg$ but for a deterministic initial state-action pair $(s_0, a_0)$. Applying \cref{prop:completeness}, we have
$$
\Cfn = \En_{(s_0, a_0) \sim \nu}\brk*{(\phi_{s_0,a_0}^\pi)^\top \Sigma^{-1} \phi_{s_0,a_0}^\pi},
$$
where $\phi_{s_0,a_0}^\pi = \En_{(s,a)\sim \oc_{s_0, a_0}^\pi}[\phi(s,a)]$ is the expected feature under the occupancy induced from deterministic $s_0, a_0$ as the initial state-action pair. In comparison, the standard coverage in the literature is
$$
C_{\textrm{lin,fn}}^\pi = \En_{(s,a)\sim \oc^\pi}[\phi(s,a)^\top \Sigma^{-1} \phi(s,a)].
$$
As can be seen, our $\Cfn$ is in between $\cvrg$ and $C_{\textrm{lin,fn}}^\pi$, since we partially marginalize out the portion of $\oc^\pi$ that can be attribute to each initial state-action pair, instead of measuring every single $(s,a)\sim \oc^\pi$ under $\Sigma^{-1}$ in a completely point-wise manner.

\begin{proof}[\pfref{thm:squared-q-error}]
We repeat a similar derivation to the proof of \cref{thm:mainthm-pop}, noting that the proof holds when the initial state-action distribution $s_0\sim \ini, a_0\sim \pi$ changes to an arbitrary distribution $\nu$. 
\begin{align*}
& \quad \En_{(s_0, a_0)\sim \nu}\brk*{\prn*{Q^\pi(s_0,a_0) - \estQ(s_0,a_0)}^2} \\
&= \En_{(s_0, a_0)\sim \nu}\brk*{\prn*{\phi(s_0,a_0)^\top\prn*{\thetastar - \estm}}^2}\\
&= \En_{(s_0, a_0)\sim \nu}\brk*{\prn*{\phi(s_0,a_0)^\top \emp{A}^{-1}\hatSigma^{1/2} \hatSigma^{-1/2} \emp{A} \prn*{\thetastar - \estm}}^2}\\
&\leq \En_{(s_0, a_0)\sim \nu}\brk*{\nrm*{\hatSigma^{1/2} \emp{A}^{-\top}\phi(s_0,a_0)}_2^2\nrm*{\hatSigma^{-1/2} (\emp{A} \thetastar - \emp b)}^2_2}.
\end{align*}
To conclude, we recall the concentration bound from \cref{lem:emp-lstd-concentration}, that w.p.~$\ge 1-\delta$,
\[
\nrm{\hatSigma^{-1/2}(\emp{A}\thetastar - \emp{b})}^2_2 \approxleq \Vmax^2 \frac{d + \log(1/\delta)}{n}.
\]
Plugging this in yields the proof. 
\end{proof}

\section{$\sigmamin(A)$-Independent Population Bound via Loss Minimization Algorithm} \label{app:lossmin}

Here we provide an alternative analysis to \cref{thm:mainthm-pop}, where we are able to eliminate the dependence on $1/\sigma_{\min}(A)$, but the rates still depend on quantities necessary for the analysis of linear regression under random design (in our case, $1/\lambda_{\min}(\Sigma)$, though we expect that this can be further tightened to leverage-score-type conditions \cite{hsu2011analysis,perdomo_sharp_2022}). The analysis also requires a slight change of the LSTDQ algorithm to a loss-minimization form \citep{liu2025model}:

\begin{equation}\label{eq:mainalg}
\estm = \argmin_{\theta \in \Theta} \nrm{{\hatSigma}^{-1/2}(\emp{A} \theta - \emp{b})}_2.
\end{equation}

In practice, when $\emp{A}$ is near-singular, the inverse solution $\emp{A}^{-1} \emp{b}$ may have a very large norm which is clearly problematic, demanding some regularization to control the norm of the solution. The loss-minimization formulation of \cref{eq:mainalg} is a natural abstraction of this process, where we search for $\thetahat$ in a pre-defined parameter space with bounded norm. If $\emp{A}^{-1} \emp{b} \in \Theta$, it is easy to see that the loss-minimization solution coincides with the inverse solution; when $\emp{A}^{-1} \emp{b} \notin \Theta$, \cref{eq:mainalg} still outputs a bounded solution to ensure generalization and good statistical properties. 

We will need the following boundedness assumption on $\Theta$.

\begin{assumption}[Boundedness of $\Theta$]\label{ass:theta-bound}
Assume $\nrm{\theta}_2 \leq \Btheta, ~ \forall \,\theta \in \Theta.$
\end{assumption}

\paragraph{Additional linear algebraic notation} 
For symmetric $\Sigma$, let $\kappa(\Sigma)=\frac{\lambdamax(\Sigma)}{\lambdamin(\Sigma)}$ be the condition number, where $\lambdamax(\cdot)$ is the largest eigenvalue. 

\begin{restatable}{theorem}{mainthm}%
\label{thm:mainthm}
Assume that $n \gtrsim \log(d/\delta)\kappa(\Sigma)\nicefrac{\Bphi^2}{\lambdamin(\Sigma)}$. Under \cref{ass:Ainverse,ass:realizability,ass:theta-bound}, the estimator in Eq. \eqref{eq:mainalg} satisfies that 
\[
\abs*{J(\pi) - J_{\estQ}(\pi)} \approxleq  \frac{\sqrt{\cvrg}}{1-\gamma} \max\crl*{\Bphi\Btheta,\Rmax}^2\sqrt{\frac{d\log(\Btheta n\delta^{-1})}{\lambdamin(\Sigma)n}} 
\]
with probability at least $1-\delta$. 
\end{restatable}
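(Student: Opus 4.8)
The plan is to reduce the return error to a single \emph{population-loss} quantity evaluated at the learned parameter, and then to control that quantity through optimality of the loss-minimizer combined with a uniform concentration argument that only ever inverts $\Sigma$ (never $A$), which is what allows the final rate to depend on $1/\lambdamin(\Sigma)$ rather than $1/\sigmamin(A)$. First I would write, exactly as in the proof of \cref{thm:mainthm-emp}, $\abs{J(\pi) - J_{\estQ}(\pi)} = \abs{\phi_0^\top(\thetastar - \estm)}$ using realizability, insert the population whitening $A^{-1}\Sigma^{1/2}\Sigma^{-1/2}A$, and apply Cauchy--Schwarz to obtain
\begin{align*}
\abs{\phi_0^\top(\thetastar - \estm)} \leq \nrm{\Sigma^{1/2}A^{-\top}\phi_0}_2 \cdot \nrm{\Sigma^{-1/2}A(\thetastar - \estm)}_2.
\end{align*}
The first factor is exactly $\sqrt{\cvrg}/(1-\gamma)$ by \cref{eq:cvrg}, and since $A\thetastar = b$ the second factor equals the population loss $L(\estm) := \nrm{\Sigma^{-1/2}(A\estm - b)}_2$. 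It then remains to show $L(\estm) \approxleq \max\crl{\Bphi\Btheta,\Rmax}^2\sqrt{d\log(\Btheta n\delta^{-1})/(\lambdamin(\Sigma)n)}$.

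To bound $L(\estm)$ I would compare it to the empirical objective $\emp{L}(\theta) := \nrm{\hatSigma^{-1/2}(\emp{A}\theta - \emp{b})}_2$ minimized by the algorithm, via
\begin{align*}
L(\estm) \leq \underbrace{\abs{L(\estm) - \emp{L}(\estm)}}_{\text{uniform deviation}} + \underbrace{\emp{L}(\estm)}_{\leq\, \emp{L}(\thetastar)\ \text{by optimality}},
\end{align*}
where the second inequality uses that $\estm$ minimizes $\emp{L}$ over $\Theta$ and that $\thetastar\in\Theta$ under realizability. The term $\emp{L}(\thetastar) = \nrm{\hatSigma^{-1/2}(\emp{A}\thetastar - \emp{b})}_2$ is precisely the quantity controlled by \cref{lem:emp-lstd-concentration}, giving $\emp{L}(\thetastar)\approxleq\Vmax\sqrt{(d+\log(1/\delta))/n}$ — a contribution with neither $\lambdamin(\Sigma)$ nor $\sigmamin(A)$.

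The crux is the uniform deviation $\sup_{\theta\in\Theta}\abs{L(\theta) - \emp{L}(\theta)}$. Since both losses are norms of whitened vector means, I would use the reverse triangle inequality and add/subtract $\hatSigma^{-1/2}(A\theta - b)$ to get
\begin{align*}
\abs{L(\theta) - \emp{L}(\theta)} \leq \nrm{(\Sigma^{-1/2} - \hatSigma^{-1/2})(A\theta - b)}_2 + \nrm{\hatSigma^{-1/2}\rbr{(A - \emp{A})\theta - (b - \emp{b})}}_2.
\end{align*}
For the whitening gap $\nrm{\Sigma^{-1/2}-\hatSigma^{-1/2}}_2$ and the factor $\nrm{\hatSigma^{-1/2}}_2\approxleq 1/\sqrt{\lambdamin(\Sigma)}$, I would invoke matrix-Bernstein control of $\nrm{\Sigma-\hatSigma}_2$ as in \cref{lem:matrix-concentration} together with the stated burn-in $n\gtrsim\log(d/\delta)\kappa(\Sigma)\Bphi^2/\lambdamin(\Sigma)$, which forces $\tfrac12\Sigma\preceq\hatSigma\preceq 2\Sigma$ and hence the $1/\lambdamin(\Sigma)$ scaling. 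The second term is a centered empirical process: writing $(A-\emp{A})\theta - (b-\emp{b}) = \En_\data[\phi\, g(\theta)] - \En_\Dcal[\phi\, g(\theta)]$ with TD residual $g(\theta):=\phi(s,a)^\top\theta - \gamma\phi(s',a')^\top\theta - r$, I would apply a vector Bernstein inequality for fixed $\theta$ (per-sample magnitude $\leq\Bphi\abs{g(\theta)}\approxleq\Bphi\max\crl{\Bphi\Btheta,\Rmax}$) and union bound over an $\epsilon$-net of the $\Btheta$-ball. Because $g(\theta)$ is $\Bphi$-Lipschitz in $\theta$, the net has $\log$-cardinality $\approxleq d\log(\Btheta n\delta^{-1})$ at resolution $\epsilon\sim 1/n$, supplying the $\sqrt{d\log(\Btheta n\delta^{-1})}$ factor while the discretization residual is absorbed at lower order. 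Multiplying the resulting bound on $L(\estm)$ by $\sqrt{\cvrg}/(1-\gamma)$ yields the claim.

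The main obstacle I anticipate is this uniform-deviation step: making the net over the $\Btheta$-ball interact cleanly with the empirical whitening $\hatSigma^{-1/2}$ so the final rate carries $1/\lambdamin(\Sigma)$ and not $1/\sigmamin(A)$, and verifying that the per-sample and variance proxies for $\phi\,g(\theta)$ collapse into the single factor $\max\crl{\Bphi\Btheta,\Rmax}^2$ (using $\Bphi\leq\max\crl{\Bphi\Btheta,\Rmax}$). By contrast, the Cauchy--Schwarz split and the optimality comparison are routine given \cref{lem:emp-lstd-concentration,lem:matrix-concentration}.
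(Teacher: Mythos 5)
Your overall architecture coincides with the paper's: Cauchy--Schwarz with the population whitening to extract $\sqrt{\cvrg}/(1-\gamma)$ times the population loss $\nrm*{\Sigma^{-1/2}(A\estm-b)}_2$, then optimality of $\estm$ against $\thetastar$, a vector-Bernstein-plus-net uniform concentration bound, and a matrix-Bernstein burn-in. The genuine gap is in your treatment of the mismatch between the two whitenings $\Sigma^{-1/2}$ and $\hatSigma^{-1/2}$: you handle it \emph{additively}, via $\nrm*{(\Sigma^{-1/2}-\hatSigma^{-1/2})(A\theta-b)}_2 \le \nrm*{\Sigma^{-1/2}-\hatSigma^{-1/2}}_2\,\nrm*{A\theta-b}_2$, and this step cannot deliver the stated rate. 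If you use only the sandwich $\tfrac12\Sigma\preceq\hatSigma\preceq2\Sigma$ (which is all the burn-in forces), then the best available bound is $\nrm*{\Sigma^{-1/2}-\hatSigma^{-1/2}}_2\lesssim 1/\sqrt{\lambdamin(\Sigma)}$, a quantity that does \emph{not} decay with $n$, so your uniform-deviation term never vanishes. If instead you use a genuine inverse-square-root perturbation bound, $\nrm*{\Sigma^{-1/2}-\hatSigma^{-1/2}}_2\lesssim\nrm*{\Sigma-\hatSigma}_2/\lambdamin(\Sigma)^{3/2}$, then combining with $\nrm*{\Sigma-\hatSigma}_2\approxleq\Bphi\sqrt{\lambdamax(\Sigma)\log(d/\delta)/n}$ and $\nrm*{A\theta-b}_2\lesssim\Bphi\max\crl*{\Bphi\Btheta,\Rmax}$ yields a term of order $\Bphi^2\sqrt{\lambdamax(\Sigma)}\max\crl*{\Bphi\Btheta,\Rmax}\,\lambdamin(\Sigma)^{-3/2}\sqrt{\log(d/\delta)/n}$, which exceeds the claimed $\max\crl*{\Bphi\Btheta,\Rmax}^2\lambdamin(\Sigma)^{-1/2}\sqrt{d\log(\Btheta n\delta^{-1})/n}$ by a factor on the order of $\Bphi\sqrt{\lambdamax(\Sigma)}/(\Btheta\lambdamin(\Sigma))$. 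This factor is unbounded as $\lambdamin(\Sigma)\to 0$ and is essentially $n$-independent, so no burn-in condition repairs it.

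The missing idea is to make the whitening change \emph{multiplicative} rather than additive, which is precisely how the paper proceeds. Its uniform concentration result (\cref{lemma:lstdconcentration}) keeps the same deterministic whitening $\Sigma^{-1/2}$ on both the population and empirical loss vectors, so the deviation is a centered empirical process $\nrm*{\Sigma^{-1/2}\prn*{(\emp{A}-A)\theta-(\emp{b}-b)}}_2$ with no inverse-square-root perturbation anywhere; the switch to the algorithm's objective $\nrm*{\hatSigma^{-1/2}(\emp{A}\theta-\emp{b})}_2$, needed for the optimality step, is done through the norm equivalence governed by $M=\Sigma^{-1/2}\hatSigma\Sigma^{-1/2}$ (\cref{lem:cond-numb-improved}), whose condition number the burn-in pins to $\kappa(M)\le 3$ (\cref{lem:cond-numb-concentration-improved}), so each conversion costs only an absolute constant. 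Equivalently, within your own decomposition you could write $(\Sigma^{-1/2}-\hatSigma^{-1/2})(A\estm-b)=\hatSigma^{-1/2}(\hatSigma^{1/2}-\Sigma^{1/2})\Sigma^{-1/2}(A\estm-b)$, observe that under the burn-in the operator factor $\nrm*{\hatSigma^{-1/2}(\hatSigma^{1/2}-\Sigma^{1/2})}_2$ is at most a small constant, and absorb this term into the left-hand side $\nrm*{\Sigma^{-1/2}(A\estm-b)}_2$ by a self-bounding argument; note this only works at $\theta=\estm$ (and trivially at $\thetastar$, where $A\thetastar-b=0$), not uniformly over $\Theta$ where $\nrm*{A\theta-b}_2$ is a constant. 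Your remaining pieces---the optimality comparison, the control of $\nrm*{\hatSigma^{-1/2}(\emp{A}\thetastar-\emp{b})}_2$ via \cref{lem:emp-lstd-concentration}, and the vector Bernstein bound with a $(3\Btheta/\varepsilon)^d$ net for the centered process---are sound and match the paper's \cref{lemma:lstdconcentration}.
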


\begin{proof}[\pfref{thm:mainthm}]
Let $\hat{\ell}(\theta)$ and $\ell(\theta)$ denote the empirical and population vectors: 
\[
\hat{\ell}(\theta) = \hat{A} \theta - \hat{b} \quad \text{and} \quad \ell(\theta) = A \theta - b.
\]
Recall that $A\thetastar = b$ and thus $\ell(\thetastar) = 0$. This also implies $\nrm{\Sigma^{-1/2}\ell(\thetastar)}_2 = 0 $ by invertibility of $\Sigma$. We establish in the sequel the following concentration lemma.

\begin{restatable}{lemma}{lstdconcentration}\label{lemma:lstdconcentration}
With probability at least $1-\delta$, we have that for all $\theta \in \Theta$:
\[
\abs*{\nrm{\Sigma^{-1/2}\ell(\theta)}_2 - \nrm{\Sigma^{-1/2}\hat{\ell}(\theta)}_2} \leq \max\crl*{\Bphi\Btheta,\Rmax}^2\sqrt{\frac{288d\log(864\Btheta n\delta^{-1})}{\lambdamin(\Sigma)n}} \coloneqq \epsstat.
\]
\end{restatable}

We also note the following simple technical lemma.

\begin{lemma}\label{lem:cond-numb-improved}
Let $M = \Sigma^{-1/2}\hatSigma \Sigma^{-1/2}$. Then, for all $v \in \bbR^d$, we have
    \[
   v^\top \Sigmainv v \leq \lambdamax(M) v^\top \hatSigmainv v, \quad \text{ and } \quad v^\top \hatSigmainv v \leq \frac{1}{\lambdamin(M)} v^\top \Sigmainv v.
    \] 
\end{lemma}
Recall that $\hattheta$ satisfies $\argmin_{\theta \in \Theta}\nrm{\hatSigma^{-1/2} \hat{\ell}(\theta)}_2$. We now show that \cref{lemma:lstdconcentration} and \cref{lem:cond-numb-improved} imply that $\nrm{\Sigma^{-1/2}\ell(\hattheta)}_2$ is small. This follows since:
\begin{align*}
 \nrm{\Sigma^{-1/2}\ell(\thetahat)}_2 &\leq \nrm{\Sigma^{-1/2}\hat{\ell}(\thetahat)}_2 + \epsstat \\
 &\leq \sqrt{\lambdamax(M)}\nrm{\hatSigma^{-1/2}\hat{\ell}(\thetahat)}_2 + \epsstat \\
  &\leq \sqrt{\lambdamax(M)}
 \nrm{\hatSigma^{-1/2}\hat{\ell}(\thetastar)}_2 + \epsstat \\
    &\leq \sqrt{\frac{\lambdamax(M)}{\lambdamin(M)}}
 \nrm{\Sigma^{-1/2}\hat{\ell}(\thetastar)}_2 + \epsstat \\
    &\leq \sqrt{\kappa(M)}
 \nrm{\Sigma^{-1/2}\ell(\thetastar)}_2 + \prn*{1+\sqrt{\kappa(M)}}\epsstat \\
    &= \prn*{1+\sqrt{\kappa(M)}}\epsstat \\
    &\leq 2\sqrt{\kappa(M)}\epsstat.
\end{align*}

In the sequel, we also show concentration for the condition number of $M$ to a constant.
\begin{lemma}\label{lem:cond-numb-concentration-improved}
    Let $n \geq 16\frac{\kappa(\Sigma)}{\lambdamin(\Sigma)}\prn*{\Bphi^2 + \lambdamax(\Sigma)}\log(6d/\delta)$. Then, with probability at least $1-\delta$, we have:
\[
\kappa(M) \leq 3
\]
\end{lemma}

This implies that, under the condition on sample size, we have $\nrm*{\Sigma^{-1/2}\ell(\hattheta)}_2 \leq \sqrt{12}\epsstat$ with high-probability. We can now conclude the proof. Under the conditions and events stated above, we have:
\begin{align}
\abs*{\En_{s_0 \sim \ini,a_0 \sim \pi}\brk*{Q^\pi(s_0,a_0) - \hat{Q}_{\lstd}(s_0,a_0)}} &= \abs*{\En_{s_0 \sim \ini,a_0 \sim \pi}\brk*{\phi(s_0,a_0)^\top\prn*{\thetastar - \estm}}} \label{eq:pop-lstd-derivation}\\
&= \abs*{\phi_0^\top\prn*{\thetastar - \estm}} \nonumber\\
&= \abs*{\phi_0^\top\prn*{A^{-1}b - \estm}} \nonumber \\
&= \abs*{\phi_0^\top A^{-1}\prn*{b - A\estm}} \nonumber \\
&= \abs*{\phi_0^\top A^{-1}\Sigma^{1/2}\Sigma^{-1/2}\prn*{b - A\estm}} \nonumber \\
&= \abs*{\phi_0^\top A^{-1}\Sigma^{1/2}\Sigma^{-1/2}\prn*{b - A\estm}} \nonumber \\
&\leq \nrm*{\Sigma^{1/2}A^{-T}\phi_0}_2\nrm*{\Sigma^{-1/2}\prn*{A\estm - b}}_2 \nonumber \\
&= \nrm*{\Sigma^{1/2}A^{-T}\phi_0}_2\nrm*{\Sigma^{-1/2}\prn*{A\estm - b}}_2 \nonumber \\
&= \sqrt{ \phi_0^\top A^{-1}\Sigma A^{-T}\phi_0} \nrm{\Sigma^{-1/2}\ell(\hattheta)}_2 \nonumber \\
&\leq \frac{1}{1-\gamma}\sqrt{\cvrg} \sqrt{12}\epsstat \label{eq:l-theta-concentration},
\end{align}
as desired. We now establish \cref{lemma:lstdconcentration,lem:cond-numb-improved,lem:cond-numb-concentration-improved}. 

\end{proof}

\begin{proof}[\pfref{lemma:lstdconcentration}]
   Let $\theta$ be fixed for now, and $\Delta(\theta) = \hat{\ell}(\theta) - \ell(\theta)$. Note that by the reverse triangle inequality,
   \[
   \abs*{\nrm{\Sigma^{-1/2}\ell(\theta)}_2 - \nrm{\Sigma^{-1/2}\hat{\ell}(\theta)}_2} \leq \nrm*{\Sigma^{-1/2}\Delta(\theta)}_2 = \nrm*{\Sigma^{-1/2}(\hat{A} - A)\theta - \Sigma^{-1/2}(\hat{b}-b)}_2.
 	\]
 	
 	We use Vector Bernstein (\cref{lem:vec-bern}) to show that this is small. Let $X_i = \phi(s_i,a_i)$, $Y_i = \phi(s_i,a_i) - \gamma \phi(s'_i,a'_i)$, and $\Delta_i(\theta) = X_i( Y_i^\top \theta - r_i) - (A\theta - b)$ denote the centered vectors.
 	Note that 
 	\[
 	\nrm{\Sigma^{-1/2}\Delta_i(\theta)}_2 \leq \frac{1}{\sqrt{\lambdamin(\Sigma)}} 2\max\crl*{\nrm{X_i(Y_i^\top \theta - r_i)}_2,\nrm{A\theta - b}_2}. 
 	\]
 	We have the following bound: 
 		\begin{align}
	 \nrm{A \theta - b}_2 &= \nrm*{\En\brk*{\phi(s,a)\prn*{\prn*{\phi(s,a) - \gamma \phi(s',a')}^\top \theta -  r(s,a)}}}_2 \nonumber \\
&\leq \nrm*{\En\brk*{\phi(s,a)\phi(s,a)^\top \theta}}_2 + \gamma \nrm*{\En\brk*{\phi(s,a)\phi(s',a')^\top \theta}}_2 + \nrm{ \En\brk*{\phi(s,a) r(s,a)}}_2 \nonumber \\
&\leq (1+\gamma)\max_{s,a} \nrm{\phi(s,a)}^2_2 \nrm{\theta}_2 + \max_{s,a}\nrm{\phi(s,a)}_2 \Rmax  \nonumber \\
&\leq 3\Bphi\max\crl*{\Bphi\Btheta,\Rmax}. \label{eq:Atheta-bound}
\end{align}

	We remark that with a similar derivation, this bound applies just as well to $\nrm{X_i (Y_i^\top \theta - r_i)}_2$, so in fact we have
	\[
	 	\nrm{\Sigma^{-1/2}\Delta_i(\theta)}_2 \leq \frac{6}{\sqrt{\lambdamin(\Sigma)}}\Bphi\max\crl*{\Bphi\Btheta,\Rmax}.
	\]
	
	For the variance bound, we simply use that
\[
\En\brk*{\nrm{\Sigma^{-1/2}\Delta_i(\theta)}_2^2} \leq \prn*{\frac{6}{\sqrt{\lambdamin(\Sigma)}}\Bphi\max\crl*{\Bphi\Btheta,\Rmax}}^2.
\]
	Then, we conclude via \cref{lem:vec-bern} that 
	\[
	\nrm{\Sigma^{-1/2}\Delta(\theta)}_2 \leq \Bphi\max\crl*{\Bphi\Btheta,\Rmax} \sqrt{\frac{32\log(288\delta^{-1})}{\lambdamin(\Sigma)n}}.
	\]
	We now apply a covering argument over $\theta \in \Theta$. Let $\Theta_0 \subseteq \Theta$ be an $L_2$-covering of $\Theta$ of size $\cN(\varepsilon)$, satisfying for for each $\theta \in \Theta$ there exists a covering member $\rho(\theta) \in \Theta_0$ satisfying $\nrm*{\theta - \rho(\theta)}_2 \leq \varepsilon$. Via a simple triangle inequality:
	\[
	\nrm*{\Sigma^{-1/2}\Delta(\theta)}_2 \leq \nrm*{\Sigma^{-1/2}\Delta(\rho(\theta))}_2 + \nrm*{\Sigma^{-1/2}\prn*{\Delta(\theta) - \Delta(\rho(\theta))}}_2.
	\]
	We bound the latter term as a function of $\varepsilon$. 
	\begin{align*}
	\nrm*{\Sigma^{-1/2}\prn*{\Delta(\theta) - \Delta(\rho(\theta))}}_2 &= \nrm*{\Sigma^{-1/2}\prn*{A - \hat{A}}(\theta - \rho(\theta))}_2 \\
		&\leq \frac{1}{\sqrt{\lambdamin(\Sigma)}}2 \max\crl*{\sigmamax(A),\sigmamax(\hat{A})} \varepsilon.
	\end{align*}
	We notice that $\max\crl*{\sigmamax(A), \sigmamax(\hat{A})} \leq 2\Bphi^2$ via a similar reasoning to Eq. \eqref{eq:Atheta-bound}. This leaves us with:
	\begin{align*}
		\nrm*{\Sigma^{-1/2}\Delta(\theta)}_2 &\leq \Bphi\max\crl*{\Bphi\Btheta,\Rmax} \sqrt{\frac{32\log(288\abs{\Theta_0}\delta^{-1})}{\lambdamin(\Sigma)n}} + \frac{2\Bphi^2}{\sqrt{\lambdamin(\Sigma)}} \varepsilon,\\
		&\leq \Bphi\max\crl*{\Bphi\Btheta,\Rmax} \sqrt{\frac{32d\log(864\delta^{-1}/\varepsilon)}{\lambdamin(\Sigma)n}} + \frac{2\Bphi^2}{\sqrt{\lambdamin(\Sigma)}} \varepsilon,\\
	\end{align*}
	where we have applied a union bound over the set $\Theta_0$, which is of size at most $(3\Btheta/\varepsilon)^d$ for $\varepsilon \in (0,1]$ by standard covering number bounds \citep{vershynin2018high}, since $\Theta \subset \crl*{\theta \in \bbR^d: \nrm{\theta}_2 \leq \Btheta}$.
	Picking $\varepsilon = 1/\sqrt{n}$ lets us conclude that, with probability at least $1-\delta$, for all $\theta \in \Theta$, 
\[
\nrm*{\Sigma^{-1/2}\Delta(\theta)}_2 \leq \Bphi\max\crl*{\Bphi\Btheta,\Rmax} \sqrt{\frac{288d\log(864\Btheta n\delta^{-1})}{\lambdamin(\Sigma)n}},
\]
as desired.
\end{proof}

\begin{proof}[\pfref{lem:cond-numb-improved}]
Define $w = \Sigma^{-1/2}v$, so that $v = \Sigma^{1/2}w$. Then:
\[
v^\top \hatSigmainv v = w^\top \Sigma^{1/2} \hatSigmainv \Sigma^{1/2} w = w^\top M^{-1} w,
\]
so we have
\[
 \frac{1}{\lambdamax(M)} v^\top \Sigmainv v = \frac{1}{\lambdamax(M)} w^\top w \leq w^\top M^{-1} w \leq \frac{1}{\lambdamin(M)} w^\top w = \frac{1}{\lambdamin(M)} v^\top \Sigmainv v,
\]
which establishes both bounds.
\end{proof}

\begin{proof}[\pfref{lem:cond-numb-concentration-improved}]
We firstly establish that
    \begin{equation}
    \nrm{\hatSigma - \Sigma}_2 \leq \sqrt{\frac{8\lambdamax(\Sigma)\prn{\Bphi^2 + \lambdamax(\Sigma)}\log(6d/\delta)}{n}} =: \varepsilon_\op. \label{eq:sigma-conc}
\end{equation}
To do this, we use Matrix Bernstein (\cref{lem:matrix_bern}). Abbreviate $X_i \coloneqq \phi(s_i,a_i)$, and let $Z_i = X_i X_i^\top - \Sigma$ be the centered matrices. Note that $\nrm{Z_i}_2 \leq \nrm{X_i X_i^\top}_2 + \nrm{\Sigma}_2 \leq \nrm{X_i}_2^2 + \lambdamax(\Sigma) \leq \Bphi^2 + \lambdamax(\Sigma)$. For the variance term, we have:
    \begin{align*}
    \nrm*{\En\brk*{(X_i X_i^\top - \Sigma)^2}}_2 &= \nrm*{\En\brk*{(X_iX_i^\top)^2} - \Sigma^2}_2\\
    &\leq \nrm*{\Bphi^2 \En\brk*{X_i X_i^\top} - \Sigma^2}_2 \\
    &\leq \Bphi^2 \lambdamax(\Sigma) + \lambdamax(\Sigma)^2.
    \end{align*}
This yields 
\[
	\nrm{\hat\Sigma - \Sigma}_2 \leq \sqrt{\frac{2\lambdamax(\Sigma)\prn{\Bphi^2 + \lambdamax(\Sigma)}\log(2d/\delta)}{n}} + \frac{2(\Bphi^2+\lambdamax(\Sigma))\log(2d/\delta)}{3n}
\]

The slow term dominates when $n$ is large enough:
\begin{equation}\label{eq:first-n-bound}
n \geq \frac{2(\Bphi^2 + \lambdamax(\Sigma))\log(2d/\delta)}{9\lambdamax(\Sigma)} = \frac{2}{9}\log(2d/\delta)\prn*{\frac{\Bphi^2}{\lambdamax(\Sigma)} + 1}.
\end{equation}
Note that this is implied by our assumption on $n$, since $\lambdamax(\Sigma) \geq \lambdamin(\Sigma)$ and $\kappa(\Sigma) \geq 1$. Thus, under this condition we have
\[
	\nrm{\hat\Sigma - \Sigma}_2 \leq 2\sqrt{\frac{2\lambdamax(\Sigma)\prn{\Bphi^2 + \lambdamax(\Sigma)}\log(2d/\delta)}{n}} = \varepsilon_\op,
\]
as desired. Now, note that this implies
\[
\nrm*{\Sigma^{-1/2}(\hatSigma - \Sigma)\Sigma^{-1/2}}_2 = \nrm*{M - I}_\op \leq \frac{1}{\lambdamin(\Sigma)} \eps_\op \coloneqq \varepsilon_M.
\]
Note that our assumption on $n$ implies that 
\begin{align*}
&n \geq 16\frac{\kappa(\Sigma)}{\lambdamin(\Sigma)}(\Bphi^2+\lambdamax(\Sigma))\log(2d/\delta) \\
&\qquad \implies \varepsilon_M = \frac{2}{\lambdamin(\Sigma)}\sqrt{2\lambdamax(\Sigma)(\Bphi^2+\lambdamax(\Sigma))\log(2d/\delta)} \leq \frac{1}{\sqrt{2}}
\end{align*}
In particular, this implies 
\[
(1-\varepsilon_M) I \preceq M \preceq (1+\varepsilon_M) I,
\]
so that $\lambdamin(M) \geq 1-\varepsilon_M$ and $\lambdamax(M) \leq 1+\varepsilon_M$, which implies 
\[
\kappa(M) \leq \frac{1+\varepsilon_M}{1-\varepsilon_M},
\]
and thus
\[
1+\sqrt{\kappa(M)} \leq 1+ \sqrt{\frac{1+\varepsilon_M}{1-\varepsilon_M}}
 \leq 1 + (1+2\varepsilon_M) = 2 + 2\varepsilon_M \leq 3,
 \]
 again using that $\varepsilon_M < 1/\sqrt{2}$ and the inequality $\sqrt{\nicefrac{1+x}{1-x}} \leq 1 + 2x$, which is easily seen to be true by algebraic manipulations for $x < 1/\sqrt{2}.$

\end{proof}

\section{Technical Tools}
\begin{lemma}[Matrix Bernstein, \cite{tropp2012user}]\label{lem:matrix_bern}
Let $S_1,\ldots, S_n \in \bbR^{d_1 \times d_2}$ be random, independent matrices satisfying $\En\brk{S_k}=0$, $\max\crl*{\nrm{\En[S_k S_k^\top]}_\op, \nrm{\En[S^\top_k S_k]_\op}}\leq \sigma^2$, and $\nrm{S_k}_\op \leq L$ almost surely for all $k$. Then, with probability at least $1-\delta$ for any $\delta \in (0,1)$,
\[
\nrm*{\frac{1}{n} \sum_{k=1}^n S_k}_\op \leq \sqrt{\frac{2\sigma^2 \log((d_1+d_2)/\delta)}{n}} + \frac{2L \log((d_1 + d_2)/\delta)}{3n}
\]
\end{lemma}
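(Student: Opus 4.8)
The plan is to prove this by the matrix Laplace-transform (matrix Chernoff) method of Ahlswede--Winter and Tropp, reducing the rectangular case to a Hermitian one through the self-adjoint dilation. Write $\bar S = \frac1n\sum_{k=1}^n S_k$ and put $X_k = \tfrac1n S_k$, so that $\bar S = \sum_k X_k$ with $\En[X_k]=0$, $\nrm{X_k}_\op \le R := L/n$, and $\max\{\nrm{\En[X_kX_k^\top]}_\op,\nrm{\En[X_k^\top X_k]}_\op\}\le \sigma^2/n^2$. Define the dilation $\cH(S) = \left(\begin{smallmatrix} 0 & S \\ S^\top & 0\end{smallmatrix}\right)\in\bbR^{(d_1+d_2)\times(d_1+d_2)}$. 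The two algebraic facts I would use are $\lambdamax(\cH(S)) = \nrm{S}_\op$ (the nonzero eigenvalues of $\cH(S)$ are $\pm$ the singular values of $S$) and $\cH(S)^2 = \diag(SS^\top, S^\top S)$. Since the spectrum of $\cH(\bar S)$ is symmetric about $0$, the upper tail of $\lambdamax(\cH(\bar S))$ already controls $\nrm{\bar S}_\op$, so it suffices to prove a one-sided bound on $\lambdamax$ of the Hermitian sum $\cH(\bar S)=\sum_k \cH(X_k)$, whose summands are independent, mean-zero, satisfy $\nrm{\cH(X_k)}_\op\le R$, and obey $\nrm{\sum_k \En[\cH(X_k)^2]}_\op \le n\cdot \sigma^2/n^2 = \sigma^2/n =: v$ (using that the blocks are PSD).

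Next I would invoke the matrix Laplace transform inequality: for Hermitian $Y$ and $\theta>0$, $\Pr[\lambdamax(Y)\ge t] \le e^{-\theta t}\,\En\,\tr\,e^{\theta Y}$. Applying this with $Y=\sum_k \cH(X_k)$, the crux is to bound $\En\,\tr\exp(\theta\sum_k \cH(X_k))$. Here I would use the subadditivity of matrix cumulants, a consequence of Lieb's concavity theorem (that $A\mapsto \tr\exp(H+\log A)$ is concave on the positive-definite cone): tensorizing over the independent summands gives
\[
\En\,\tr\exp\!\Big(\theta\textstyle\sum_k \cH(X_k)\Big) \;\le\; \tr\exp\!\Big(\textstyle\sum_k \log \En\, e^{\theta \cH(X_k)}\Big).
\]

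Third, I would bound each matrix cumulant generating function. Since $\En[\cH(X_k)]=0$ and $\nrm{\cH(X_k)}_\op\le R$, the scalar bound $e^{\theta x}\le 1+\theta x + g(\theta)x^2$ for $|x|\le R$, with $g(\theta) := (e^{\theta R}-\theta R-1)/R^2$, lifts through the spectral theorem to $\En\,e^{\theta \cH(X_k)} \preceq I + g(\theta)\,\En[\cH(X_k)^2]$; then $\log(I+M)\preceq M$ yields $\log\En\,e^{\theta \cH(X_k)} \preceq g(\theta)\,\En[\cH(X_k)^2]$. Summing and applying monotonicity of $\tr\exp$ with the variance bound gives $\En\,\tr\,e^{\theta Y} \le (d_1+d_2)\exp(g(\theta)\,v)$, hence $\Pr[\lambdamax(Y)\ge t]\le (d_1+d_2)\exp(-\theta t + g(\theta)v)$. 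Finally, optimizing over $\theta$ exactly as in the scalar Bernstein inequality (using $g(\theta)\le \tfrac{\theta^2/2}{1-\theta R/3}$ for $\theta R<3$) produces the rate $\exp\!\big(-\tfrac{t^2}{2(v+Rt/3)}\big)$; setting the right-hand side equal to $\delta$ and solving the resulting quadratic for $t$ gives $t\le \sqrt{2v\log((d_1+d_2)/\delta)} + \tfrac{2R}{3}\log((d_1+d_2)/\delta)$. Substituting $v=\sigma^2/n$ and $R=L/n$ recovers the stated inequality verbatim.

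I expect the main obstacle to be the non-commutative tensorization step: establishing the cumulant subadditivity via Lieb's concavity theorem is where the genuine analytic content lives, since the Golden--Thompson route handles only two factors and does not directly chain over $n$ independent terms. Everything downstream---the dilation bookkeeping, the per-summand mgf bound, and the final optimization---is a faithful matrix transcription of the classical scalar Bernstein computation, and the one-sided-to-two-sided reduction is free because of the $\pm$ symmetry of the dilation spectrum.
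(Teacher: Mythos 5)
The paper does not prove this lemma at all: it is imported verbatim from \cite{tropp2012user} as a technical tool, so there is no internal proof to compare against. Your argument is a correct reconstruction of Tropp's own proof of the cited result---self-adjoint dilation (with the correct bookkeeping $\lambdamax(\cH(\bar S)) = \nrm{\bar S}_\op$ and $\cH(S)^2 = \diag(SS^\top, S^\top S)$, which makes the two-sided bound free and yields the $d_1+d_2$ dimension factor), the matrix Laplace transform, cumulant subadditivity via Lieb's concavity theorem, the per-summand bound $\log \En\, e^{\theta \cH(X_k)} \preceq g(\theta)\,\En[\cH(X_k)^2]$, and the scalar Bernstein optimization with $g(\theta) \le \tfrac{\theta^2/2}{1-\theta R/3}$---and the final inversion with $v = \sigma^2/n$, $R = L/n$ recovers the stated constants exactly.
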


\begin{lemma}[Vector Bernstein, \cite{minsker2017some}]\label{lem:vec-bern}
	Let $v_1, \ldots, v_n$ be independent vectors in $\bbR^d$ such that $\En[v_k] = 0 $, $\En[\nrm{v_k}_2^2] \leq \sigma^2$, and $\nrm{v_k}_2 \leq L$ almost surely for all $k$. Then, with probability at least $1-\delta$ for any $\delta \in (0,1)$, 
	\[
	\nrm*{\frac{1}{n} \sum_{i=1}^n v_i}_2 \leq \sqrt{\frac{2\sigma^2\log(28/\delta)}{n}} + \frac{2L\log(28/\delta)}{3n}.
	\]
\end{lemma}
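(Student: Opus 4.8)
The plan is to recognize Lemma~\ref{lem:vec-bern} as a dimension-free Bernstein inequality for a Hilbert-space--valued martingale and to derive it from a scalar tail inversion. Write $S_k = \sum_{i\le k} v_i$ and let $\Fcal_k = \sigma(v_1,\dots,v_k)$. Since each $v_i$ is mean-zero and independent of the past, $(S_k)_{k\le n}$ is an $\RR^d$-valued martingale with increments bounded by $\nrm{v_i}_2 \le L$ almost surely and predictable quadratic variation $\sum_{i=1}^n \En\brk*{\nrm{v_i}_2^2} \le n\sigma^2$. The goal reduces to a tail bound of Bennett/Bernstein form
\[
\bbP\prn*{\nrm{S_n}_2 \ge t} \le 2\exp\prn*{-\frac{t^2}{2(n\sigma^2 + Lt/3)}},
\]
after which dividing by $n$ and the usual Bernstein inversion (solving the exponent for $t$ at level $\delta$) yields exactly the stated two-term bound, with the numerical constant $28$ tracking the constants incurred in the smoothing step below.

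First I would record the first-moment control that feeds the variance term: by independence and $\En[v_i]=0$ all cross terms vanish, so $\En\nrm{S_n}_2^2 = \sum_i \En\nrm{v_i}_2^2 \le n\sigma^2$ and hence $\En\nrm{S_n}_2 \le \sqrt{n}\,\sigma$. The substance of the argument is then to upgrade this into an exponential tail. The approach I would take is to bound the moment generating function $\En\exp(\lambda \nrm{S_n}_2)$ directly by exploiting the $2$-smoothness of the Hilbert norm: one processes the martingale increment-by-increment, using the elementary inequality $\nrm{x+v}_2 \le \nrm{x}_2 + \langle v, x/\nrm{x}_2\rangle + \nrm{v}_2^2/(2\nrm{x}_2)$ (or a globally smoothed surrogate of $\nrm{\cdot}_2$ that avoids the singularity at the origin) to peel off one $v_i$ at a time while conditioning on $\Fcal_{i-1}$. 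Each peeling step contributes a factor controlled by the conditional second moment $\En[\nrm{v_i}_2^2 \mid \Fcal_{i-1}]$ together with the boundedness $\nrm{v_i}_2\le L$, exactly mirroring the scalar Bernstein MGF recursion; collecting these factors and optimizing over $\lambda$ reproduces the displayed tail. This is precisely the route of \citet{minsker2017some} (equivalently Pinelis's inequality for martingales in $2$-smooth spaces), so in the paper it suffices to cite this result, but the plan above explains why it holds.

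The step I expect to be the main obstacle is obtaining \emph{dimension-freeness} in the concentration bound. The tempting shortcut of writing $\nrm{S_n}_2 = \sup_{\nrm{u}_2=1} u^\top S_n$ and union-bounding scalar Bernstein over an $\varepsilon$-net of the sphere fails here, because the net has size $(3/\varepsilon)^d$ and injects a spurious $\sqrt{d}$ factor into the leading term, which would destroy the claimed guarantee. Likewise, embedding each $v_i$ into the off-diagonal block of a symmetric $(d{+}1)\times(d{+}1)$ matrix and invoking Matrix Bernstein (\cref{lem:matrix_bern}) recovers the correct range and variance structure but leaves a $\log(d/\delta)$ dependence. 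Avoiding the dimension factor is exactly what the smoothness-based MGF recursion buys, and is the reason this vector inequality is stated as a separate tool rather than deduced from \cref{lem:matrix_bern}.
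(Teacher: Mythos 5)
The paper does not actually prove \cref{lem:vec-bern}: it is imported verbatim as a technical tool with a citation to \citet{minsker2017some}, so there is no in-paper argument to compare against, and your observation that a citation suffices is exactly what the authors do. That said, your sketch is sound, and it is worth noting that it essentially re-derives the paper's \emph{other} listed tool, \cref{lem:vec-martingale-bernstein} (Pinelis's Bernstein inequality for Hilbert-space-valued martingales): independent mean-zero vectors are a special case of a martingale difference sequence, so the stated lemma follows immediately from that lemma with $\sigma^2$ replaced by $n\sigma^2$, and with the \emph{better} constant $\log(2/\delta)$ in place of $\log(28/\delta)$. Your MGF-recursion plan via $2$-smoothness of the Hilbert norm (with the smoothed surrogate to handle the singularity at the origin) is precisely Pinelis's argument, and your peeling inequality $\nrm{x+v}_2 \le \nrm{x}_2 + \langle v, x/\nrm{x}_2\rangle + \nrm{v}_2^2/(2\nrm{x}_2)$ is correct.

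One refinement to your discussion of failed shortcuts: your dismissal of the Hermitian-dilation route is accurate only for the Tropp-style bound the paper states as \cref{lem:matrix_bern}, which indeed carries a $\log(d/\delta)$ prefactor. But Minsker's actual proof of the cited result \emph{is} the dilation route, applied to his effective-rank refinement of matrix Bernstein: because the variance proxy in \cref{lem:vec-bern} is the trace-type quantity $\sum_i \En\nrm{v_i}_2^2$ rather than the operator norm of the covariance, the effective rank $\mathrm{tr}(V)/\nrm{V}_{\mathrm{op}}$ of the dilated variance matrix is at most $2$, which is exactly how the dimension factor disappears and where the constant $28 = 14 \cdot 2$ comes from. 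So the dimension-freeness can be bought either by $2$-smoothness (your route, constant $2$) or by effective rank (Minsker's route, constant $28$); only the naive $\varepsilon$-net union bound is genuinely doomed to a spurious $\sqrt{d}$.
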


  \begin{lemma}[Vector Martingale Bernstein \citep{pinelis1994optimum,martinez2024empirical}]
  \label{lem:vec-martingale-bernstein}
  Let $(X_t)_{t\leq{T}}$ be a martingale sequence of vectors in $\bbR^d$ adapted to a filtration $(\cF_t)_{t \leq T}$, such that $\En_{t-1}[X_t] = 0$, and $\nrm{X_t}_2 \leq B$, and $\sum_{t=1}^T \En_{t-1}\brk*{\nrm*{X_t}^2} \leq \sigma^2$. Then, with probability at least $1-\delta$, we have:
    \[
      \nrm*{\sum_{t=1}^{T}X_t}_2 \leq{} \sqrt{2\sigma^2 \log(2/\delta)} + \frac{2}{3}B\log(2/\delta). 
    \]
  \end{lemma}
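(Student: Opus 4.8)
The plan is to prove the sharp sub-Bernstein tail bound
\[
\bbP\prn*{\nrm*{\textstyle\sum_{t=1}^T X_t}_2 \ge r} \le 2\exp\prn*{-\frac{r^2}{2(\sigma^2 + Br/3)}}
\]
and then invert it: setting the right-hand side equal to $\delta$ yields the quadratic $r^2 - \tfrac{2B}{3}\log(2/\delta)\,r - 2\sigma^2\log(2/\delta) = 0$, whose positive root is at most $\tfrac{B}{3}\log(2/\delta) + \sqrt{\tfrac{B^2}{9}\log^2(2/\delta) + 2\sigma^2\log(2/\delta)}$, and then $\sqrt{a+b}\le\sqrt a+\sqrt b$ bounds this by $\sqrt{2\sigma^2\log(2/\delta)} + \tfrac23 B\log(2/\delta)$, exactly the claimed estimate. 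To establish the tail bound I would follow the classical Pinelis route. Write $S_t = \sum_{s\le t}X_s$ and $V_t = \sum_{s\le t}\En_{s-1}\brk*{\nrm*{X_s}_2^2}$, let $\psi(\lambda) = \frac{\lambda^2/2}{1-\lambda B/3}$ be the Bernstein cumulant (for $0<\lambda<3/B$), and show that
\[
M_t \coloneqq \cosh\prn*{\lambda\nrm*{S_t}_2}\,\exp\prn*{-\psi(\lambda)\,V_t}
\]
is a supermartingale. Since $V_t$ is $\cF_{t-1}$-measurable, the supermartingale property reduces to the one-step inequality $\En_{t-1}\brk*{\cosh(\lambda\nrm*{S_t}_2)} \le \cosh(\lambda\nrm*{S_{t-1}}_2)\exp\prn*{\psi(\lambda)\En_{t-1}\nrm*{X_t}_2^2}$.

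The workhorse is a Hilbert-space geometric inequality (the heart of the argument, and the main obstacle): for any $u,v\in\bbR^d$,
\[
\cosh\prn*{\nrm*{u+v}_2} \le \cosh\prn*{\nrm*{u}_2}\cosh\prn*{\nrm*{v}_2} + \frac{\sinh(\nrm*{u}_2)}{\nrm*{u}_2}\,\frac{\sinh(\nrm*{v}_2)}{\nrm*{v}_2}\,\langle u,v\rangle.
\]
I would prove this by fixing $a=\nrm*{u}_2$, $b=\nrm*{v}_2$, writing $\langle u,v\rangle = ab\,t$ with $t\in[-1,1]$, so that $\nrm*{u+v}_2 = \sqrt{a^2+b^2+2abt}$, and observing that $F(t) \coloneqq \cosh(\sqrt{a^2+b^2+2abt})$ is convex in $t$ (a short computation reduces this to $z\cosh z \ge \sinh z$ for $z\ge0$, which holds since the difference vanishes at $0$ and has derivative $z\sinh z\ge0$). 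Since the claimed affine upper bound matches $F$ exactly at $t=\pm1$, convexity gives the inequality for all $t\in[-1,1]$, and rescaling $u\mapsto\lambda u$, $v\mapsto\lambda v$ introduces the $\lambda$'s.

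Applying this with $u = S_{t-1}$ and $v = X_t$ (the degenerate case $S_{t-1}=0$ is handled directly by continuity), I take $\En_{t-1}$. For the cross term I use $\En_{t-1}[X_t]=0$ to write $\En_{t-1}\brk[\big]{\langle S_{t-1},X_t\rangle\tfrac{\sinh(\lambda\nrm*{X_t}_2)}{\nrm*{X_t}_2}} = \En_{t-1}\brk[\big]{\langle S_{t-1},X_t\rangle\prn[\big]{\tfrac{\sinh(\lambda\nrm*{X_t}_2)}{\nrm*{X_t}_2}-\lambda}}$, and Cauchy–Schwarz bounds this by $\nrm*{S_{t-1}}_2\,\En_{t-1}\brk*{\sinh(\lambda\nrm*{X_t}_2)-\lambda\nrm*{X_t}_2}$. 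Combining with $\sinh\le\cosh$ and the identity $\cosh+\sinh = \exp$, the right-hand side collapses into $\cosh(\lambda\nrm*{S_{t-1}}_2)\,\En_{t-1}\brk*{e^{\lambda\nrm*{X_t}_2}-\lambda\nrm*{X_t}_2}$. Finally the scalar Bernstein estimate $e^{\lambda s}-\lambda s - 1 \le \psi(\lambda)s^2$ for $0\le s\le B$ (valid when $\lambda B<3$), together with $1+w\le e^w$, yields the one-step inequality. Telescoping gives $\En[M_T]\le M_0 = 1$; using $V_T\le\sigma^2$ almost surely and then $\cosh(z)\ge\tfrac12 e^{z}$ produces $\En\brk*{e^{\lambda\nrm*{S_T}_2}} \le 2\exp(\psi(\lambda)\sigma^2)$. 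A Chernoff bound $\bbP(\nrm*{S_T}_2\ge r)\le 2\exp(\psi(\lambda)\sigma^2 - \lambda r)$ followed by the standard optimization $\inf_{0<\lambda<3/B}\prn*{\psi(\lambda)\sigma^2-\lambda r} \le -\frac{r^2}{2(\sigma^2+Br/3)}$ gives the target tail, and the inversion above completes the proof.
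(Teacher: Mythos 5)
Your proposal is correct in all its steps, but note that the paper does not prove this lemma at all: it appears in the ``Technical Tools'' appendix as an imported result, cited to \citet{pinelis1994optimum} and \citet{martinez2024empirical} (and, as far as the body of the paper goes, the vector concentration actually invoked is Minsker's i.i.d.\ version, \cref{lem:vec-bern}). What you have written is a faithful, self-contained reconstruction of Pinelis's classical Hilbert-space argument, and every ingredient checks out: the two-point geometric inequality is exactly the chord bound for the convex function $t \mapsto \cosh\prn[\big]{\sqrt{a^2+b^2+2abt}}$, whose convexity indeed reduces to the monotonicity of $z \mapsto \sinh(z)/z$, i.e.\ $z\cosh z \ge \sinh z$, and the affine right-hand side matches $\cosh(a+b)$ and $\cosh(a-b)$ at $t = \pm 1$; the centering trick $\En_{t-1}\brk[\big]{\tri{S_{t-1},X_t}\lambda} = 0$ legitimately converts the cross term into $\sinh(\lambda\nrm{S_{t-1}}_2)\,\En_{t-1}\brk{\sinh(\lambda\nrm{X_t}_2) - \lambda\nrm{X_t}_2}$ via Cauchy--Schwarz (the factor $\sinh(\lambda s)/s - \lambda$ is nonnegative, so the absolute-value bound is valid); $\sinh \le \cosh$ and $\cosh + \sinh = \exp$ collapse the one-step bound as claimed; the scalar estimate $e^{x} - x - 1 \le \frac{x^2/2}{1 - x/3}$ for $0 \le x < 3$ (from $k! \ge 2\cdot 3^{k-2}$) gives the factor $\psi(\lambda)$ uniformly over $s \le B$ when $\lambda B < 3$; and since $V_t$ is $\cF_{t-1}$-measurable the supermartingale telescoping, $\cosh(z) \ge \tfrac12 e^z$, and the choice $\lambda = r/(\sigma^2 + Br/3) < 3/B$ (which makes $\psi(\lambda)\sigma^2 - \lambda r = -\tfrac{r^2}{2(\sigma^2 + Br/3)}$ exactly) yield the stated two-sided-prefactor tail, whose inversion via the quadratic formula and $\sqrt{a+b} \le \sqrt{a} + \sqrt{b}$ recovers precisely $\sqrt{2\sigma^2\log(2/\delta)} + \tfrac{2}{3}B\log(2/\delta)$. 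In short: where the paper supplies only a citation, you have supplied the proof behind the citation, by essentially the same route as the cited source.
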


\end{document}